\newcommand{\arXiv}{arxiv:}
\newcommand{\bb}{\mathbb}
\newcommand{\Rbb}{\mathbb{R}}
\newcommand{\Cbb}{\mathbb{C}}
\newcommand{\Nbb}{\mathbb{N}}
\newcommand{\tinv}[1]{{\textstyle\frac{1}{#1}}}
\newcommand{\ud}{\mathrm{d}} 
\renewcommand{\leq}{\leqslant}
\renewcommand{\geq}{\geqslant}
\renewcommand{\tilde}{\widetilde}
\DeclareMathOperator{\st}{{s.\!t.}\xspace}
\DeclareMathOperator{\Ker}{Ker}
\DeclareMathOperator{\supp}{supp}
\DeclareMathOperator{\diag}{diag}
\DeclareMathOperator{\prox}{prox}
\DeclareMathOperator{\Id}{\mathbb{I}\hspace{-1.24mm}\mathbb{I}}
\DeclareMathOperator{\intset}{int}
\DeclareMathOperator*{\argmin}{\arg\min}
\newcommand{\bs}{\boldsymbol}
\newcommand{\cl}{\mathcal}
\newcommand{\ie}{\emph{i.e.}, }
\newcommand{\eg}{\emph{e.g.}, }
\newcommand{\imap}{{\mathsf n}}
\newcommand{\imapv}{\boldsymbol{\mathsf n}}
\newcommand{\cmapv}{\boldsymbol{u}}
\renewcommand{\vec}{\boldsymbol}
\newcommand{\bbb}{|\!|\!|}
\newtheorem{theorem}{Theorem}[section]
\newtheorem{lemma}[theorem]{Lemma}
\theoremstyle{definition}
\newcommand{\scp}[2]{\langle #1, #2 \rangle}
\title{Compressive Optical Deflectometric Tomography:\\
A Constrained Total-Variation Minimization Approach.\\[-1cm]}
\date{}
\begin{document}

\maketitle

\centerline{\scshape Adriana Gonz\'alez\footnote{
$\{$adriana.gonzalez, laurent.jacques,
christophe.devleeschouwer$\}$@uclouvain.be, pantoine@lambda-x.com}}
\medskip
{\footnotesize
 \centerline{ICTEAM/IMCN, Universit\'e catholique de Louvain. Louvain-la-Neuve, Belgium.}
}
\medskip

\centerline{\scshape Laurent Jacques }
\medskip
{\footnotesize
 \centerline{ICTEAM, Universit\'e catholique de Louvain. Louvain-la-Neuve, Belgium.}
} 

\medskip

\centerline{\scshape Christophe De Vleeschouwer}
\medskip
{\footnotesize
 \centerline{ICTEAM, Universit\'e catholique de Louvain. Louvain-la-Neuve, Belgium.}
}

\medskip

\centerline{\scshape Philippe Antoine}
\medskip
{\footnotesize
 \centerline{IMCN, Universit\'e catholique de Louvain. Louvain-la-Neuve, Belgium.}
}
\medskip
{\footnotesize
 \centerline{Lambda-X SA, Rue de l'Industrie 37. Nivelles, Belgium.}
}

\bigskip
\centerline{\today}

\begin{abstract}
  Optical Deflectometric Tomography (ODT) provides an accurate
  characterization of transparent materials whose complex surfaces
  present a real challenge for manufacture and control. In ODT, the
  refractive index map (RIM) of a transparent object is reconstructed
  by measuring light deflection under multiple orientations. We show
  that this imaging modality can be made \emph{compressive}, \ie a
  correct RIM reconstruction is achievable with far less observations
  than required by traditional minimum energy (ME) or Filtered Back Projection (FBP)
  methods. Assuming a cartoon-shape RIM model, this reconstruction is
  driven by minimizing the map Total-Variation under a fidelity
  constraint with the available observations. Moreover, two other
  realistic assumptions are added to improve the stability of our
  approach: the map positivity and a frontier condition. Numerically,
  our method relies on an accurate ODT sensing model and on a
  primal-dual minimization scheme, including easily the sensing
  operator and the proposed RIM constraints. We conclude this paper by
  demonstrating the power of our method on synthetic and experimental
  data under various compressive scenarios. In particular, the
  potential compressiveness of the stabilized ODT problem is demonstrated by
  observing a typical gain of 24\,{\rm dB} compared to ME and of 30\,{\rm dB} compared to FBP at only 5\%
  of 360 incident light angles for moderately noisy sensing.   
\end{abstract}

\pagenumbering{arabic}

\section{INTRODUCTION}
\label{sec:intro}

Optical Deflectometric Tomography (ODT) is an imaging modality that
aims at reconstructing the spatial distribution of the refractive
index of a transparent object from the deviation of the light passing
through the object. By reconstructing the refractive index map (RIM)
we are able to optically characterize transparent materials like
optical fibers or multifocal intra-ocular lenses (IOL), which is of
great importance in manufacture and control processes.

ODT is attractive for its high sensitivity and effective detection of
local details and object contours. The technique is insensitive to
vibrations and it does not require coherent light. Compared
to interferometry, ODT can be applied to objects with higher
refractive index difference with respect to the surrounding solution.

The deflectometer used for the data acquisition is based on the
phase-shifting Schlieren method~\cite{Emmanuel_2010}. For each
orientation of the sample, two-dimensional~(2-D) mappings
of local light deviations are measured. As these light deviations are
proportional to the transverse gradient of the RIM integrated along
the light ray, ODT is able to reconstruct the RIM from the angle
measurements.

First works on deflectometric
tomography~\cite{beghuin2010optical,Faris1987} have focused on the use
of common reconstruction techniques like the Filtered Back Projection
(FBP). They have proved that FBP provides an accurate estimation of
the RIM when sufficient amount of object orientations is
available. However, when we consider a scenario with a limited number
of light incident angles, and in the presence of noise in ODT
observations, FBP induces the apparition of spurious artifacts in the
estimated RIM, lowering the reconstruction quality.

Inspired by the Compressed Sensing (CS)
paradigm~\cite{Candes2004,donoho2006cs}, which demonstrates that few
measurements are enough for an accurate reconstruction of low
complexity (\eg sparse)
signals, recent works in ODT have started to exploit sparsity to
reconstruct the RIM from few acquisitions, \ie in the presence of an
ill-posed inverse problem. Foumouo et al.~\cite{Emmanuel_2010} and
Antoine et al.~\cite{optimess2012} have used a sparse representation
in a B-splines basis and regularized the problem using the
$\ell_1$-norm. The reconstruction was performed using iterative
schemes and the results show that, although the method is capable of
reproducing the shape of spatially localized objets, the image
dynamics is not well preserved and the borders are smoothened.

Although sparsity based methods are new in ODT, they have been used in
other applications, such as Magnetic Resonance Imaging
(MRI)~\cite{lustig2007sparse}, Absorption Tomography
(AT)~\cite{Sidky2011,Ritschl2011}, Radio
Interferometry~\cite{wiaux2009compressed} and Phase-Contrast
Tomography~\cite{cong2011,cong2012}, for image reconstruction under
partial observation models. More details are given in
Sec.~\ref{sec:state-of-the-art}.

An additional problem that rises in all physical applications is the
estimation of the sensing operator that fits better the physical
acquisition. Most operators present a non ideal behavior, which
conditions the numerical methods to solve the inverse problem. In
tomographic problems, this operator requires to map spatial data in a
Cartesian grid to Fourier data in a Polar grid. Previous works have
used \emph{gridding} techniques to interpolate data from a polar to a
cartesian or pseudo polar grid before applying the Fourier
Transform~\cite{SPIE_2011,zhang2007}. However, the error introduced
when using these techniques is not bounded and introduces an
uncontrolled distortion.

\subsection{Contribution} 

In this work, we show that the ODT can be made both compressive and robust to
Gaussian noise. In the context of a simplified 2-D
  sensing model, we propose a constrained method based on the
minimization of the Total-Variation (TV) norm that provides high
quality reconstruction results even when few acquisitions are
available and in the presence of high level of Gaussian noise.

This is motivated by assuming the RIM composed of slowly varying areas separated by
sharp transitions corresponding to material interfaces. Such a
behavior is known to be represented by spatial functions having
\emph{bounded variations} (small TV norm), such as those following a cartoon-shape
model. This also distinguishes our work from two previous studies 
focused on continuous RIM decomposition with a B-splines
basis. Deflection integrals were there estimated in the spatial domain using complex numerical
methods leading to a smoother RIM
estimation \cite{optimess2012,Emmanuel_2010}.

To account for the noise and the raw data consistency, we
add an $\ell_2$ data fidelity term adapted to Gaussian and uniformly
distributed noise. Moreover, the proposed method offers the flexibility to work with more than one
constraint. In contrast with~\cite{iTWIST12}, we add here two more constraints based on
general prior information on the RIM in order to converge to an
optimal solution: \emph{(i)} the RIM is positive everywhere and \emph{(ii)} the
object is completely contained in the experimental field of
view. These extra constraints provably guarantee the unicity of the ODT solution.

As for the operator, we use the NFFT algorithm\footnote{This toolbox
  is freely available here
  \url{http://www-user.tu-chemnitz.de/~potts/nfft/}}, a fast
computation of the non-equispaced DFT. This algorithm provides an
efficient estimation of the polar Fourier transform with a controlled
distortion regarding the true polar transform.

The compressive ODT problem is solved by means of the primal-dual
algorithm proposed by Chambolle et al.~\cite{ChambollePock_2011},
complemented by an adaptive choice of its parameters improving the global
convergence speed, as proposed in a
recent work of Goldstein et al. \cite{goldstein2013}. The
results are compared with a minimum energy (ME) method and a
  common FBP approach, showing clear gain compared to these two approaches in terms of compressiveness, noise robustness
and reconstruction quality.

\subsection{Outline} 

In Sec.~\ref{sec:problem} we provide a brief background on optical
deflectometric tomography, describing also the experimental setup used
for the data acquisition. Then, the ODT discrete model is presented in
Sec.~\ref{sec:processing}. In Sec.~\ref{sec:state-of-the-art} we depict
related works on tomographic reconstruction, which provide a basis on
the methods adopted to recover the RIM: the commonly used FBP method,
a standard minimum energy (ME) approach (or penalized least square)
and the proposed regularized method coined TV-$\ell_2$. These
methods are described in Sec.~\ref{sec:reconstruction}. In
Sec.~\ref{sec:noise_estimation} we present the identification and
estimation of the noise in both synthetic and experimental data, and
the analysis of the noise impact when comparing common absorption
tomography and deflection tomography. Sec.~\ref{sec:gcp} presents the
numerical methods used to recover the RIM from the noisy
measurements by means of the proposed regularized
formulation. Finally, in Sec.~\ref{sec:experiments} some reconstruction
results are shown, focusing first on the comparison between common
tomographic and ODT reconstructions, and then on the comparison of the
reconstruction methods on the basis of reconstruction quality and
convergence for both synthetic and experimental data.

\subsection{Conventions} 

Most of domain dimensions are denoted
by capital roman letters, \eg $M, N, \ldots$ Vectors and matrices are associated to bold
symbols while lowercase light letters are associated to scalar
values, \eg $\bs \Phi \in \Rbb^{M\times N}$ or $\bs u \in \Rbb^M$. The $i^{\rm th}$
component of a vector $\bs u$ reads either $u_i$ or $(\bs u)_i$,
while the vector $\bs u_i$ may refer to the $i^{\rm
  th}$ element of a vector set. The
identity matrix in $\Rbb^D$ reads $\bs\Id^D$, or simply $\Id$
when its dimension is clear from the context. The vectors of
  ones and zeros in $\Rbb^D$ are denoted $\bs 1$ and $\bs 0$, respectively. The set of indices in
$\Rbb^D$ is $[D]=\{1,\,\cdots,D\}$. Scalar product between two vectors
$\bs u,\bs v \in \Rbb^{D}$ for some dimension $D\in\Nbb$ is denoted
equivalently by $\bs u^T \bs v = {\bs u \cdot \bs v}=\scp{\bs u}{\bs
  v}$. For any $p\geq 1$, the $\ell_p$-norm of $\bs u$ is $\|\bs u\|_p^p = \sum_i |u_i|^p$ with
$\|\!\cdot\!\|=\|\!\cdot\!\|_2$. The notation $\|\bs u\|_0 = \# \supp
\bs u$ represents the cardinality of the support $\supp \bs u = \{i:
u_i \neq 0\} \subset [D]$.  For a subset $\cl S \subset [D]$, given
$\bs u \in\Rbb^D$ and $\bs \Phi\in \Rbb^{D'\times D}$, $\bs u_{\cl
  S}\in \Rbb^{\#\cl S}$ (or $\bs \Phi_{\cl S}$) denotes the vector
(resp. the matrix) obtained by retaining the components
(resp. columns) of $\bs u$ (resp. $\bs \Phi$) belonging to $\cl
S$. Alternatively, we have $\bs u_{\cl S} = \bs R_{\cl S} \bs u$ or
$\bs \Phi_{\cl S} = \bs \Phi \bs R^T_{\cl S} $ where $\bs R_{\cl S} :=
(\Id_{\cl S})^T \in \{0,1\}^{\#\cl S \times D}$ is the
\emph{restriction} operator. The kernel (or \emph{null space}) of a matrix $\bs \Phi$ is
$\rm ker (\bs \Phi) := \{ \bs x \in \Rbb^D : \bs \Phi \bs x
= 0\}$. The norm of an operator $\bs K$ is defined as
$\bbb \bs K \bbb = \max \{\| \bs K \bs x \| : \bs x \in \Rbb^N \
\textrm{with} \ \| \bs x \| = 1 \}$.  We denote $\Gamma^0(\cl V)$ the
class of proper, convex and lower-semicontinuous functions from a
finite dimensional vector space $\cl V$ (\eg $\Rbb^D$) to
$(-\infty,+\infty]$~\cite{combettes2011proximal,dupeICIP2011}. The non-negativity thresholding function is $(\bs x)_+$, which
is defined componentwise as $(x_i)_+ = (x_i + |x_i|)/2$. The (convex) indicator
function $\imath_{\cl C}(\bs x)$ of the set $\cl C$
is equal to $0$ if $\bs x\in \cl C$ and $+\infty$ otherwise. The interior of a set $\Omega$ is $\intset \Omega$, which
consists of all points of $\Omega$ that do not belong to its boundary
$\partial \Omega$. 
The superscript $^*$ denotes the conjugate transpose (or adjoint) of a
matrix (or the complex conjugate for
scalars), while $^\star$ denotes the convex conjugation of a convex function.

\section{OPTICAL DEFLECTOMETRIC TOMOGRAPHY} 
\label{sec:problem}

In this section, the principles of optical deflectometric tomography
are explained and completed with a brief description of the
experimental setup used for actual deflectometric acquisition.

\subsection{Principles} 
\label{sec:principles}

Optical Deflectometric Tomography aims at infering the refractive index spatial
distribution (or refractive index map -- RIM) of a transparent
object. This is achieved by measuring, under various incident angles,
the deflection angles of multiple parallel light rays when passing
through this transparent object (see Fig.~\ref{fig:deflectometry}-(top)).
The (indirect) observation of the RIM, allowing its further
reconstruction, is guaranteed by the relation between the total
light ray deflection and the integration of the RIM transverse gradient
along the light ray path~\cite{beghuin2010optical}.

In this work, we restrict ourselves to two-dimensional (2-D) ODT by assuming
that the refractive index $\imap$ of the observed object is constant
along the $\bs e_3$-axis for a convenient coordinate system $\{\bs
e_1,\bs e_2,\bs e_3\}$, \ie $\partial\imap(\bs r)/\partial r_3 = 0$ (with $\bs
r=(r_1,r_2,r_3)^T\in\Rbb^3$) and deflections occur in the $\bs e_1$-$\bs
e_2$ plane. This assumption is validated in the experimental
  setup described later in this section, where the light probes a very thin 2-D slice
  of the 3-D sample and where the particular geometry of the test objects
  makes the refractive index variation along the $\mathbf{e}_3$ direction negligible, \ie $|\tfrac{\partial}{\partial r_3} \imap| \ll
\max(|\tfrac{\partial}{\partial r_1}\imap|, |\tfrac{\partial}{\partial
  r_2} \imap|)$.

Given the refractive index $\imap:\vec{r}=(r_1,r_2)^T\in\Rbb^2\to \imap(\vec r)$, for a particular light ray trajectory $\cl R = \{\vec{r}(s):s\in \Rbb\} \subset \Rbb^2$ parametrized by $\vec r(s)=(r_1(s),r_2(s))^T\in\Rbb^2$ with $s$ describing its \emph{curvilinear} parameter\footnote{By a slight abuse of notation, $\bs r$ denotes any points in $\Rbb^2$ while $\bs r(s)$ represents a particular curve in $\Rbb^2$ parametrized by $s\in\Rbb$.}, the relation between light deflections and the refractive index $\imap$ is provided by the \emph{light ray equation}
\begin{equation}
	\label{eq:eikonal}
	\tfrac{\ud}{\ud s}\big( \imap\tfrac{\ud}{\ud s} \bs r(s)\big) = \bs \nabla \imap,
\end{equation}
established from the eikonal equation~\cite[Section~3.2, p.~129]{born1999principles}.

For small deflection angles, we can adopt a (first
  order) approximation and assume the trajectory $\cl R$ is a straight
  line. The error committed by removing the trajectory
dependence in the deflection angle can be estimated by a \emph{ray tracing} method (not
developed here) relying on the Snell law. In our tests, for simple continuous object models, the absolute error between the two
deflection models is lower than $0.7^\circ$ for deflection angles
smaller than $7^\circ$ (the angular acceptance of the optical deflectometer). Moreover, in \cite{optimess2012}, the authors have qualitatively shown
that, for the same range of deflection angles and for limited
refracted index variations, the model mismatch stays limited. 

\begin{figure}
  \centering
  \includegraphics[height=5cm]{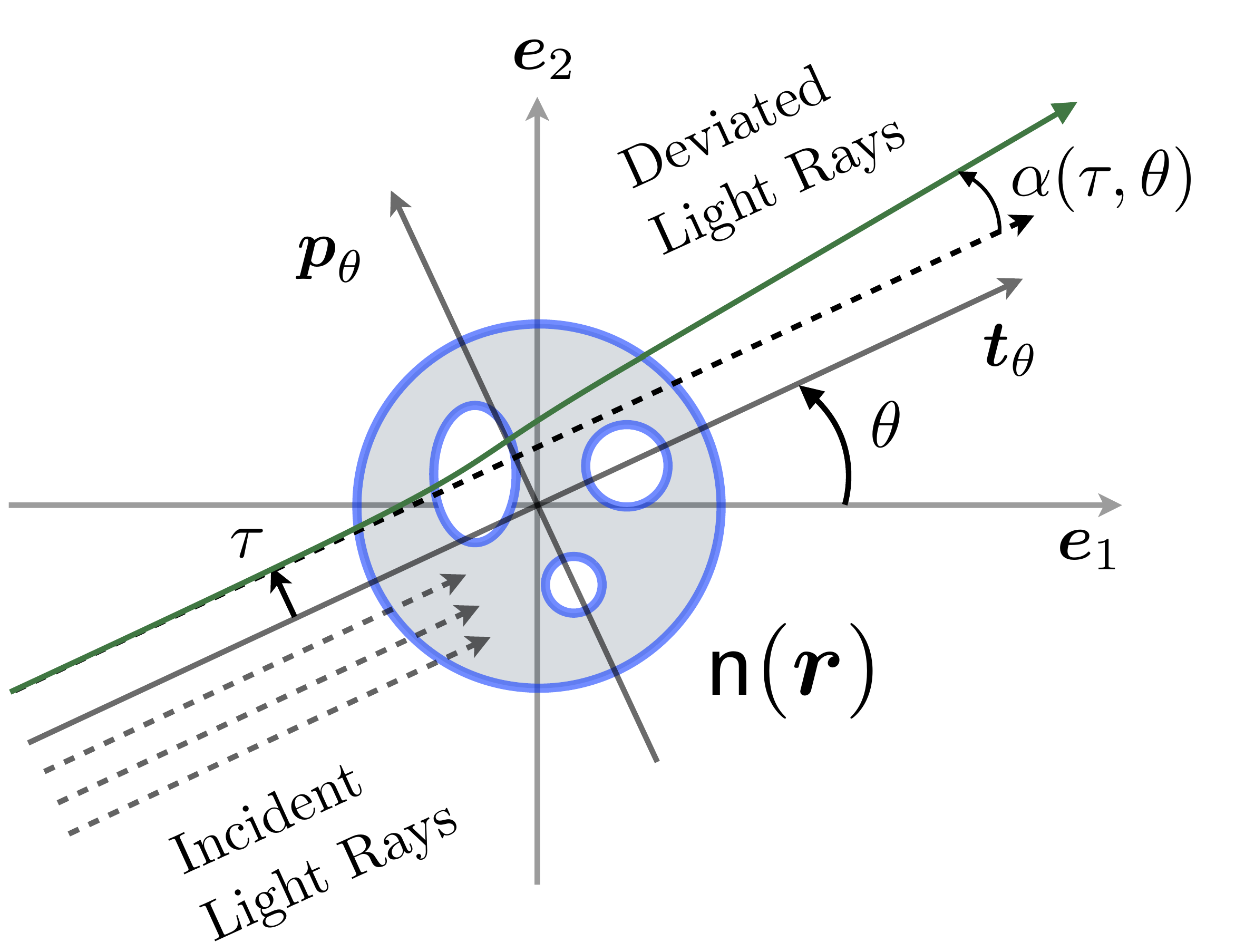} \\[2mm]
  \includegraphics[height=3.8cm]{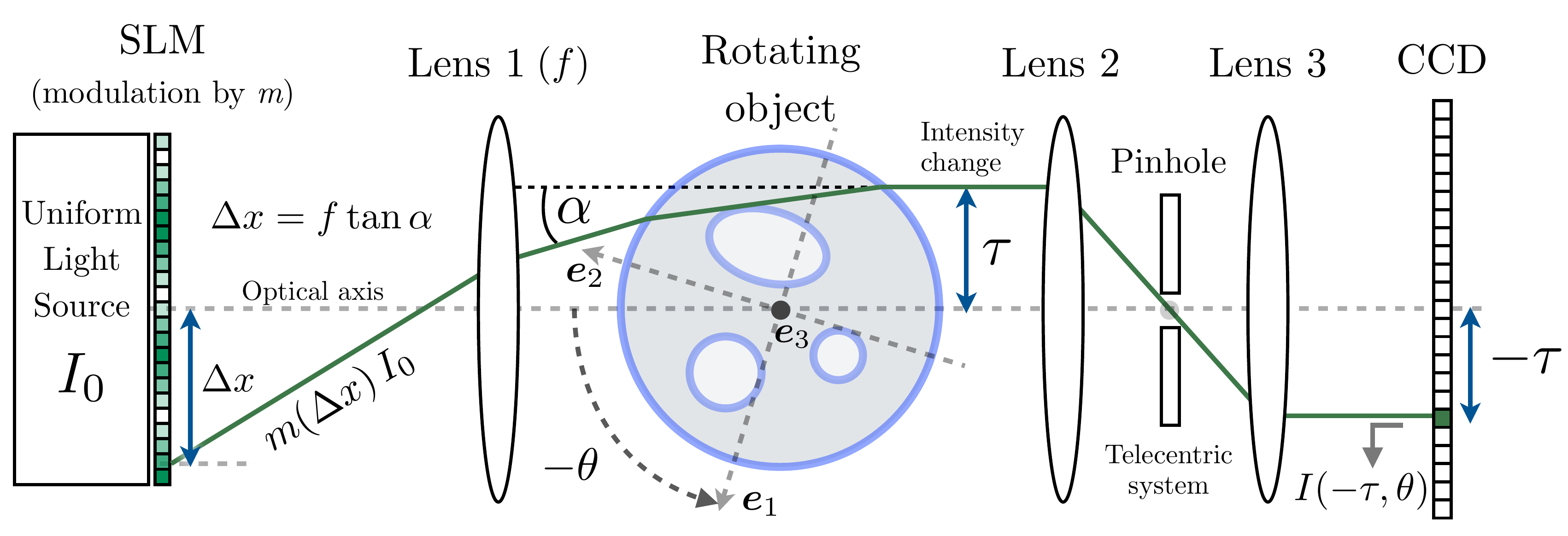}
  \caption{(top) The Deflectometric model. (bottom) A scheme of a
    Phase-Shifting Schlieren Deflectometer (in refraction
    mode) measuring light ray deflection angles by encoding light
    deviation into light intensity variations. }
  \label{fig:deflectometry}
\end{figure}

In this simplified model, the 2-D RIM is measured by $\Delta(\theta,\tau;\imap)$,
the sinus of the deflection angle $\alpha(\theta,\tau;\imap)$ of a light ray $\cl R(\theta,\tau) = \{\vec{r} \in \Rbb^2: \vec{r}\cdot\vec p_{\theta} = \tau\}$, where $\tau\in\bb R$ is the \emph{affine} parameter determining the distance between $\cl R$ and the origin, $\theta\in [0, 2\pi)$ is the incident angle of $\cl R$ with the $\bs e_1$ axis, and $\vec p_\theta = (-\sin \theta, \cos\theta)^T$ is the (constant) perpendicular vector to the light ray direction $\vec t_\theta =(\cos \theta, \sin\theta)^T$.

The simplified ODT model depicted in
Fig.~\ref{fig:deflectometry}-(top) is then obtained from the light ray equation \eqref{eq:eikonal} as
\begin{equation}
\label{eq:DeflectionAngle}
\Delta(\tau,\theta) := \tinv{\imap_{\rm r}}\,\int_{\Rbb} \vec\nabla \imap(\bs r_{\tau,\theta}(s)) 
\cdot \vec p_{\theta}\, \ud s =\ \tinv{\imap_{\rm r}}\int_{\Rbb^2} \big(\vec\nabla \imap(\vec{r})
\cdot \vec p_{\theta}\big)
\ \delta(\tau - \vec{r}\cdot\vec p_{\theta})\,\ud^2\vec r,
\end{equation}
where $\imap_{\rm r}$ is the (constant) reference refractive index of the
surrounding medium, $\bs r_{\tau,\theta}(s) = s \bs t_{\theta} + \tau \bs p_\theta
\in \cl R$ and the Dirac distribution turns the line integral into an
integration over $\Rbb^2$. In short, the above equation relates the
deflection angle $\Delta(\tau,\theta)$ to the Radon transform
of the transverse gradient of $\imap$ within the straight line
trajectory approximation. Interestingly, this first order
deflectometric model is also used in computer graphics for rendering of refractive gas
flows \cite{ihrke_deflecto_09}.

As for traditional parallel absorption tomography~\cite[Section~3.2, p.~56]{Kak1988_Imaging}, there
exists a Deflectometric Fourier Slice Theorem (DFST) that relates the
1-D (radial) Fourier transform of the deflection angle along the
\emph{affine parameter} $\tau$, \ie
\begin{equation}
\label{eq:radial-fourier-deflecto}
	y(\omega,\theta)\ :=\ \int_{\Rbb} \Delta(\tau,\theta)\, e^{-2\pi i\, \tau\omega} \ud \tau,
\end{equation}
to the 2-D Fourier transform of the RIM. Mathematically, the DFST
establishes the following equivalence \cite{SPIE_2011} (proved in Appendix~\ref{app:DFST-proof}):
\begin{equation}
\label{eq:measures}
y(\omega,\theta)\ =\ \tfrac{2\pi i \omega}{\imap_{\rm r}}\,\widehat{\imap}\big(\omega\,\vec p_{\theta}\big),
\end{equation}
where $\widehat \imap(\vec k) = \int_{\Rbb^2} \imap(\vec r)\,e^{-2\pi i
  \vec k \cdot \vec r}\,\ud^2\vec r$ stands for the 2-D Fourier
transform of $\imap$. Hereafter,
the value $\omega$ is called \emph{affine frequency}.

\medskip

We may remark from (\ref{eq:measures}) that there are different ways
to cover the 2-D frequency plane with deflectometric
measurements. This can be observed from the following symmetry
relations for $\imap \in \Rbb$, $\tau,\omega\in\Rbb$, $\theta\in
[0,2\pi)$:
\begin{subequations}
\begin{align}
\label{eq:OD-eqsym-1}
\Delta(\tau,(\theta+\pi)\!\!\mod 2\pi)&=-\Delta(-\tau,\theta),\\
\label{eq:OD-eqsym-2}
y(\omega,(\theta+\pi)\!\!\mod 2\pi)&=-y(-\omega,\theta),\\
\label{eq:OD-eqsym-3}
y(\omega,\theta)&=y^*(-\omega,\theta),
\end{align}
\end{subequations}
where the two first relations come from the change $\bs p_{\theta +
  \pi} = -\bs p_\theta$ in (\ref{eq:DeflectionAngle}) and
(\ref{eq:measures}), and the last equation is due to the Fourier
conjugate symmetry of real spatial functions.

In particular, from the symmetry (\ref{eq:OD-eqsym-3}), we can
restrict $\omega$ to positive values by taking $\theta$ in the whole
circle $[0, 2\pi)$. Or alternatively, from (\ref{eq:OD-eqsym-2}) and
(\ref{eq:OD-eqsym-3}), we can deduce
$y(\omega,\theta)=-y^*(\omega,(\theta+\pi)\!\!\mod 2\pi)$ and restrict
$\theta$ to the half circle $[0,\pi)$ with $\omega\in\Rbb$.  We insist 
on the fact that this symmetry is only preserved when the first order 
approximation is validated. \medskip

When comparing the relation (\ref{eq:measures}) with the standard
tomographic Fourier Slice theorem (FST)~\cite[Section~3.2, p.~56]{Kak1988_Imaging}, the main difference is provided by
the transverse gradient in the deflectometric relation
\eqref{eq:DeflectionAngle}, which results in multiplying by $2\pi i \omega/\imap_{\rm r}$ the RIM Fourier transform.  In
particular, from (\ref{eq:DeflectionAngle}) or from
(\ref{eq:measures}) (since $\omega$ vanishes on the frequency origin)
we see that the ODT sensing is blind to constant RIM.  As
we will see in Sec.~\ref{sec:reconstruction}, this implies that the RIM can only be
estimated relatively to the known reference RIM~$\imap_{\rm r}$.

Let us conclude by insisting on the impact of the
equivalence \eqref{eq:measures}.  Similarly to the use of the Fourier
Slice theorem in common (absorption) tomography, \eqref{eq:measures}
is of great importance for defining a discrete ODT sensing model which
can be computed efficiently in the Fourier domain given a discretized
refractive index map $\imap$.

\subsection{Deflection Measurements} 
\label{sec:defl-meas}

Experimentally, the deflection angles can be measured by
phase-shifting Schlieren deflection tomography (schematically
represented in Fig.~\ref{fig:deflectometry}-(bottom)). We briefly
explain this system for the sake of completeness in order to set the
experimental background surrounding the actual deflection measurement
process. More details can be found in~\cite{joannes2003phase,
  beghuin2010optical, Emmanuel_2010, SPIE_2011}.

This system proceeds by encoding light ray deflection $\alpha$ in
intensity variations. A transparent object is illuminated with an
incoherent uniform light source $I_0$ modulated by a sinusoidal
pattern $m$ using a Spatial Light Modulator (SLM). From classical
optics, the light deviation angle $\alpha$ is related to a phase shift
$\Delta x = f \tan \alpha$, where $f$ is the focal length of Lens
1. This phase shift is associated with the intensity variation thanks
to the modulation $m$ as $I(-\tau,\theta) = m(\Delta x) I_0$. These
intensity variations are processed by phase shifting methods for
recovering the deflection measurements $\alpha(\tau,\theta)$
for each couple of parameters $(\tau,\theta)\in \Rbb\times
[0,2\pi]$. Up to some linear coordinate rescaling, the \emph{affine}
parameter $\tau$ corresponds to the horizontal pixel coordinate in the
2-D CCD detector collecting light (assuming the object refractive
index constant along the CCD vertical direction). This correspondence
is implicitly allowed by the telecentric system formed by the
combination of Lens 2 and 3. The pinhole guarantees that only parallel
light rays outgoing from the object are collected. Rather than
rotating the whole incident light beam around the object, it is this
one which is rotated by an angle $-\theta$ along an axis parallel to
the CCD pixel vertical direction \cite{beghuin2010optical}. Finally,
since the system is invariant under time inversion, \ie under light
progression inversion, measuring the deflection angle $\alpha$ in
Fig.~\ref{fig:deflectometry}-(bottom) is equivalent to measuring the
same angle in Fig.~\ref{fig:deflectometry}-(top).

\section{DISCRETE FORWARD MODEL} 
\label{sec:processing}

In order to reconstruct efficiently the RIM from ODT measurements,
recorded data must be treated appropriately considering, jointly, the
data discretization, the polar geometry of the ODT sensing and
unavoidable measurement noises. We present here the discrete
formulation of the ODT sensing and the construction of the forward
model from the recorded data.

\subsection{Discrete domains}
\label{sec:discrete-domains}

Let us first assume that the object of interest is fully contained in
a square field-of-view (or FoV) $\Omega\subset \Rbb^2$ centered on the
spatial origin. The physical dimensions of this FoV can be provided by
the Deflectometric device itself. In other words, the RIM is constant
and equal to the reference index $\imap_{\rm r}$ outside of
$\Omega$. This involves also that the deflection measurement vanishes,
\ie $\Delta(\tau,\omega)=0$, if $|\tau|$ is bigger than the typical
width of $\Omega$ in a section of direction~$\theta+\pi/2$.

We can consider a spatial sampling of $\Omega$ as follows. We define a
$N_0\times N_0$ 2-D Cartesian grid of $N:=N_0^2$ pixels as
$$ 
\cl C_N = \{ \bs r_{m,n} := (m\,\delta r, n\,\delta r): -N_0/2 \leq m,n <
N_0/2\}, 
$$
where the spatial spacing $\delta r$ is adjusted to $\Omega$ and to
the resolution by imposing $\Omega = [-\tinv{2}N_0\,\delta r,
\tinv{2}N_0\,\delta r]\times [-\tinv{2}N_0\,\delta r,
\tinv{2}N_0\,\delta r]$. 

\medskip

Second, as the deflectometric experiments provide evenly sampled
variables $\tau$ and $\theta$, $\Delta$ is measured on a (signed)
regular polar coordinate grid
$$ 
\cl P_M := \{(\tau_s,\theta_t): - (N_\tau/2) \leq s < (N_\tau/2),\, 0
\leq t < N_{\theta}\},\ \tau_s := s \,\delta\tau,\ \theta_t := t
\,\delta\theta, 
$$
of size $M := N_\tau N_\theta$, with $N_\tau$ the number of parallel
light rays passing through the object (assumed even), $N_{\theta}$ the
number of incident angles in ODT sensing, $\delta\tau$ and $\delta
\theta = \pi/N_\theta$ the distance between two consecutive affine
parameters and angles respectively\footnote{Notice that $\delta\theta$
  is not set to $2\pi/N_\theta$ since $\tau$ is allowed to be
  negative.}.

The value $\delta \tau$ can be known experimentally from the pixel
size of the CCD detector in a Schlieren Deflectometer
(Sec.~\ref{sec:defl-meas}). Moreover, the value $\delta\tau$ and the
resolution $N_\tau$ are also related to the FoV $\Omega$ so that
$\delta \tau N_\tau \approx \delta r N_0$. Since there is no reason to
ask more resolution to the sampling $\cl C_N$ than the available in
the affine variations of the ODT measurements, we will work with
$\delta \tau \approx \delta r$ and $N_\tau \approx N_0$.

\medskip

Third, in this discretized setting, the affine frequency $\omega$ in
(\ref{eq:measures}) must also be sampled with $N_{\tau}$ values. As
described in the next section, this comes from the replacement of the
(radial) Fourier transform in (\ref{eq:radial-fourier-deflecto}) by
its discrete counterpart. This leads to a (signed) frequency polar grid of same
size
$$ 
\widehat{\cl P}_M := \{(\omega_{s'},\theta_t): -(N_\tau/2) \leq s' < (N_\tau/2),\, 0 \leq t
< N_{\theta}\},\ \omega_{s'} := s'\,\delta \omega,\ \theta_t := t
\,\delta\theta, 
$$
with $\delta \omega = 1/(N_\tau \delta\tau)$. As it will become
clearer in the following, only half of this polar grid will be
necessary to bring independent observations of the RIM, \ie we will
often work on
$$ 
\widehat{\cl P}^+_M := \{(\omega_{s'},\theta_t): 0 \leq s' < (N_\tau/2),\,
0 \leq t < N_{\theta}\},
$$
with $\#\widehat{\cl P}^+_M = M/2$.

\subsection{Discretized Functions}
\label{sec:discr-funct}

From the discrete domains defined above, the continuous RIM $\imap$
observed in the experimental FoV is discretized into a set of $N$
values $\imap(\bs r_{m,n})$ from the coordinates $\bs r_{m,n} \in \cl
C_N$. This description can always be arranged into a one dimensional
$N$-length vector $\imapv \in \Rbb^N$, given a convenient mapping
between the components indices of $\imapv=(\imap_1,\cdots,\imap_N)^T$
and the pixel coordinates in~$\cl C_N$. This ``vectorization''
provides us a simplified representation of any linear operator acting
on the sampled RIM $\imap$ as a matrix multiplying $\imapv$.

For the different functions discretized on $\cl P_M$ or on $\widehat{\cl
  P}_M$, we use the same vectorization trick, namely, a function $u$
defined on $\cl C_N$ and sampled on $\cl P_M$ is associated to a
vector $\bs u\in\Rbb^{M}$ with the right correspondence between the
components of $\bs u$ and the polar coordinates in $\cl P_M$ (and
similarly for function defined in the Fourier domain and sampled on
$\widehat{\cl P}_M$).

Therefore, the ODT observations $\{\Delta(\tau,\theta):
(\tau,\theta)\in \cl P_M\}$ are gathered in a vector $\bs z\in
\Rbb^{M}$ with $z_{j}=\Delta(\tau_s,\theta_t)$ for a certain index
mapping $j=j(s,t) \in [M]$. In this discrete representation, the
equivalent transformation of (\ref{eq:radial-fourier-deflecto}) reads
\begin{equation}
  \label{eq:F-rad-def}
\bs y_{\rm comp} = (\sqrt{N_\tau}\,\delta\tau)\, \bs F^{\rm rad} \bs z,
\end{equation}
where $\bs y_{\rm comp}\in \Cbb^{M}$ is associated to a (vectorized) sampling of
$y$ on $\widehat{\cl P}_M$, and $\bs F^{\rm rad}: \Rbb^{M} \to \Cbb^{M}$
performs a 1-D DFT on the radial $\tau$-variations of $\bs z$, \ie
$$
(\bs F^{\rm rad} \bs z)_{k(s',t)} = \tinv{\sqrt{N_\tau}}\,\sum_{s} z_{j(s,t)}\,e^{-2\pi i s s'/N_\tau},
$$
for a vectorized index $k=k(s',t)$. In other words, if $\delta\tau$ is
sufficiently small, \eg if $\Delta(\tau,\theta)$ is band-limited with
a cut-off frequency smaller than $1/\delta\tau=N_\tau \delta
\omega$, then
\begin{align*}
y_{k(s',t)}&= \tfrac{\sqrt{N_\tau}\,\delta\tau}{\sqrt{N_\tau}}\,\sum_{s} \Delta(\tau_s,\theta_t)\,e^{-2\pi i (s\delta\tau)
(s'/\delta\tau N_\tau)}\\ 
&= \sum_{s}
\Delta(\tau_s,\theta_t)\,e^{-2\pi i \tau_s
\omega_{s'}}\,\delta\tau\ \approx\ y(\omega_{s'},\theta_t),
\end{align*}
using a Riemann sum approximation of
(\ref{eq:radial-fourier-deflecto}) and knowing that $\Delta$ vanishes
on~$\Omega^c$.

Despite the fact that $\bs y_{\rm comp}$ belongs $\Cbb^M \simeq
\Rbb^{2M}$, this vector brings only $M$ independent real observations
of $\bs\imap\in\Rbb^N$. This is due to the central symmetry
(\ref{eq:OD-eqsym-3}) induced by the realness of $\bs z$ and which
allows us to consider $\bs y_{\rm comp}$ only on $\widehat{\cl P}^+_M$.

This is clarified by the definition of the useful operator $\bs
\Theta: \Cbb^M \to \Rbb^M$ which perform the two following linear
operations. First, it restricts any vector $\bs \xi\in\Cbb^M$ to the
indices associated to the half grid $\widehat{\cl P}^+_M$. Second, it
appends the $M/2$ imaginary values of the restricted vector to its
$M/2$ real values in order to form a $M$-length vector in
$\Rbb^M$. The adjoint operation $\bs \Theta^*$, which is also the
inverse of $\bs \Theta$ for vectors in $\Cbb^M$ respecting the
Hermitian symmetry, is obtained easily by first reforming a
$M/2$-length complex vector from the separated real and imaginary
parts, and by inserting the results in $\Cbb^M$ according to the
indices of $\widehat{\cl P}^+_M$ and by completing the information in
$\widehat{\cl P}_M\setminus \widehat{\cl P}^+_M$ with the central symmetry
(\ref{eq:OD-eqsym-3}).

Consequently, thanks to $\bs \Theta$, we can form the real vector
\begin{equation}
  \label{eq:F-rad-def-real}
\textstyle \bs y = \bs \Theta\,\bs y_{\rm comp} = (\sqrt{N_\tau}\,\delta\tau)\, (\bs \Theta\,\bs F^{\rm rad})\, \bs
z\quad \in \Rbb^M, 
\end{equation}
with $(\bs \Theta\,\bs F^{\rm rad}):\Rbb^M\to\Rbb^M$. We call $\bs y$
the Frequency Deflectometric Measurements (FDM) vector. This will
be our direct source of ODT observations instead of
$\bs z$.

\subsection{Forward Model}
\label{sec:forward-model}

We can now explain how we use the DFST relation (\ref{eq:measures})
for defining the forward model that links any discrete 2-D RIM
representation to its FDM vector.

In a previous work~\cite{SPIE_2011}, the data available in
the frequency polar grid $\widehat{\cl P}_M$ were first interpolated to a
Cartesian frequency grid in order to reconstruct the 2-D RIM using a
Fast Fourier Transform (FFT). However, the
polar to Cartesian frequency interpolation introduced a hardly
controlled colored distortion.

We use here another operator $\bs F: \Rbb^{N} \to
\Cbb^{M}$ performing a non-equispaced Discrete Fourier Transform
(NDFT) for directly relating functions sampled on the Cartesian
spatial grid $\cl C_N$ to those sampled on the polar frequency grid
$\widehat{\cl P}_M$.  

More precisely, given a function $f:\Omega\to \Cbb$ sampled on $\cl
C_N$, the NDFT\footnote{This NDFT formulation is strictly equivalent to the one
given in \cite{KeinerNFFT} where $\delta\tau=\delta r=1$.} computes
\begin{equation}
  \label{eq:2DNFFT}
   \widehat f(\bs k) = \sum_{m=0}^{N_0-1}\sum_{n=0}^{N_0-1} f(\bs r_{m,n}) e^{-2\pi i\, \bs k \cdot \bs r_{m,n}}, 
\end{equation}
on the $M$ nodes $\bs k$ of $\widehat{\cl P}_M$. Gathering all the values of
$\widehat f$ and $f$ into vectors $\bs{\widehat{f}}\in\Cbb^{M}$ and $\bs f \in
\Rbb^{M}$ respectively, this relation is conveniently summarized as
\begin{equation}
  \label{eq:2DNFFT-matrix}
  \bs{\widehat f} = \bs F \bs f,\quad \text{with}\ F_{ij}=e^{-2\pi i\, \bs k_i \cdot \bs r_j}, 
\end{equation}
where the matrix $\bs F \in \Cbb^{M\times N}$ stands for the linear
NDFT operation. Its internal entry indexing follows the one of the
components of $\bs f$ and $\bs{\widehat f}$. We explain in
Appendix~\ref{app:NFFT} how the Non-equispaced Fast Fourier Transform
(NFFT) algorithm allows us to compute efficiently in $\cl O(N \log
(N/\epsilon))$ the multiplications $\bs F \bs u$ and $\bs F^* \bs v$
for any $\bs u \in \Rbb^N$ and $\bs v \in \Cbb^{M}$, with a controlled
distortion $\epsilon$ with respect to the true NDFT.

The action of $\bs F$ on a discretized RIM $\imapv$ is related to the
continuous Fourier transform of $\imap$ as follows. Let $j=j(s',t)\in
[M]$ be the $j^{\rm th}$ point $\bs k_j$ of the grid
$\widehat{\cl P}_M$ associated to the polar coordinates
$(\omega_{s'},\theta_t)$. Then, for a sufficiently small $\delta r$,
\begin{align}
\label{eq:NDFT-continuous-equiv}
	(\bs F \imapv)_{j(s',t)}&= \sum_{m=0}^{N_0-1}\sum_{n=0}^{N_0-1} \imap(\bs r_{m,n})
  e^{-2\pi i\, \bs k_j \cdot \bs r_{m,n}}\ \approx \tinv{(\delta r)^2}\,\widehat{\imap}(\omega_{s'} \bs p_{\theta_t}). 
\end{align}

To take into account the multiplication by $2\pi i\,\omega/\imap_{\rm
  r}$ in (\ref{eq:measures}) and the existence of a factor $1/(\delta
r)^2$ in the equivalence \eqref{eq:NDFT-continuous-equiv}, we
introduce the diagonal operator $\bs D\in \nobreak \Rbb^{M\times M}$
defined as
$$
\bs D = \tfrac{2\pi\,i (\delta r)^2}{\imap_{\rm r}}\,\diag(\omega_{(1)},\,\cdots,
\omega_{(M)}),
$$ 
where $\omega_{(j)}$ refers to the $\omega$-coordinate of the $j^{\rm
  th}$ point of $\widehat{\cl P}_M$, \ie if as before $j=j(s',t)$ is
associated to $(\omega_{s'},\theta_t)\in \widehat{\cl P}_M$ then $\omega_{(j)} =
\omega_{s'}$. The operator $\bs D$ models the effect of the transverse gradient
in the Fourier domain.

In parallel to the discussion ending Sec.~\ref{sec:discr-funct}, we
also restrict the action of $\bs D \bs F$ to the domain $\widehat{\cl
  P}^+_M$. Consequently, using the operator $\bs \Theta$
(Sec.~\ref{sec:discr-funct}), the final linear forward model linking
the real FDM to the 2-D NDFT of the discrete RIM $\bs \imap$ reads
\begin{equation}
\label{eq:disc-measures-noise}
\bs y \ =\ (\bs \Theta \bs D \bs F)\,\imapv + \bs \eta\quad \in \Rbb^M.
\end{equation}

The additional noise $\bs \eta\in \Rbb^{M}$ integrates the different
distortions explained and estimated in Sec.~\ref{sec:noise_estimation}, \ie the numerical computations, the model discretization, the discrepancy to the first order approximation
\eqref{eq:DeflectionAngle}, and the actual noise corrupting the
observation of $\bs z$.

\medskip

Notice that in the absence of noise ($\bs \eta = 0$), the model
\eqref{eq:disc-measures-noise} could be easily turned into a classical
tomographic model where the DC frequency is not observed. Indeed,
forgetting the formal action of $\bs \Theta$, if the frequency
origin has been carefully removed from $\widehat{\cl P}_M$, then $\bs D$
is invertible with $\bs D^{-1} = -\tfrac{i\,\imap_{\rm r}}{2\pi
  (\delta r)^2}\,\diag(\omega^{-1}_{(1)},\, \cdots,
\omega^{-1}_{(M)})$, and we can solve the common tomographic problem
\begin{equation}
\label{eq:tomo_transform}
	\tilde{\bs y} := \bs D^{-1} \bs y \ =\ \bs F\,\imapv.
\end{equation}

However, as we present in Sec.~\ref{sec:tomo_vs_deflect}, this
transformation is not suited to noisy ODT sensing. Even for a simple
additive white Gaussian noise $\bs \eta$ (or AWGN), the multiplication
by $\bs D^{-1}$ breaks the homoscedasticity of $\bs \eta$, \ie the
variance of each $(\bs D^{-1} \bs \eta)_j$ varies with $j \in
[M]$. This interferes with common reconstruction techniques used in
classical tomography. Obviously, a noise whitening could be realized
for stabilizing such methods but at least for AWGN, this strictly
amounts to solve directly the model \eqref{eq:disc-measures-noise}.

\medskip

\if 0
Experimentally, the acquisition time is directly related to the total
number of incident angles $N_\theta$. It is therefore of great
interest to possibly reduce this parameter while still reconstructing
an accurate RIM. This will be studied extensively in the experimental
part of this paper for probing the compressibility of Optical
Deflectometric Tomography given our regularized reconstruction method described in
Section \ref{sec:minimization}.
\fi

\section{RELATED WORKS}
\label{sec:state-of-the-art}

This section describes some recently proposed methods for tomographic
reconstruction in the domains of differential phase-contrast
tomography and common absorption tomography.

In differential phase-contrast tomography, the refractive index
distribution is recovered from
phase-shifts measurements. These are composed by the
derivative of the refractive index map, inducing the apparition of the
\emph{affine} frequency $\omega$ when using the FST, as it happens in
the ODT sensing model (Sec.~\ref{sec:problem}). In this
application, Pfeiffer et al.~\cite{Pfeiffer2007} have used the FBP
algorithm to reconstruct the refractive index map from a fully covered
set of projections. Cong et al.~\cite{cong2011,cong2012} have used
different iterative schemes based on the minimization of the TV norm
to reconstruct the refractive index distribution over a region of
interest. These methods are accurate and provide similar results, but
the iterative scheme based on the TV norm has proved to be better than
FBP when the amount of acquisitions decreases.

In common Absorption Tomography (AT) we deal with the reconstruction
of the absorption index distribution from intensity measurements. As
these measurements are directly related to the absorption index, the
AT sensing model does not include the \emph{affine} frequency
$\omega$. In this domain, several works have exploited sparsity based
methods. Most recent works in AT have focused on promoting a small TV
norm~\cite{Ritschl2011, Sidky2011}. Sidky et al.~\cite{Sidky2011} use
a Lagrangian formulation for the tomographic reconstruction problem,
promoting a small TV norm under a Kullback-Leiber data divergence and
a positivity constraint. They aim at reconstructing a breast phantom
from 60 projections with Poisson distributed noise. For this, they use
the primal-dual optimization algorithm proposed by Chambolle et
al.~\cite{ChambollePock_2011}. The method results in high quality
reconstruction compared to FBP but with a convergence result that is
highly dependent on the Lagrangian parameter choosen.

Ritschl et al.~\cite{Ritschl2011} use a constrained optimization
formulation to reconstruct the absorption index from low amount of
clinical data in the presence of metal implants and Gaussian
noise. This problem is solved by means of an alternating method that
allows then optimizing separately the raw data consistency function
and the sparsity cost function, without the need of prior information
on the observations. The fast convergence of the method is based on
the estimation of the optimization steps. The gradient descent method
is used to minimize the TV norm and the consistency term is minimized
via an algebraic reconstruction technique. The method is proven to
give better results than FBP.

These works have proved that some tomographic applications can be made
compressive in the CS sense~\cite{Candes2004, donoho2006cs}. In short,
accurate tomographic image
reconstruction are reachable from a number of samples that is smaller than the
desired image resolution, if the image is sparse in some basis, \ie
the image expansion in that basis contains only a small number of
nonzero coefficients. However, most of
these works have considered Cartesian frequency grids while actual sensing often occurs in polar or non-equispaced grids. Besides, they attack different
problems than ODT, where the sensing model and type of measurement
change from one to the other. One important difference between AT and
ODT sensing models, relies on the presence of the \emph{affine}
frequency as materialized by a diagonal operator $\bs D$; whose impact
is analyzed in detail in Sec.~\ref{sec:noise_tomo_vs_deflect} and
Sec.~\ref{sec:tomo_vs_deflect} for perfect and noisy sensing.

\section{REFRACTIVE INDEX MAP RECONSTRUCTION} 
\label{sec:reconstruction}

Three methods can be considered for recovering the discrete RIM
$\imapv\in\Rbb^N$: the common Filtered Back
  Projection (FBP); a penalized least square 
solution, which is related to a \emph{minimal energy solution} (ME) \cite{Candes2006}; and a regularized
approach called TV-$\ell_2$ minimization. Since FBP and ME are widely used in tomographic
problems, we will use them as a standard to compare with the quality of
TV-$\ell_2$. 

\subsection*{Filtered Back Projection}
\label{sec:ls}

The filtered back projection (FBP) can be briefly defined as an
analytical method that consists of first filtering the
tomographic projections with an appropriate function, \ie a ``ramp filter'' for
absorption tomography (AT) \cite[Section~3.3,
p.~60]{Kak1988_Imaging} or a simple Hilbert transform for
deflection tomography \cite{beghuin2010optical, Pfeiffer2007}. The result is then \emph{back projected} in the
spatial domain by angular integration. Despite its simplicity, this
technique suffers of severe artefacts when the number of angular
observations decreases. Moreover, FBP does not integrate the
noise distortion in its processing.

\subsection*{Minimum Energy Reconstruction}
\label{sec:ls}

Given a linear sensing model 
\begin{equation}
\label{eq:gen-inv-prob}
\bs y = \bs \Phi \bs x \in \Rbb^M
\end{equation}
of some image (vector) $\bs x\in \Rbb^N$ by a sensing operator $\bs
\Phi \in \Rbb^{M \times N}$ with $M \leq N$, a common procedure is
to estimate $\bs x$ by applying the (right) pseudo-inverse of $\bs
\Phi$ to
the observations, \ie by computing $\bs \Phi^\dagger \bs y = \bs \Phi^* (\bs \Phi \bs
\Phi^*)^{-1} \bs y$ (for non-singular $\bs \Phi \bs
\Phi^*$).

This solution is actually equivalent to compute a \emph{minimum energy}
solution (or \emph{penalized least
  square}) $\bs x_{\rm ME}$ by solving the convex problem 
\begin{equation}
  \label{eq:ls-method}
  \bs x_{\rm ME} = \argmin_{ \bs u \in \bb
    R^N} \|\bs u\|_{2} \ \st\ \bs y = \bs \Phi \bs u. 
\end{equation}
In words, the general inverse problem \eqref{eq:gen-inv-prob} is
solved by a \emph{regularization} promoting a small $\ell_2$-norm (or \emph{energy}). 
For large scale problems, solving this last convex formulation is preferred 
to the estimation of the pseudo-inverse that requires the costly inversion of
a $M\times M$ matrix.

Common reconstruction approaches (\eg in Medical
Imaging) follow this minimum energy principle \cite{Candes2006}. They
generally proceed by zeroing the
Fourier coefficients of all unobserved frequencies, \ie
a process that minimizes the energy of the solution
for frequency sampling defined on regular grids. The ME solution is
also close to a discretization of the FBP procedure
for a densely covered frequency space. 

As shown later, the ME method presents some strong limitations when the
model (\ref{eq:gen-inv-prob}) is severely ill-conditioned or when
noise corrupts the observation of $\bs y$. 
For instance, in the case where the
  tomographic problem subsamples a regular
  sampling of the Fourier domain, artifacts and bluring appear from the
  convolution of the pure image with the filter associated to the
  subsampling pattern (or \emph{dirty map} \cite{wiaux2009compressed}).

In ODT, the sensing operator and the sensing model read $\bs
\Phi = \bs \Theta \bs D \bs F$ and $\bs y = \bs \Theta\bs D \bs F
\imapv$, respectively, and we will denote by $\imapv_{\rm ME}$ the corresponding ME
solution. 

\subsection*{TV-$\ell_2$ minimization} 
\label{sec:minimization}

\begin{figure}
  \centering
  {\raisebox{0.5cm}{\includegraphics[height=1.4in]{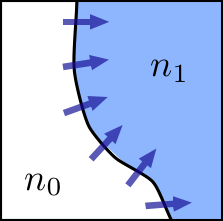}}}
	\hspace{0.5cm}
  {\includegraphics[height=1.75in]{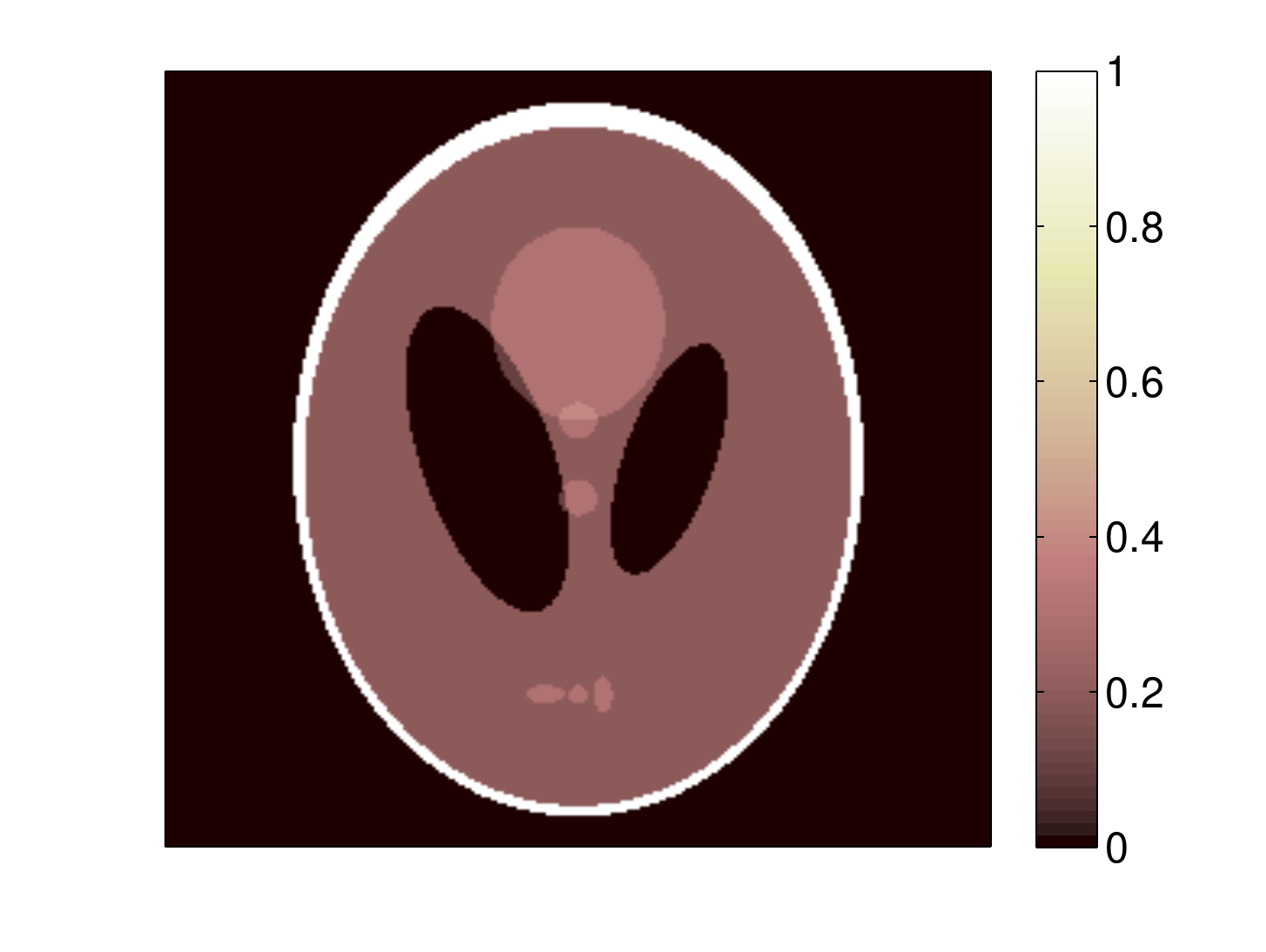}}
  \caption{(left) RIM TV Model: the gradient of the RIM (represented
    by arrows) is non-zero only on the interface between the two RIM
    homogeneous areas $\mathfrak n_0$ and $\mathfrak n_1$. This
    induces a small TV norm. (right) The Shepp-Logan phantom, an
    example of the ``Cartoon shape'' model.}
 \label{fig:RIM_SL}
\end{figure}

In order to overcome the limitations of both FBP and ME methods, we introduce a
new reconstruction method which is both less sensitive to unwanted
oscillations due to a low density frequency sampling~$\widehat{\cl P}_M$
and to additional observational noise $\bs \eta$ in
\eqref{eq:disc-measures-noise}. 

In particular, since the spatial dimensions in $\cl P_M$ and in $\cl
C_N$ are expected to be equal, \ie $N_0 \approx N_\tau$
(Sec.~\ref{sec:discrete-domains}) we are interested in lowering the
density of $\widehat{\cl P}_M$ in the Fourier plane by decreasing the
number of angular observations $N_\theta$. In other words, with
respect to this reduction, we aim at developing a numerical
reconstruction which makes Optical Deflectometric Tomography
``\emph{compressive}'' in a similar way other compressive imaging
devices which, inspired by the Compressed Sensing
paradigm~\cite{Candes2004,donoho2006cs}, reconstruct high resolution
images from few (indirect) observations of its content
\cite{duarte2008single,wiaux2009compressed,Sidky2011,lustig2007sparse}.
This ability would lead of course to a significant reduction of the
ODT observation time with potential impact, for instance, in fast
industrial object quality control relying on this technology.

This objective of compressiveness can only be reached by regularizing
the ODT inverse problem by an appropriate prior model on the
configuration of the expected RIM $\imapv$.  Interestingly, the actual
RIM of most human made transparent materials (\eg optical fiber
bundles, lenses, optics, ..) is composed by slowly varying areas
separated by sharp boundaries (material interfaces) (see
Fig.~\ref{fig:RIM_SL}-(left)). This can be interpreted with a Bounded
Variation (BV) or ``Cartoon Shape'' model \cite{rudin1992nonlinear} as
the typical Shepp-Logan phantom in
Fig.~\ref{fig:RIM_SL}-(right). Therefore, the inverse problem in
\eqref{eq:disc-measures-noise} can be regularized by promoting a small
Total-Variation norm, which in its discrete formulation is defined as
\cite{chambolle2004algorithm}
	$$ \|\imapv\|_{TV} := \| \bs \nabla \imapv \|_{2,1}, $$
where $\|\!\cdot\!\|_{2,1}$ is the mixed $\ell_2/\ell_1$ norm defined on
any $\bs u=(\bs u_1,\bs u_2) \in \Rbb^{N\times 2}$ as $\| \bs u \|_{2,1} = \sum_{k=1}^N
((\bs u_1)_k^2 + (\bs u_2)_k^2)^{1/2}$, and $\bs \nabla : \Rbb^N \rightarrow
\Rbb^{N\times 2}$ is the (finite difference) gradient operator. Reusing the 2-D
coordinates of $\imapv$, this operator is defined along each direction
as $\bs \nabla \imapv = (\bs \nabla_1 \imapv, \bs \nabla_2 \imapv)$
with $(\bs \nabla_1 \imapv)_{kl} = \imap_{k+1,l} - \imap_{k,l}$ and
$(\bs \nabla_2 \imapv)_{kl} = \imap_{k,l+1} - \imap_{k,l}$.

In order to obtain a reconstruction method which is also robust to
additive observation noise, we must lighten the strict fidelity
constraint implicitly used by the LS method in \eqref{eq:ls-method}.
Therefore, assuming the data corrupted by an additive white Gaussian
noise (AWGN), we impose a data fidelity requirement using the
$\ell_2$-norm, \ie if $\bs \eta=\bs y - \bs \Phi\imapv$ is known to
have a bounded norm (or \emph{energy}) $\|\bs \eta\|\leq \varepsilon$,
we force any reconstruction candidate $\bs u$ to satisfy $\|\bs y -
\bs \Phi \bs u\| \leq \varepsilon$. The way $\varepsilon$ can be
estimated will be explained in Sec.~\ref{sec:noise_estimation}.

Additionally to a fidelity criterion with the observed data, other
requirements can be imposed on the reconstruction. First, we can often
assume that the reference refractive index $\imap_{\rm r}$ (\ie the
one of some optical fluid surrounding the object) is lower than the object
RIM. Second, if the object is completely contained in the
field-of-view $\Omega$ of the ODT experiment, we can force any
candidate RIM $\bs u$ to match $\imap_{\rm r}$ on the boundary of
$\Omega$, \ie imposing $u_k = \imap_{\rm r}$ for all indices $k$
belonging to the border of $\cl C_N$. 
These indices are associated to pixels
$\bs r_{m,n}\in\cl C_N$ for which at least one of the two 2-D
coordinates is equal to either $-N_0/2$ or $N_0/2-1$. The
corresponding index set is denoted $\partial\Omega$ for simplicity.

Gathering all these aspects, we could propose the following
reconstruction program
$$ 
\imapv_{{\rm TV-}\ell_2} = \argmin_{\cmapv \in \bb R^N}
\|\cmapv\|_{TV} \ \st\ \|\bs y - \bs \Phi \cmapv \|_{2} \leq
\varepsilon, \ \cmapv \succeq \imapv_{\rm r}, \ \cmapv_{\partial
  \Omega} = \imap_{\rm r} \bs 1_{\partial \Omega}, 
$$
denoting by $\bs 1\in\Rbb^N$ the vector of ones, \ie the unit RIM in
$\cl C_N$, and recalling that $\bs v_{\cl A} = \bs R_{\cl A} \bs v$
stands for the restriction of the components of $\bs v\in\Rbb^N$ to
$\cl A\subset [N]$.

However, the reconstruction can be slightly simplified by observing
that the kernel of the sensing operator $\bs\Phi = \bs \Theta \bs D \bs F$ in ODT
contains the set of constant vectors in $\Rbb^N$. This is a
consequence of the vanishing affine frequency $\omega$ (which mainly
defines the action of $\bs D$) on the frequency origin, or more
simply, this relies on the occurrence of the RIM gradient in the
deflection model~(\ref{eq:DeflectionAngle}).

Therefore, a change of variable $\bs u \to \bs u - \imap_{\rm r}\bs
1$ does not disturb the previous reconstruction which can be recast as
\begin{equation}
\label{eq:inv-problem-deflecto-solving}
\imapv_{{\rm TV-}\ell_2} = \argmin_{\bs u \in \bb R^N} \|\bs u\|_{TV} \ \st\
\|\bs y - \bs \Phi \bs u \|_{2} \leq \varepsilon, \ \bs u \succeq 0, \
\bs u_{\partial \Omega} = 0, 
\end{equation}
remembering that the true RIM estimation is actually $\imapv_{{\rm TV}-\ell_2} +
\imap_{\rm r} \bs 1$.  

Without the \emph{frontier constraint} ($\bs
  u_{\partial \Omega} = 0$), the unicity of the solution is not
  guaranteed. In that case, for one minimizer $\bs u^*$, all the
  vectors of $\{\bs u^* + \lambda \bs 1:
\lambda\in\Rbb, \bs u^* + \lambda \bs 1 \succeq 0\}$ also minimize the problem since the kernels of both $\bs
\Phi$ and $\bs \nabla$ contain constant vectors. Considering the frontier
constraint is thus essential for enforcing the unicity of the solution
as expressed in the following lemma. 
\begin{lemma}
If there is at least one feasible point for the constraints of
(\ref{eq:inv-problem-deflecto-solving}), then the solution of this problem
is unique.
\end{lemma}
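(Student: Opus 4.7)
The strategy is classical convex analysis: show that the feasible set is closed and convex, exploit the strict convexity of the $\ell_2$ data-fidelity constraint to reduce the question to the kernel of $\bs\Phi$, and then use the frontier condition to kill the constant-vector ambiguity identified in the paragraph preceding the lemma.

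First, I would set up the geometry. The feasible set
$\mathcal{C} = \{\bs u \in \Rbb^N : \|\bs y - \bs\Phi \bs u\|_2 \leq \varepsilon,\ \bs u \succeq 0,\ \bs u_{\partial\Omega} = \bs 0\}$
is closed and convex as the intersection of a closed $\ell_2$-ball, the non-negative orthant, and an affine subspace, and is non-empty by hypothesis. The objective $\|\cdot\|_{TV}$ is continuous and convex. Moreover it is coercive on $\mathcal C$: the recession cone of $\mathcal C$ intersects $\ker \bs\nabla$ trivially, because the only constant vector $c \bs 1$ satisfying $(c\bs 1)_{\partial\Omega}=\bs 0$ is $\bs 0$. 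A minimizer $\bs u^\star$ therefore exists by Weierstrass, with optimal value $t^\star$.

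Second, I would tackle uniqueness by contradiction. Suppose $\bs u_1,\bs u_2\in\mathcal C$ both attain $t^\star$; by convexity, $\bar{\bs u} := (\bs u_1+\bs u_2)/2$ is also optimal, so
$$\|\bar{\bs u}\|_{TV} = t^\star = \tfrac{1}{2}\big(\|\bs u_1\|_{TV}+\|\bs u_2\|_{TV}\big).$$
This forces equality in the triangle inequality for $\|\cdot\|_{2,1}$, meaning that at every pixel $k$ the 2-D vectors $(\bs\nabla \bs u_1)_k$ and $(\bs\nabla \bs u_2)_k$ are non-negatively collinear. Independently, strict convexity of the Euclidean norm gives $\|\bs y - \bs\Phi \bar{\bs u}\|_2 < \varepsilon$ whenever $\bs\Phi \bs u_1 \neq \bs\Phi \bs u_2$; combined with the pointwise gradient alignment one exhibits a small admissible perturbation of $\bar{\bs u}$ strictly decreasing the TV seminorm (compatible with positivity and the frontier, since slack is available in the $\ell_2$ constraint and $\bar{\bs u}$ is not the global TV-minimizer $\bs 0$). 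This contradicts optimality of $\bar{\bs u}$, so necessarily $\bs\Phi \bs u_1 = \bs\Phi \bs u_2$, i.e.\ $\bs w := \bs u_1-\bs u_2 \in \ker \bs\Phi$.

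Finally, I would invoke the kernel structure highlighted just before the lemma: $\ker \bs\Phi = \ker(\bs\Theta \bs D \bs F)$ consists of constant vectors (this is the content of the diagonal $\bs D$ vanishing at the DC frequency together with the injectivity of $\bs F$ off that frequency on the restricted domain). Since $\bs w$ is constant and $\bs w_{\partial\Omega}=\bs 0$, we get $\bs w=\bs 0$, whence $\bs u_1=\bs u_2$.

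The main obstacle is the descent-direction step in the middle paragraph: rigorously constructing an admissible perturbation at $\bar{\bs u}$ that strictly decreases $\|\cdot\|_{TV}$ requires care when the positivity constraint is active on a large set, and it is there that one must combine the subdifferential of the TV seminorm with the $\ell_2$ slack to produce a genuinely descending, feasible move. If one cannot close this step directly, a fallback is to work modulo $\ker \bs\nabla$ on the subspace $\{\bs u_{\partial\Omega}=\bs 0\}$ (on which TV is a bona fide norm) and argue uniqueness on the quotient together with the frontier killing the representative constant.
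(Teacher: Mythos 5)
Your overall architecture differs from the paper's, and the step where it breaks is the last one. You claim that $\Ker\bs\Phi=\Ker(\bs\Theta\bs D\bs F)$ consists of the constant vectors, and you use this to conclude that $\bs w=\bs u_1-\bs u_2$, being constant and vanishing on $\partial\Omega$, must be $\bs 0$. This is false precisely in the regime the lemma is designed for: the paper only states that the kernel of $\bs\Phi$ \emph{contains} the constants, and in the compressive setting $M\ll N$ (\eg $5\%$ of $360$ angles) the operator $\bs\Phi\in\Rbb^{M\times N}$ has a kernel of dimension at least $N-M$, vastly larger than the one-dimensional space of constants; there is no injectivity of $\bs F$ ``off the DC frequency'' to invoke when the polar grid undersamples the frequency plane. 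So membership of $\bs w$ in $\Ker\bs\Phi$ together with $\bs w_{\partial\Omega}=\bs 0$ gives you nothing. Incidentally, the reduction to $\bs\Phi\bs u_1=\bs\Phi\bs u_2$ that you flag as the ``main obstacle'' is actually the easy part and needs no subdifferential analysis: if the midpoint $\bar{\bs u}$ had slack in the $\ell_2$ ball, the rescaled point $(1-t)\bar{\bs u}$ for small $t>0$ would remain feasible (positivity and the frontier condition are preserved by shrinking toward $\bs 0$) and would have TV norm $(1-t)\|\bar{\bs u}\|_{TV}<\|\bar{\bs u}\|_{TV}$, the case $\|\bar{\bs u}\|_{TV}=0$ being excluded since it would force $\bar{\bs u}$, and then $\bs u_1,\bs u_2$, to equal $\bs 0$. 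But this correct step delivers you to a dead end.

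The paper's proof goes through the kernel of $\bs\nabla$, not of $\bs\Phi$: the operator whose kernel is \emph{exactly} the constants is the finite-difference gradient, and the frontier constraint gives $\Ker\bs\nabla\cap\Ker\bs R_{\partial\Omega}=\{\bs 0\}$. Hence two distinct minimizers $\bs x_1\neq\bs x_2$ of (\ref{eq:inv-problem-deflecto-solving}), both vanishing on $\partial\Omega$, must satisfy $\bs\nabla\bs x_1\neq\bs\nabla\bs x_2$, and the paper then argues that a convex combination strictly decreases the squared objective $\|\bs\nabla\cdot\|_{2,1}^2$, contradicting optimality. Your pointwise non-negative-collinearity observation for $\|\cdot\|_{2,1}$ is the right kind of ingredient for exactly this equality-case analysis, but you abandon that thread instead of combining it with $\bs\nabla\bs x_1\neq\bs\nabla\bs x_2$. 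Your proposed fallback does not repair the gap either: passing to the quotient by $\Ker\bs\nabla$ makes TV a genuine norm on $\{\bs u:\bs u_{\partial\Omega}=\bs 0\}$, but a norm need not be strictly convex, so uniqueness on the quotient is not automatic --- you would still need the equality-case argument (or the strict-convexity property of $\|\cdot\|_{2,1}^2$ that the paper invokes) to close the proof.
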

\begin{proof}
Using the TV norm definition (and squaring it), the TV-$\ell_2$
optimization \eqref{eq:inv-problem-deflecto-solving} is equivalent to solve
$$
\textstyle\argmin_{\bs u \in \bb R^N} \|\bs \nabla \bs u\|^2_{2,1} \ \st\
\|\bs y - \bs \Phi \bs u \|_{2} \leq \varepsilon, \ \bs u \succeq 0, \
\bs u_{\partial \Omega} = 0.
$$
The kernel of $\bs \nabla$ is the set of constant
vectors, while the one of $\bs
R_{\partial \Omega}$ (defining the frontier constraint) is the set of
vectors equal to 0 on $\partial \Omega$. Therefore, $\Ker \bs \nabla
\cap \Ker \bs R_{\partial\Omega} = \{\bs 0\}$. Moreover, since the domain of $\|\bs \nabla \cdot\|^2_{2,1}$ is $\Rbb^N$,
and since we assume at least one feasible point, \eqref{eq:inv-problem-deflecto-solving} has
at least one solution. 

Let $\bs x_1$ and $\bs x_2$ be two distinct minimizers.
Then, $\bs R_{\partial \Omega} \bs x_1 = \bs R_{\partial \Omega}
\bs x_2 = 0$ and $\bs x_1 - \bs x_2 \in \Ker \bs R_{\partial
  \Omega}$ while $\bs x_1 - \bs x_2 \neq \bs 0$. Therefore, $\bs x_1
- \bs x_2 \notin \Ker\bs\nabla$ and $\bs \nabla \bs x_1 \neq \bs
\nabla \bs x_2$. By the strict convexity of the function $\varphi(\cdot) =
\|\cdot\|^2_{2,1}$, writing $\bs x_\lambda = \lambda \bs x_1 +
(1-\lambda) \bs x_2$ for $\lambda\in(0,1)$, $\varphi(\bs \nabla \bs x_1) = \varphi(\bs \nabla
\bs x_2)$ involves
$$
\textstyle\varphi(\bs \nabla \bs x_1) = \lambda \varphi(\bs \nabla \bs x_1) +
(1-\lambda) \varphi(\bs \nabla \bs x_2) > \varphi(\bs \nabla \bs x_\lambda),
$$ 
showing that $\bs x_{\lambda}$, which also satisfies the convex constraints, is a better minimizer, which is a contradiction. 
\end{proof}

Therefore, we see that the uniqueness is actually reached by the
stabilizing condition $\bs u_{\partial \Omega} = \bs R_{\partial
  \Omega} \bs u = 0$ making the optimization running outside of $\ker
\bs\nabla \setminus \{\bs 0\}$.

As explained in Section~\ref{sec:gcp}, the program
(\ref{eq:inv-problem-deflecto-solving}) can be efficiently solved
using proximal methods \cite{combettes2011proximal} and operator
splitting techniques, like the recently proposed primal-dual algorithm
by Chambolle and Pock in~\cite{ChambollePock_2011}.

\section{NOISE IDENTIFICATION, ESTIMATION AND ANALYSIS}
\label{sec:noise_estimation}

In this section we first discuss about the different sources of noise
and how to estimate the noise energy present in the experimental
data. Then, we analyze the noise impact in both AT and ODT
measurements.

\subsection{Noise sources}
\label{sec:noise-sources}

When a real sensing scenario is being studied, such as the ODT,
different sources of noise are present and they have to be considered
when determining the global noise energy bound $\varepsilon$ in
\eqref{eq:inv-problem-deflecto-solving}.

First, we have the \emph{observation noise}. Under high light source
intensity, the images collected by a Schlieren deflectometer (Fig.~\ref{fig:deflectometry}-(bottom)), 
and used for computing the ODT deflections $\bs z$, are mainly
affected by electronic noise such as the CCD thermal noise. 
This induces a noise in the measured deflection angles that 
can reasonably be assumed Gaussian and homoscedastic, \ie 
with an homogeneous variance through all the measurements.
By computing the 1-D Fourier transform of the ODT measurements using
$\bs \Theta \bs F_{\rm rad}$ in (\ref{eq:F-rad-def-real}) the
corresponding noise $\bs \eta_{\rm obs}$ remains
Gaussian~\cite{morita1995fourier} in the FDM $\bs y$, \ie $\bs
\eta_{\rm obs} \sim \cl N(0,\sigma_{\rm
    obs}^2)$. Actually, from the orthonormality of the Fourier
  basis, \eqref{eq:F-rad-def-real} provides
  $\sigma_{\rm obs}^2 = (\delta\tau)^2 N_\tau
  \sigma_{z}^2$, where $\sigma_{z}^2$ is the variance of the noise
  present in the ODT measurements ($\bs z$). This one can be estimated
  from the noisy observations using the Robust Median Estimator~\cite{donoho94ideal,taswell00}.

Finally, this noise, defined as the difference between the noisy FDM
$\bs y$ and the
noiseless FDM $\bs y_{\rm true}$, has an energy that can be bounded
using the Chernoff-Hoeffding bound~\cite{hoeffding1963}:
$$
\textstyle\| \bs y - \bs y_{\rm true} \|^2 = \| \bs \eta_{\rm obs} \|^2 <
\varepsilon_{\rm obs}^2 := \sigma_{\rm obs}^2(M + c
\sqrt{M}),
$$ 
with high probability for $c = \cl O(1)$.

Second, there is the \emph{modeling error} that comes with every mathematical
discrete representation of a physical continuous system. In the ODT
system, this error is due to \emph{(i)} the first order approximation used to
formulate \eqref{eq:DeflectionAngle}, \emph{(ii)} the sampling of the
continuous RIM and \emph{(iii)} the discrete model itself. The modeling noise
is related to the difference between the noiseless FDM and the sensing
model $\bs \Phi_{\rm true} \imapv = \bs \Theta \bs D \bs F_{\rm true} \imapv$,
where $\bs F_{\rm true}$ performs the exact Polar Fourier
Transform. As explained in Sec.~\ref{sec:principles}, the
  modeling noise can be estimated by means of a \emph{ray tracing}
  method based on the Snell law (not detailed here). This shows that for simple objects,
  an absolute error of $0.7^{\circ}$ is expected on light deflections smaller than $7^{\circ}$. A~Gaussian noise model provides a
  rough estimation of $10$ dB for the
  corresponding measurement SNR. This is equivalent to:
\begin{equation}
\label{eq:epsilon_model}
\| \bs y_{\rm true} - \bs \Phi_{\rm true} \imapv \| = \| \bs
\eta_{\rm model} \| < \varepsilon_{\rm model} \ = {\| \bs
    y_{\rm true} \|}/{\sqrt{10} \simeq \| \bs
    y \|}/{\sqrt{10}}.
\end{equation}

Third, we must consider the \emph{interpolation noise} given by the mathematical
error committed by estimating the polar Fourier Transform with the
NFFT algorithm, \ie the noise $\bs \Phi_{\rm true} \imapv - \bs \Phi
\imapv$.  To determine a bound $\varepsilon_{\rm nfft}$ on the energy
of this error we first estimate the NFFT distortion (\ie without the
action of $\bs D$), defined as the difference between the NFFT polar
Fourier Transform $\bs F_{\rm app}$ and the true NDFT $\bs
F$. Theoretically, for any vector $\bs f \in \Rbb^N$, the
$\ell_\infty$-norm of this distortion is bounded as $\|\bs F_{\rm app}
\bs f - \bs F \bs f\|_\infty \leq \epsilon\,C(\bs f) = \cl O(\epsilon
\| \bs f \|_1)$, where $\epsilon$ controls both the accuracy and the
complexity $\cl O(N \log N/\epsilon)$ of the NFFT~\cite{KeinerNFFT} (see
Appendix~\ref{app:NFFT}). Assuming that each component of $\bs F_{\rm
  app} \bs f - \bs F \bs f$ is iid with a uniform distribution $\cl
U([-C(\bs f),C(\bs f)])$, and using the Chernoff-Hoeffding
bound~\cite{hoeffding1963,Dequant}, we can estimate $\|\bs F_{\rm app}
\bs f - \bs F \bs f\|^2 < \frac{C(\bs f)^2}{3} (M + c\,\sqrt{M}),$
with high probability for $c = \cl O(1)$.  Finally, $\varepsilon_{\rm
  nfft}$ can be crudely computed as
\begin{align*}
\| \bs \Phi_{\rm true} \imapv - \bs \Phi \imapv \|&= \|\bs\Theta \bs
D(\bs F_{\rm app}
\imapv - \bs F \imapv) \| \leq \bbb\bs \Theta \bs D \bbb \|\bs F_{\rm app}
\imapv - \bs F \imapv\|\\
&= \tfrac{2\pi (\delta
r)^2\omega_{\rm max}}{\imap_{\rm r}} \tfrac{\epsilon C(\imapv)}{\sqrt 3}
(M + c\,\sqrt{M})^{1/2}\\
&\approx  \tfrac{\pi \delta r}{\sqrt 3\,\imap_{\rm r}} \,\epsilon\,C(\imapv) (M +
c\,\sqrt{M})^{1/2} =: \varepsilon_{\rm nfft}, 
\end{align*}
with $\omega_{\rm max} \approx \tinv{2} N_\tau \delta\omega =
\tinv{2\delta \tau} \approx \tinv{2\delta r}$ representing the maximum
frequency amplitude in $\widehat{\cl P}_M$. In practice, because of the
RIM shift $\imapv \to \imapv - \bs 1 \imap_{\rm r}$ explained in
Sec.~\ref{sec:minimization}, we can bound $\|\imapv\|_1$ and hence
$C(\imapv)$ with the expected RIM dynamics $\delta \imap$, \ie
$\|\imapv\|_1\leq N \delta\imap$, and we adjust $\epsilon$ in order to
have $\varepsilon_{\rm nfft}$ much lower than the other sources of
noises.
 
Finally, we may also have an error introduced by the \emph{instrument
calibration}, when determining the exact $\tau$ and $\theta$ associated
to the projections. We are going to neglect this error by assuming a
pre-calibration process that provides an exact knowledge of these
values (see Sec.~\ref{sec:expdata}).

In conclusion, gathering all the previous noise identifications and assuming the different noise are of zero mean and independent of each other, we
can bound the difference between the actual ODT measurement and the
sensing model as follows:
\begin{align*}
  \| \bs y - \bs \Phi \imapv \|^2 & = \| \bs y - \bs y_{\rm true} + \bs y_{\rm true} - \bs \Phi_{\rm true} \imapv + \bs \Phi_{\rm true} \imapv - \bs \Phi \imapv \|^2 \\
  & = \| \eta_{\rm obs} + \eta_{\rm model} + \eta_{\rm nfft} \|^2 \ <\ \varepsilon_{\rm obs}^2 + \varepsilon_{\rm model}^2 +
  \varepsilon_{\rm nfft}^2.
\end{align*}
Therefore, we have $\varepsilon \approx \sqrt{\varepsilon_{\rm obs}^2 + \varepsilon_{\rm model}^2 +  \varepsilon_{\rm nfft}^2}$.

\subsection{Comparative study of the noise impact on AT and ODT}
\label{sec:noise_tomo_vs_deflect}

As we have seen in Sec.~\ref{sec:forward-model}, the main difference
between the AT and ODT problems is the appearance of the diagonal
operator $\bs D$ in the last one. Therefore, the AT sensing operator is described as $\bs \Phi_{\rm AT} = \bs \Theta \bs F$ and the ODT sensing operator as $\bs \Phi_{\rm ODT} = \bs \Theta \bs D \bs F$. We will now analyze the impact of
an additive white Gaussian noise on the measurements regarding the presence of this operator.

For this, we apply the sensing operators $\bs \Phi_{\rm AT}$ and $\bs \Phi_{\rm ODT}$ to a section of the fibers bundle (see Fig.~\ref{fig:RIM_fibers_sphere}-(left)), in order to obtain the AT and ODT acquisitions, respectively. In Fig.~\ref{fig:Tomo_Measure} and Fig.~\ref{fig:Deflecto_Measure} we show the real part of the Fourier measurements in AT and ODT.

For the class of images we are interested in, we can notice than in AT the
magnitude presents a peak around $\omega = 0$ and then decreases
significantly when the distance to the center increases, tending to
zero in the borders (see Fig.~\ref{fig:Tomo_Measure}). Whereas in ODT,
the presence of operator $\bs D$ makes the image intensity to be quite
spread through all the pixels (see
Fig.~\ref{fig:Deflecto_Measure}). This has a direct impact on the
reconstruction when the measurement is affected by additive Gaussian
noise. As the noise spreads evenly through the image, the pixels that
are not around $\omega = 0$ will be more affected in the AT model
because their intensity is significantly lower.

\begin{figure}
  \centering
	\includegraphics[height=4.55cm]{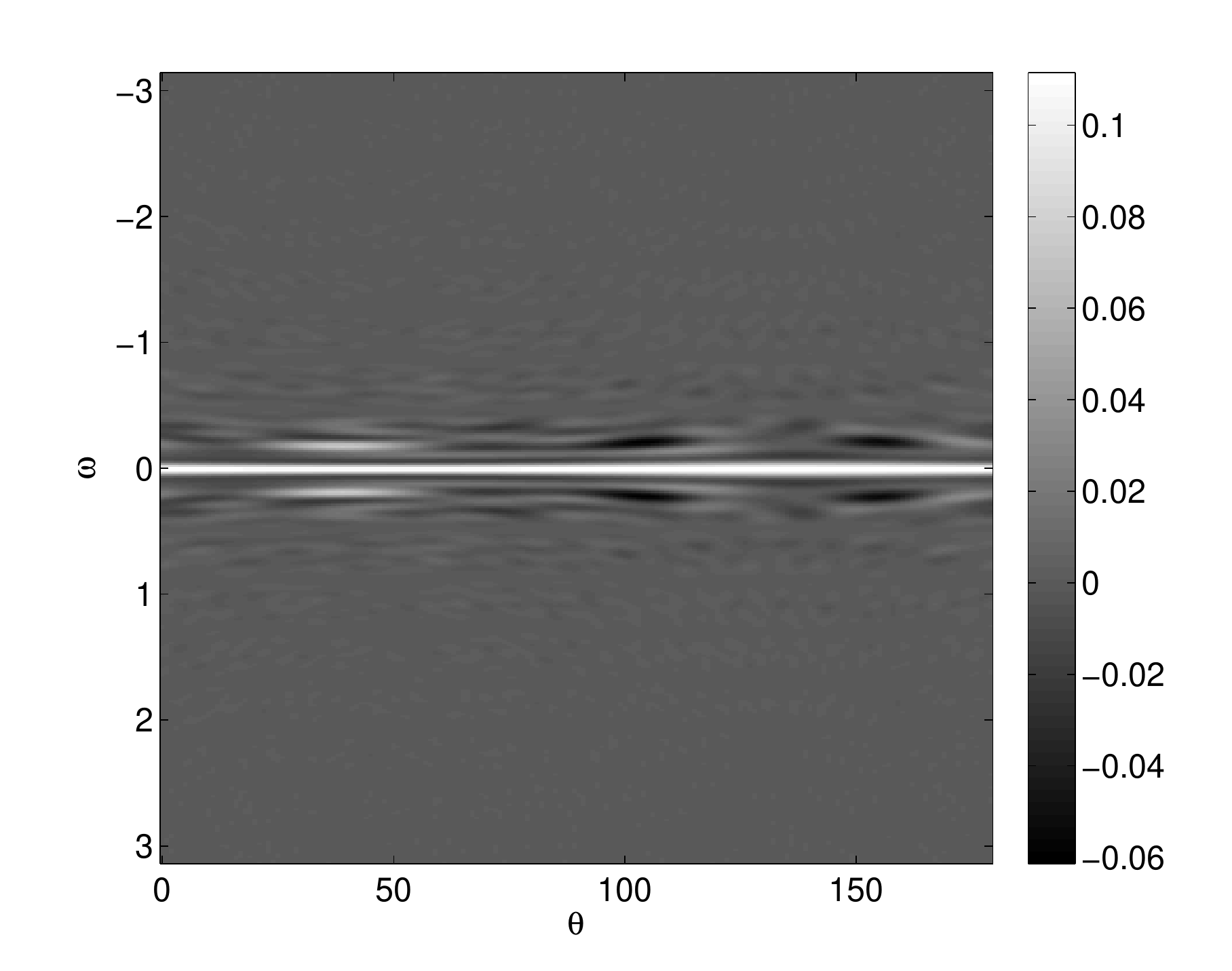}
	\hspace{0.5cm}
	\includegraphics[height=4.55cm]{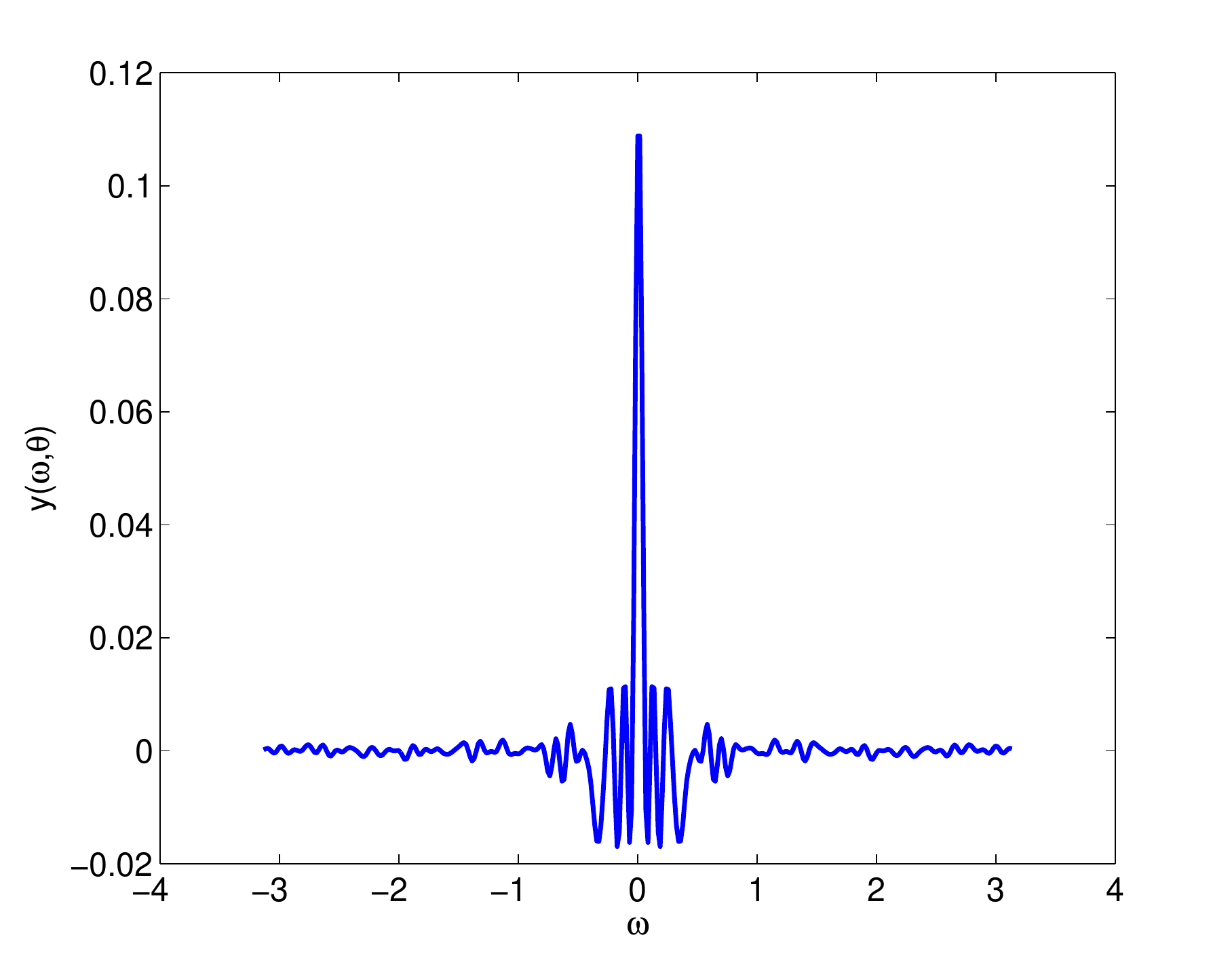}
  \caption{AT Measurement (in Fourier) (left) On the whole grid
  $\widehat{\cl P}_M$ for $N_\theta = 180$. (right) The slice $\theta = 80^\circ$.}
 \label{fig:Tomo_Measure}
\end{figure}

\begin{figure}
  \centering
	\includegraphics[height=4.55cm]{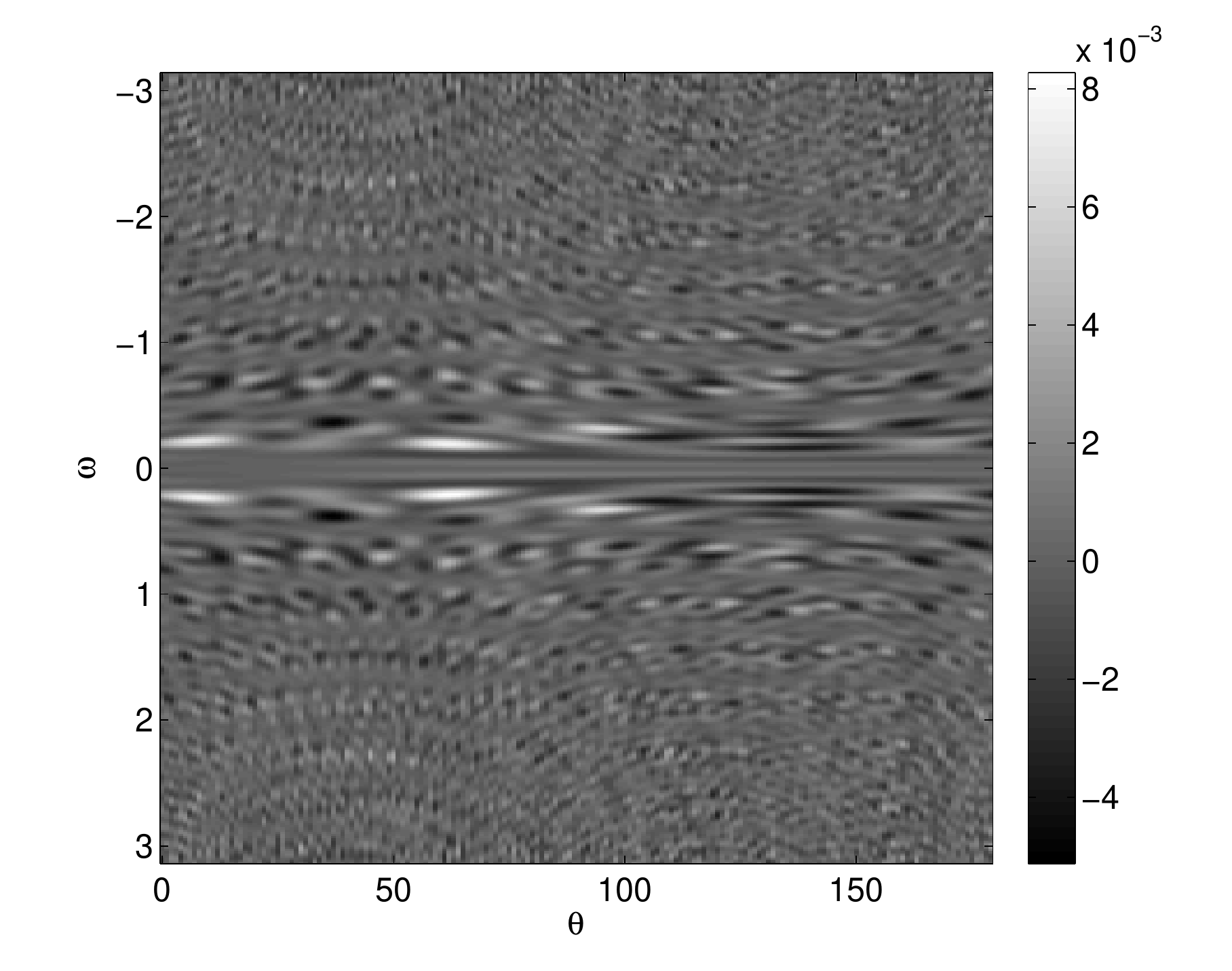}
	\hspace{0.5cm}
	\includegraphics[height=4.55cm]{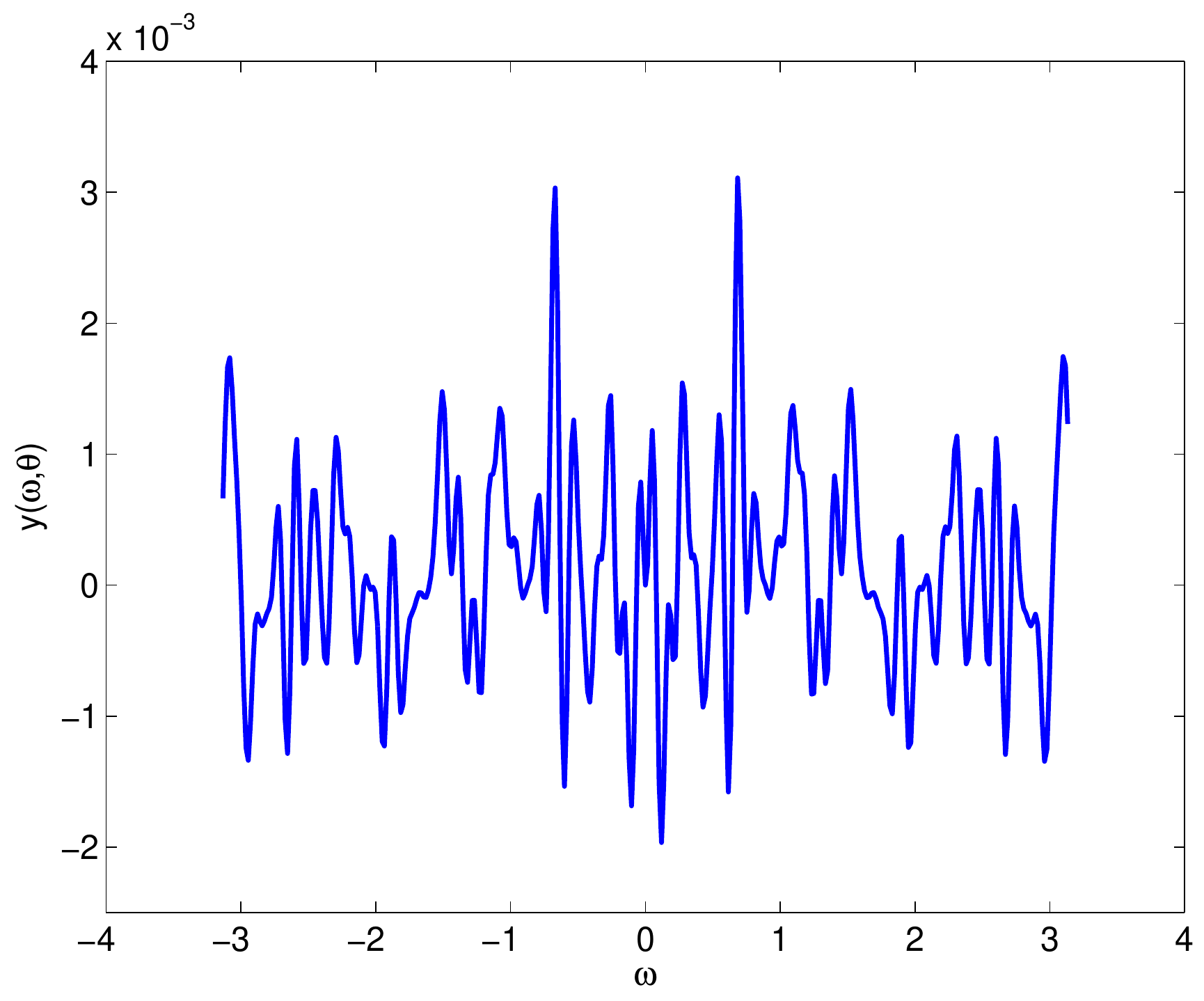}
  \caption{ODT Measurement (in Fourier) (left) On the whole grid
  $\widehat{\cl P}_M$ for $N_\theta = 180$. (right) The slice $\theta = 80^\circ$.}
 \label{fig:Deflecto_Measure}
\end{figure}

\section{NUMERICAL METHODS} 
\label{sec:gcp}

This section provides first an overview of the Chambolle-Pock
primal-dual algorithm \cite{ChambollePock_2011}
used for solving \eqref{eq:inv-problem-deflecto-solving}
and hence
recovering the RIM from the noisy FDM. The algorithm is then
generalized into a product space optimization that allows the
minimization of more than two convex functions. We show next how to use this generalization to solve our ODT
problem under multiple constraints, before to briefly present an adaptive selection of the
  optimization parameters due to \cite{goldstein2013} offering faster convergence speeds.

\subsection{Chambolle-Pock Primal-Dual Algorithm}

The Chambolle-Pock (CP) algorithm (Algorithm 1
in~\cite{ChambollePock_2011}) is an efficient, robust and flexible
algorithm that allows to solve minimization problems of the form:
\begin{equation}
\label{eq:primal_CP}
	\min_{\bs x \in \Rbb^N} \ F(\bs K \bs x) + G(\bs x),
\end{equation}
for a linear operator $\bs K : \Rbb^N \rightarrow \Rbb^W$ and any
variable $\bs x \in \Rbb^N$. The functions $F$ and $G$ belong to the
functional sets $\Gamma^0(\Rbb^W)$ and $\Gamma^0(\Rbb^N)$,
respectively.

In short, CP solves the primal problem described above simultaneously
with its dual problem, until the difference between their objective
functions -- the primal-dual gap -- is zero.

For any variable $\bs v \in \Rbb^W$, the primal-dual optimization can
be formulated as the following saddle-point problem:
\begin{equation}
\label{eq:saddle-point-CP}
	\min_{\bs x \in \Rbb^N} \max_{\bs v \in \Rbb^W} \langle \bs K \bs x, \bs v \rangle + G(\bs x) - F^\star(\bs v),
\end{equation}
where $F^\star$ is the convex conjugate of function $F$ provided by the Legendre transform $F^\star(\bs v) = \max_{\bar{\bs v} \in \Rbb^W} \langle \bs v, \bar{\bs v} \rangle - F(\bar{\bs v})$.

Using the Legendre transform we obtain the primal version described in
\eqref{eq:primal_CP} and also the dual version as follows:
\begin{equation}
\label{eq:dual_CP}
\max_{\bs v \in \Rbb^W} -F^\star(\bs v) - G^\star(-\bs K^* \bs v),
\end{equation}
where $\bs K^*$ is the exact adjoint of the operator $\bs K$, such that $ \langle \bs K \bs x, \bs v \rangle \ = \ \langle \bs x, \bs K^* \bs v \rangle$.

The CP algorithm is defined by the following iterations:
\begin{equation}
\label{eq:CP_Algorithm}
\begin{cases}
  \bs v^{(k+1)}&= \prox_{\nu F^\star} (\bs v^{(k)} + \nu \bs K \bar{\bs x}^{(k)}),\\
  \bs x^{(k+1)}&= \prox_{\mu G} (\bs x^{(k)} - \mu \bs K^\star \bs v^{(k+1)}),\\
  \bar{\bs x}^{(k+1)}&= \bs x^{(k+1)} + \vartheta (\bs x^{(k+1)} - \bs x^{(k)}).
\end{cases}
\end{equation}

The quantity $\prox_f$ denotes the \emph{proximal operator} of a
convex function $f \in \Gamma^0(\cl V)$ for a certain finite
dimensional vector space $\cl V$
\cite{combettes2011proximal,moreau1962fonctions}. This operator is
defined as:
$$ \prox_{f} \bs \zeta\ :=\ \argmin_{\bs \zeta'\in \cl V} 
f(\bs \zeta') + \tfrac{1}{2} \| \bs \zeta - \bs \zeta'\|^2,\qquad \bs
\zeta \in \cl V.$$ The proximal operator admits the use of non-smooth
convex functions as the TV norm, making the algorithm suitable to
solve the TV-$\ell_2$ problem described in
Sec.~\ref{sec:reconstruction}.

Most numerical methods require operator $\bs K$ being in a tight
frame, which is not the case for our sensing operator $\bs \Phi$. The
CP algorithm reduces the convergence requirements, since we only need
to tune the step sizes $\mu$ and $\nu$ such that the condition $ \mu
\nu \bbb \bs K \bbb^2 < 1 $ is true for any operator $\bs
K$. Moreover, as presented in Sec.~\ref{sec:adaptive-odt}, these
two parameters can be adaptively tuned during the iterations in order
to reach faster convergence \cite{goldstein2013}.

There is an easy way to estimate the convergence of the
  primal-dual algorithm~\eqref{eq:CP_Algorithm}. This relies on
  evaluating the \emph{primal and dual residuals} defined by the
  subgradient of the saddle-point problem \eqref{eq:saddle-point-CP}
  with respect to the primal and dual variables, namely, $P(\bs x, \bs v) := \partial G(\bs x) +
  \bs K^* \bs v$ and $D(\bs x, \bs v) := \partial F^\star(\bs v) - \bs K \bs x$, respectively
  ~\cite{goldstein2013}.

By definition, for an optimal point ($\tilde{\bs x},\tilde{\bs
    v}$) of \eqref{eq:saddle-point-CP}, zero must belong to these
  residuals and, therefore, by tracking the size of these residuals we
  can perform an analysis of the algorithm convergence. Explicit
  formulas can be obtained for the primal and dual residuals by
  observing the optimality conditions of \eqref{eq:CP_Algorithm} at
  each iteration. This provides
\begin{align*}
	0 & \in \mu \partial G(\bs x^{(k+1)}) + \bs x^{(k+1)} - \bs x^{{(k)}} + \mu \bs K^* \bs v^{{(k+1)}},\\ 
	0 & \in \nu \partial F^\star(\bs v^{(k+1)}) + \bs v^{(k+1)} - \bs v^{{(k)}} - \nu \bs K (\bs x^{(k)}(1 + \vartheta) - \vartheta \bs x^{(k-1)}). 
\end{align*}
or, equivalently, $\bs P^{(k+1)} \in P(\bs x^{(k+1)},\bs
v^{{(k+1)}})$ and $\bs D^{(k+1)} \in D(\bs x^{(k+1)},\bs v^{{(k+1)}})$
with the primal and dual residual vectors 
\begin{equation}
  \label{eq:PD_residuals_CP}
  \begin{array}{ll}
    \bs P^{(k+1)} = \tinv{\mu}(\bs x^{(k)} - \bs x^{{(k+1)}}),\\
    \bs D^{(k+1)} = \tinv{\nu}(\bs v^{(k)} - \bs v^{{(k+1)}}) +
    \bs K\big((1 + \vartheta)\bs x^{(k)} -\!\vartheta \bs x^{(k-1)} -
    \bs x^{(k+1)}\big).
  \end{array} 
\end{equation}
Goldstein \emph{et al} \cite{goldstein2013} show experimentally that a converging algorithm respects
\begin{equation}
\label{eq:convergence_condition}
	\lim_{k \rightarrow \infty} \| \bs P^{(k)} \|_*^2 + \| \bs D^{(k)} \|_*^2 = 0,
\end{equation}
for some norm $\|\cdot\|_*$ (\eg $\ell_1$). We will analyze the same convergence measure during our experiments in
Sec.~\ref{sec:experiments}.

\subsection{Product Space Optimization}

In this paper, we are interested in minimization problems containing
more than two convex functions. In particular, we aim at solving the
general optimization
$$ \min_{\bs x \in \bb R^{N}} \sum_{j=1}^{p} F_j(\bs K_j \bs x) + H(\bs x), $$
with $\bs K_j : \Rbb^N \rightarrow \Rbb^{W_j}$ and $p+1$ the number of
convex functions. Such a problem does not allow the direct use of the
CP algorithm as described before. However, it is easy to adapt it by
considering a $p$-times expanded optimization space $\Rbb^{pN}$. This
space is composed of $\bs x' = (\bs x_1^T, \cdots, \bs x_p^T)^T \in
\Rbb^{pN}$, with $\bs x_j \in \Rbb^N$. In this context, we define
$p-1$ bisector planes $\Pi_{1,j} = \{ \bs x' \in\Rbb^{pN}: \bs x_1 =
\bs x_j,\ 2 \leq j \leq p\}$ in order to work with the following
equivalent primal problem:
\begin{equation}
\label{eq:primal_expanded}
 \min_{\bs x' \in \bb R^{pN}} \sum_{j=1}^{p} F_j(\bs K_j \bs x_j) + \sum_{j=2}^{p} \imath_{\Pi_{1,j}} (\bs x') + H(\bs x_1). 
\end{equation}

The CP-shape \eqref{eq:primal_CP} is thus recovered by working in this
bigger space $\Rbb^{pN}$ and by setting $F(\bs s) = \sum_{j=1}^{p}
F_j(\bs s_j)$, with $\bs s = (\bs s_1^T, \cdots, \bs s_p^T)^T \in
\Rbb^{W=\sum_j W_j}$ and $\bs s_j \in \Rbb^{W_j}$, $\bs K = \diag(\bs
K_1, \cdots, \bs K_p) \in \Rbb^{W \times pN}$ and $G(\bs x') =
\sum_{j=2}^{p} \imath_{\Pi_{1,j}} (\bs x') + H(\bs x_1)$.

In this expanded optimization space, the equivalent dual problem is
written (see Appendix~\ref{sec:conv-conj-funct}):
$$ \max_{\bs s \in \bb R^{W}} \ - \sum_{j=1}^{p} F^\star_j(\bs s_j) - H^\star\bigg(- \sum_{j=1}^{p} \bs K_j^* \bs s_j \bigg). $$

For the functions described above, it is easy to see that for any $\nu
> 0$ and $\bs \zeta = (\bs \zeta_1^T, \cdots, \bs \zeta_p^T)^T \in \Rbb^W$ we have:
$$ \prox_{\nu F^\star} \bs \zeta = \begin{pmatrix}\prox_{\nu F_1^\star} \bs \zeta_1\\\vdots\\\prox_{\nu F_p^\star} \bs \zeta_p\end{pmatrix}, $$
and, for any $\mu > 0$ we have (more details in Appendix~\ref{app:proxg}):
\begin{displaymath}
  \prox_{\mu G} \bs \zeta = 
  (\bs \Id^N, \,\cdots, \bs \Id^N)^T
  \prox_{\frac{\mu}{p} H} \big( \tfrac{1}{p}{\textstyle \sum_{j}} \bs \zeta_j\big).
\end{displaymath}

\subsection{ODT numerical reconstruction}

Now we need to transform our TV-$\ell_2$ problem into an expanded form
in order to use the CP algorithm. Having two constraints, the
optimization space needs to be expanded by $p = 2$. Using
\eqref{eq:primal_expanded}, we can reformulate the primal problem
from \eqref{eq:inv-problem-deflecto-solving} as
\begin{equation}
  \label{eq:primal_OD}
        \min_{\bs x'=(\bs x_1,\bs x_2) \in \bb R^{2N}}\ \| \bs \nabla \bs x_1 \|_{2,1} + \imath_{\cl C} (\bs \Phi \bs x_2) + \imath_{\cl P_0} (\bs x_1) + \imath_{\Pi_{1,2}} (\bs x'),
\end{equation}
where $\cl C=\{\bs v\in\Rbb^M:\|\bs y - \bs v\|\leq \varepsilon\}$ and
$\cl P_0=\{\bs u\in\Rbb^N:u_{i} \geq 0 \ \textrm{if} \ i \in
\textrm{int} \ \Omega; \ u_{i} = 0 \ \textrm{if} \ i \in \partial
\Omega\}$. We show easily that \eqref{eq:primal_OD} has the shape of
\eqref{eq:primal_expanded} with $F_1 (\bs s_1) = \| \bs s_1 \|_{2,1}$
for $\bs s_1 \in \Rbb^{N \times 2} \simeq \Rbb^{2N}$, $F_2(\bs s_2) = \imath_{\cl C} (\bs s_2)$ for $\bs s_2 \in
\Rbb^M$, $H(\bs x_1) = \imath_{\cl P_0} (\bs x_1)$, $\bs K_1 = \bs
\nabla$ and $\bs K_2 = \bs \Phi = \bs \Theta \bs D \bs
F\in\Rbb^{M\times N}$.

For building the dual problem, we need the conjugate functions of
$F_1$, $F_2$ and $H$, which are easily computed using the Legendre
transform. As a matter of fact, $F_1^\star$ is the indicator function onto
the convex set $\cl Q = \{ \bs q = (\bs q_1, \bs q_2)\in \Rbb^{N
  \times 2} : \| \bs q \|_{2,\infty} \leq 1\}$ with the mixed
$\ell_\infty/\ell_2$-norm defined as $\| \bs q \|_{2,\infty} = \max_k
\sqrt{(\bs q_1)_k^2 + (\bs q_2)_k^2}$~\cite{ChambollePock_2011}. The
conjugate function $F_2^\star$ is computed as (see
Appendix.~\ref{sec:conv-conj-indic})
$$ F_2^\star(\bs s_2) = \imath_{\cl C}^\star(\bs s_2) = (\bs s_2)^T \bs y + \varepsilon \| \bs s_2 \|,$$ 
while the convex conjugate of $H$ is simply $H^\star(\bs u) = \imath_{\cl D} (- \bs u)$, where $\cl D=\{\bs u\in\Rbb^N:u_{i} \geq 0 \ \textrm{if} \ i \in \textrm{int} \ \Omega\}$.

The dual optimization problem is thus defined as:
$$ 
\max_{\bs s \in \Rbb^{2N + M}} -\imath_{\cl Q} (\bs s_1) - \langle \bs
s_2, \bs y \rangle - \varepsilon \| \bs s_2 \| - \imath_{\cl D} (\bs
\nabla^* \bs s_1 + \bs \Phi^* \bs s_2). 
$$

In order to apply \eqref{eq:CP_Algorithm}, we must compute the
proximal operators of $F_1^\star$, $F_2^\star$ and $H$. The one of $F_1^\star$ is
given by~\cite{ChambollePock_2011}
$$ (\prox_{\nu F_1^\star} \bs \zeta)_{k} = \tfrac{\big((\bs
\zeta_1)_{k},(\bs \zeta_2)_{k}\big)}{\max(1, \sqrt{(\bs \zeta_1)_{k}^2
+ (\bs \zeta_2)_k^2})},\quad \bs \zeta =(\bs \zeta_1,\bs \zeta_2)\in\Rbb^{N\times 2}.$$ 

The proximal operator of $F_2^\star$ is determined via the proximal
operator of $F_2$ by means of the conjugation property defined
in~\cite{combettes2011proximal}:
$$\textstyle\prox_{\nu F_2^\star} \bs s_2 = \bs s_2 - \nu
\prox_{\frac{1}{\nu}F_2} \frac{1}{\nu} \bs s_2.$$
 
The proximal operator of $F_2$ is given by the projection onto the
convex set $\cl C$:
$$ 
\textstyle\prox_{\frac{1}{\nu} F_2} \bs s_2 = \bs y + (\bs s_2 - \bs y) \min
(1, \frac{\varepsilon}{ \| \bs s_2 - \bs y \| }).
$$
The proximal operator of the function $H$ represents a projection onto
the positive orthant with zero borders:
\begin{displaymath}
 \prox_{\frac{\mu}{2} H} \bs \xi = \textrm{proj}_{\cl P_0} \bs \xi = 
 \begin{cases}
   \big( \xi_i \big)_+ & \text{if } i \in \intset \Omega,\\
   0 & \text{if } i \in \partial\Omega, 
 \end{cases},\qquad \bs \xi \in \Rbb^N.
\end{displaymath}

Finally, making use of the above computations and taking $\vartheta =
1$, the CP algorithm applied to our TV-$\ell_2$ problem in ODT can be
reduced to (see Appendix~\ref{app:FinalFormulation})
\begin{equation}
\label{eq:CP_Algorithm2}
\begin{cases}
  \bs s_1^{(k+1)}&= \prox_{\nu F_1^\star} \big(\bs s_1^{{(k)}} + \nu \bs \nabla \bar{\bs x}^{(k)} \big),\\
  \bs s_2^{(k+1)}&= \prox_{\nu F_2^\star} \big(\bs s_2^{{(k)}} + \nu \bs \Phi \bar{\bs x}^{(k)} \big),\\
  \bs x^{(k+1)}&= \textrm{proj}_{\cl P_0} \big(\bs x^{(k)} - \frac{\mu}{2} (\bs \nabla^* \bs s_1^{{(k+1)}} + \bs \Phi^* \bs s_2^{{(k+1)}}) \big),\\
  \bar{\bs x}^{(k+1)}&= 2 \bs x^{(k+1)} - \bs x^{(k)}.  
\end{cases}
\end{equation}
In our experiments, the variables $\bar{\bs x}^{(0)}$, $\bs s_1^{{(0)}}$ and $\bs s_2^{{(0)}}$ are initialized to zero vectors and the variable $\bs x^{(0)}$ is initialized with the FBP solution as $\bs x^{(0)} = \imapv_{\rm FBP}$.

The algorithm presented in \eqref{eq:CP_Algorithm2} stops when it
achieves a stable behavior, \ie when $\| \bs x^{(k+1)} - \bs x^{(k)}
\| / \| \bs x^{(k)} \| \leq \mathtt{Th}$. The threshold $\mathtt{Th}$
is defined for ODT in the next section. In parallel, we analyze the
convergence of the algorithm by means of the primal and dual residuals as
described in \eqref{eq:PD_residuals_CP}. For the iterates in \eqref{eq:CP_Algorithm2}, the primal and dual residuals are defined as:
\begin{align}
\label{eq:PD_residuals_CP_ODT}
	\bs P^{(k+1)} & = \tfrac{2}{\mu} (\bs x^{(k)} - \bs x^{{(k+1)}}), \notag \\
	\bs D^{(k+1)} & = \begin{pmatrix} \bs D_1^{(k+1)} \\ \bs D_2^{(k+1)} \end{pmatrix} = \begin{pmatrix} \tinv{\nu} (\bs s_1^{(k)} - \bs s_1^{{(k+1)}}) + \bs \nabla (2 \bs x^{(k)} - \bs x^{(k-1)} - \bs x^{(k+1)}) \\ \tinv{\nu} (\bs s_2^{(k)} - \bs s_2^{{(k+1)}}) + \bs \Phi (2 \bs x^{(k)} - \bs x^{(k-1)} - \bs x^{(k+1)}) \end{pmatrix}. 
\end{align}

In order to guarantee the convergence of the algorithm, \ie to ensure that $\bs x^{(k)}$ converges to the solution of (\ref{eq:inv-problem-deflecto-solving}) when $k$ increases, we need to set $\mu$ and $\nu$ such that $\mu \nu \bbb \bs K \bbb^2 < 1$. The induced norm of the operator ($\bbb \bs K \bbb$) was computed as explained in~\cite{Sidky2011} using the standard power iteration algorithm to calculate the largest singular value of the associated matrix $\bs K$.

\subsection{Adaptive optimization procedure}
\label{sec:adaptive-odt}

Until now, a non-adaptive optimization method, \ie with
  constant step-sizes $\mu$ and $\nu$, has been considered. However,
  such a procedure often presents slow convergence as presented in
  Section~\ref{sec:experiments}. Therefore, motivated by the recent
  work by Goldstein et al.~\cite{goldstein2013}, an adaptive
  version of the CP algorithm is used for the ODT reconstructions (See
  Algorithm \ref{alg:adaptive_CP_ODT}). While this variant also depends on a
  couple of parameters that adjust the algorithm adaptivity,
  these are more naturally related to the characteristics of the
  inverse problem solved by this optimization.

In this adaptive approach, the stepsize update are tuned to the
  size of the primal and dual residuals at each iteration. If the
  primal residual is large with respect to the dual residual, then the
  primal stepsize $\mu^{(k)}$ is increased by a factor $\tinv{1 -
    \rho^{(k)}}$, and the dual stepsize $\nu^{(k)}$ is decreased by
  the same factor. If the dual residual is large with respect to the
  primal residual, then the primal stepsize is decreased and the dual
  stepsize is increased. The parameter $\rho^{(k)}$ is a constant that
  controls the adaptivity level of the method, and it is updated as
  $\rho^{(k+1)} = \beta \rho^{(k)}$, for some $\beta < 1$. In
  Algorithm~\ref{alg:adaptive_CP_ODT}, the parameter $\Gamma > 1$ is
  used to compare the sizes of the primal and dual residuals and the
  scaling parameter $c > 0$, which depends on the image expected
  dynamics, is used to balance the residuals. The specific values
  selected for the parameters of Algorithm~\ref{alg:adaptive_CP_ODT} are those recommended by \cite{goldstein2013}. 
\begin{algorithm}[t]
\footnotesize
\newcommand{\sComment}[1]{\vspace{1mm}\Statex\quad\ \emph{\underline{#1}}:\vspace{1mm}}
  \caption{Adaptive ODT reconstruction
    \label{alg:adaptive_CP_ODT}}
  \begin{algorithmic}[1]
    \Require{\parbox[t]{10cm}{$\bar{\bs x}^{(0)} = \bs s_1^{(0)} = \bs s_2^{(0)} = \bs
      0$, $\bs x^{(0)} = \imapv_{\rm FBP}$, $\nu^{(0)} = \mu^{(0)} =
      0.9/\bbb \bs K \bbb$, $\Gamma = 1.1$, $\rho^{(0)} = 0.5$,
      $\beta = 0.95$, $c = 1000$, MaxIter = $50\times10^4$.}\vspace{2mm}}
			\For{$k = 1$ to MaxIter}
                        \sComment{Compute the CP iterations \eqref{eq:CP_Algorithm2}}
                        	\State $\bs s_1^{(k+1)} = \prox_{\nu^{(k)} F_1^\star} \big(\bs s_1^{(k)} + \nu^{(k)} \bs \nabla \bar{\bs x}^{(k)} \big)$
\State $\bs s_2^{(k+1)} = \prox_{\nu^{(k)} F_2^\star} \big(\bs s_2^{(k)} + \nu^{(k)} \bs \Phi \bar{\bs x}^{(k)} \big)$
	\State $\bs x^{(k+1)} = \textrm{proj}_{\cl P_0} \big(\bs x^{(k)} - \frac{\mu^{(k)}}{2} (\bs \nabla^* \bs s_1^{{(k+1)}} + \bs \Phi^* \bs s_2^{{(k+1)}}) \big)$
				\State $\bar{\bs x}^{(k+1)} = 2 \bs
                                x^{(k+1)} - \bs x^{(k)}$
                                \vspace{2mm}\sComment{Compute the primal and dual residual norms \eqref{eq:PD_residuals_CP_ODT}}
                                \State $p^{(k+1)} = \left\| \frac{2}{\mu^{(k)}} (\bs x^{(k)} - \bs x^{{(k+1)}}) \right\|_1$ 
				\State $d^{(k+1)} = 
                                \left\| \begin{pmatrix}
                                      \tinv{\nu^{(k)}} (\bs s_1^{(k)}
                                      - \bs s_1^{{(k+1)}}) + \bs
                                      \nabla (\bar{\bs x}^{(k)} - \bs x^{(k+1)}) \\
                                      \tinv{\nu^{(k)}} (\bs s_2^{(k)}
                                      - \bs s_2^{{(k+1)}}) + \bs \Phi
                                      (\bar{\bs x}^{(k)} -
                                      \bs x^{(k+1)}) \end{pmatrix}
                                  \right\|_1$ 
                                \vspace{2mm}\sComment{Parameters update}
				\If{$p^{(k+1)} > c d^{(k+1)} \Gamma$}
                                {\quad $\rhd$ \emph{Primal residual is larger than dual}}\vspace{2mm}	 
					\State \vspace{2mm} $\mu^{(k+1)} =
                                        \mu^{(k)}(1-\rho^{(k)})$;\quad
                                        $\nu^{(k+1)} =
                                        \nu^{(k)}/(1-\rho^{(k)})$;\quad
                                        $\rho^{(k+1)} =
                                        \rho^{(k)}\beta$ 
				\ElsIf{$p^{(k+1)} < c d^{(k+1)}/\Gamma$}{\quad $\rhd$ \emph{Dual residual is larger than primal}}\vspace{2mm}
					\State \vspace{2mm} $\mu^{(k+1)} =
                                        \mu^{(k)}/(1-\rho^{(k)})$;\quad
                                        $\nu^{(k+1)} =
                                        \nu^{(k)}(1-\rho^{(k)})$;\quad $\rho^{(k+1)} =
                                        \rho^{(k)}\beta$ 
				\Else {\quad $\rhd$ \emph{Similar primal and dual
                                  residuals, \ie $p^{(k+1)}\in
                                  \big[c d^{(k+1)}/\Gamma,\Gamma c d^{(k+1)}\big]$}}\vspace{2mm}
					\State \vspace{2mm}
                                        $\mu^{(k+1)} = \mu^{(k)}$;
                                        \quad $\nu^{(k+1)} =
                                        \nu^{(k)}$; \quad $\rho^{(k+1)} = \rho^{(k)}$
				\EndIf
                                \sComment{Stop if stable behavior}
				\If{$\| \bs x^{(k+1)} - \bs x^{(k)} \|
                                  / \| \bs x^{(k)} \| \leq
                                  \mathtt{Th}$\ }\quad  \rm break.
				\EndIf
			\EndFor
  \end{algorithmic}
\end{algorithm}

\section{EXPERIMENTS} 
\label{sec:experiments}

In this section, the ODT reconstruction is first compared to the
common tomographic (AT) reconstruction using the FBP and ME methods. Then, the
proposed regularized reconstruction (TV-$\ell_2$) is compared with the
FBP and ME procedures on synthetic and experimental ODT data.

\subsection{Synthetic Data} 
\label{sec:synth_data}

Three kinds of discrete synthetic 2-D RIM are selected to test the
reconstruction. They are defined on a $256\!\times\!256$ pixel grid
($N=256^2$). In the first object, the RIM ($\imap$) simulates a 2-D
section of a bundle of 10~fibers of radius 8 pixels each, immersed in
an optical fluid (the background). The two media have a refractive
index difference of $\delta \imap = 12.1 \times 10^{-3}$ (see
Fig.~\ref{fig:RIM_fibers_sphere}-(left)). The second object consists in
a homogeneous ball centered in the pixel $(154,154)$ with a radius of
60 pixels, immersed in a liquid with $\delta \imap = 2.8 \times
10^{-3}$ (see Fig.~\ref{fig:RIM_fibers_sphere}-(right)). These two
objects were selected in correspondence to the available experimental
data we use for reconstruction later in this section. The third object
is the well-known Shepp-Logan phantom (see
Fig.~\ref{fig:RIM_SL}-(right)), which is a more complex image in a
``Cartoon-shape'' model.

\begin{figure}
  \centering
	\includegraphics[height=1.4in]{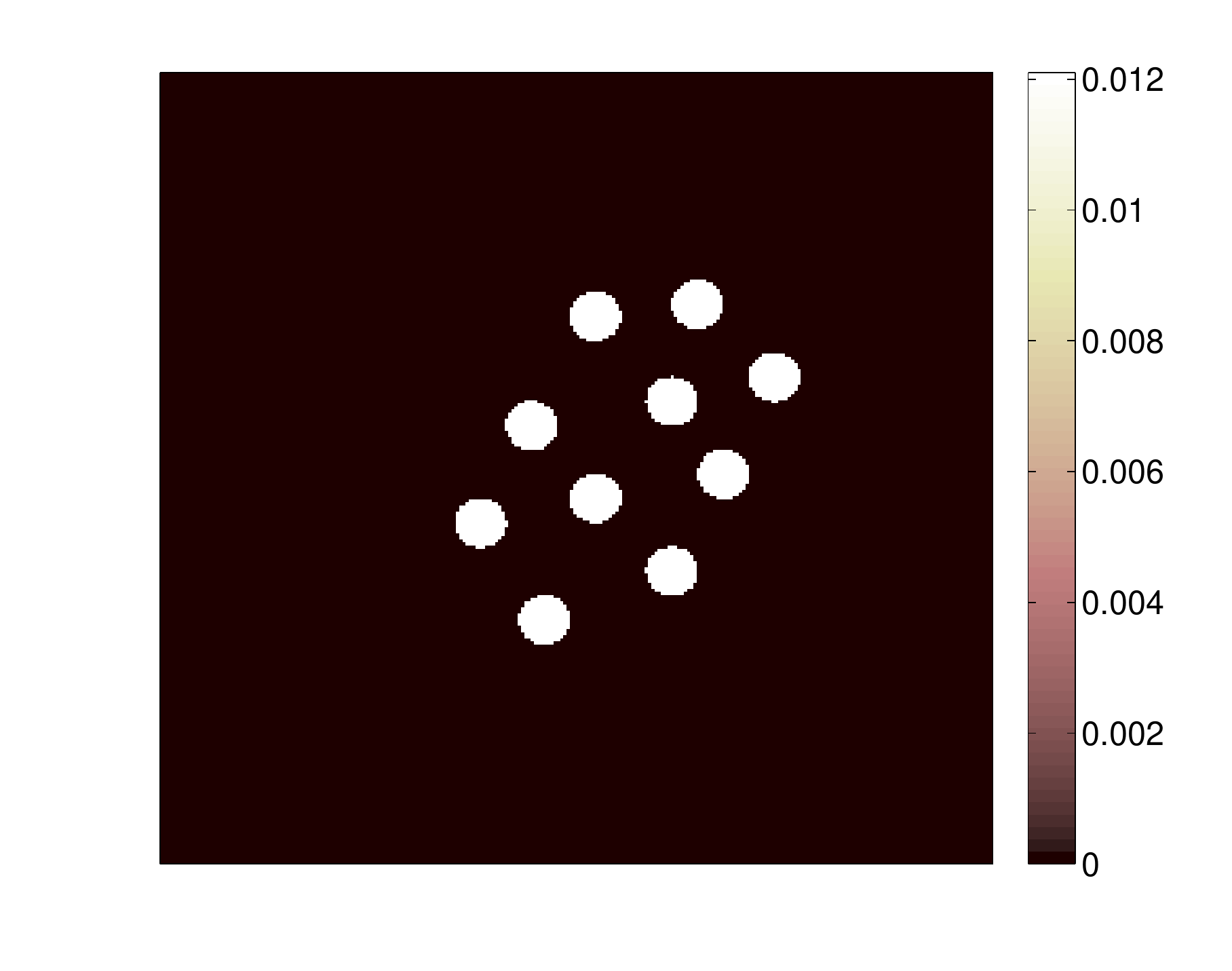}
	\hspace{0.5cm}
	\includegraphics[height=1.4in]{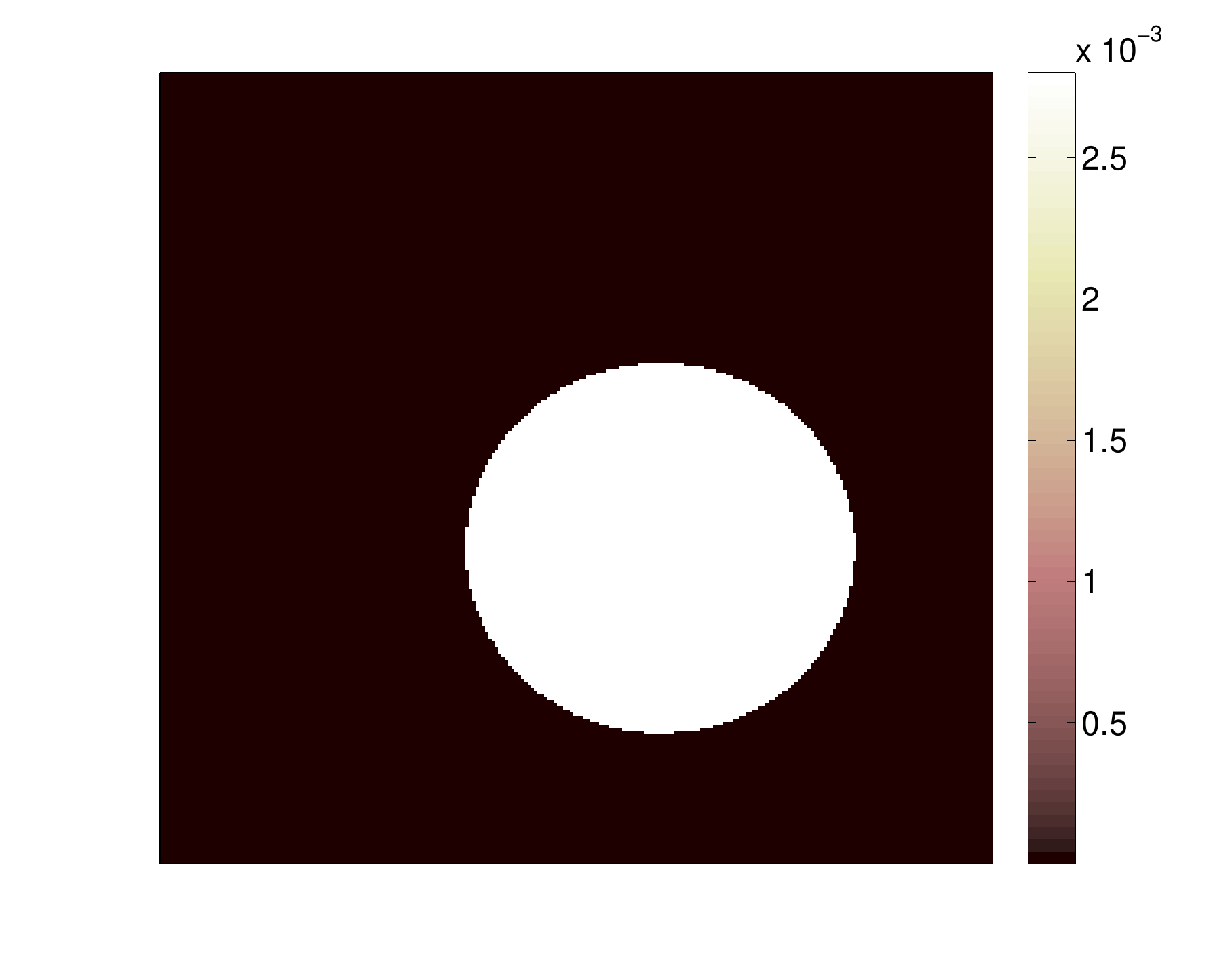}
  \caption{ Realistic refractive index map: (left) Synthetic fibers bundle and (right) Synthetic ball.}
 \label{fig:RIM_fibers_sphere}
\end{figure}

The measurements were simulated according to
\eqref{eq:disc-measures-noise} by means of the operator $\bs \Phi$,
and then, additive white Gaussian noise $\bs \eta_{\rm obs} \sim_{\rm
  iid} \cl N(0,\sigma_{\rm obs}^2)$ is added in order to
simulate a realistic ODT scenario.

The operator $\bs \Phi$ is defined as $\bs \Phi_{\epsilon} = \bs \Theta \bs D \bs
F_{\epsilon}$, with $\epsilon$ representing the distorsion of
$\bs F_{\epsilon}$ regarding a true operator $\bs F_{\rm true}$ that would
provide the actual NDFT. As discussed in Sec.~\ref{sec:forward-model},
the NDFT computational time is inversely proportional to this
parameter $\epsilon$ in $\cl O(N \log (N/\epsilon))$. Therefore, we
need to do a compromise between an accurate and an efficient
computation of the NDFT. For this reason we use two different
operators: (i) an accurate and high dimensional operator $\bs
\Phi_{\epsilon_0} = \bs \Theta \bs D \bs F_{\epsilon_0}$ for the acquisition,
with a small $\epsilon_0 = 10^{-14}$; and (ii) a less accurate but
lower dimensional operator $\bs \Phi_{\epsilon_1} = \bs \Theta \bs D \bs
F_{\epsilon_1}$ for the reconstruction, with $\epsilon_1 >
\epsilon_0$. The error caused for using a higher $\epsilon$ for the
reconstruction is taken into account in $\varepsilon_{\rm nfft}$ (see
Sec.~\ref{sec:noise_estimation}).

For each object, the ODT measurements are obtained with $N_{\tau} =
367$ according to a varying number of orientations $N_{\theta}$, which
allows to analyze the compressiveness of the reconstruction method. In
this synthetic experiment, the orientations $\theta$ are taken in $[0,
\pi)$ so that $N_\theta = 360$ corresponds to two orientations per
degree. Hereafter, we consider this last situation, \ie
$\delta\theta=\pi/360$ as a ``full observation'' scenario since, given
the considered RIM resolutions, the discrete frequency plane is almost
fully covered in this case. More generally, we say that a given
orientation number $N_\theta$ is associated to
$(100\,\tfrac{N_\theta}{N_{\rm full}}) \%$ of the full coverage,
$N_{\rm full}$ being the number of orientations for having
$\delta\theta = \pi/360$.

The reconstruction robustness with respect to the noise level has been
considered for a Measurement SNR 
${\rm MSNR} = 20 \log_{10} \| \bs \Delta \| /
\| \bs \eta \|$ taken in $\{10\,{\rm dB}, 20\,{\rm
  dB},\,\infty\}$. This last case with MSNR close to $+\infty$
corresponds to the noiseless scenario, where no Gaussian noise is
added, only the NFFT interpolation error ($\bs \eta_{\textrm{nfft}}$)
is taken into account. This actually provides a high MSNR value around
$270\,{\rm dB}$.

The reconstruction quality of $\tilde \imapv \in\{\imapv_{\rm FBP}, \imapv_{\rm ME},  \imapv_{{\rm TV-}\ell_2}\}$
 is measured using the Reconstruction SNR
(RSNR) measured by $ {\rm RSNR}\ =\ 20\log_{10} \|\bs \imap\|/\|\bs
\imap - \tilde{\imapv}\|.$

\subsubsection{Robustness comparison for AT and ODT}
\label{sec:tomo_vs_deflect}

In order to assess numerically the impact of operator $\bs D$ in ODT,
we compare the RSNR between AT and ODT in similar noisy acquisition
scenarios. The comparison is made using the FBP and ME procedures, commonly
applied in tomographic reconstructions. In ODT the mean of the image cannot be estimated but this is not taken into account by FBP and ME procedures, thus the mean of the reconstructed image was removed for the computation of the RSNR in order to correctly compare the two reconstructions. We analyze the impact of the
affine frequency $\omega$, present in ODT, via the compressiveness and
noise robustness. For this, we focus on the reconstruction of the
bundle of fibers for different number of orientations $N_{\theta} \in
\{ 4, 18, 90, 180, 360 \}$. The results are depicted in
Fig.~\ref{fig:TomoDeflect} and Fig.~\ref{fig:TomoDeflect2}.

\begin{figure}
  \centering
	\includegraphics[height=1.8in]{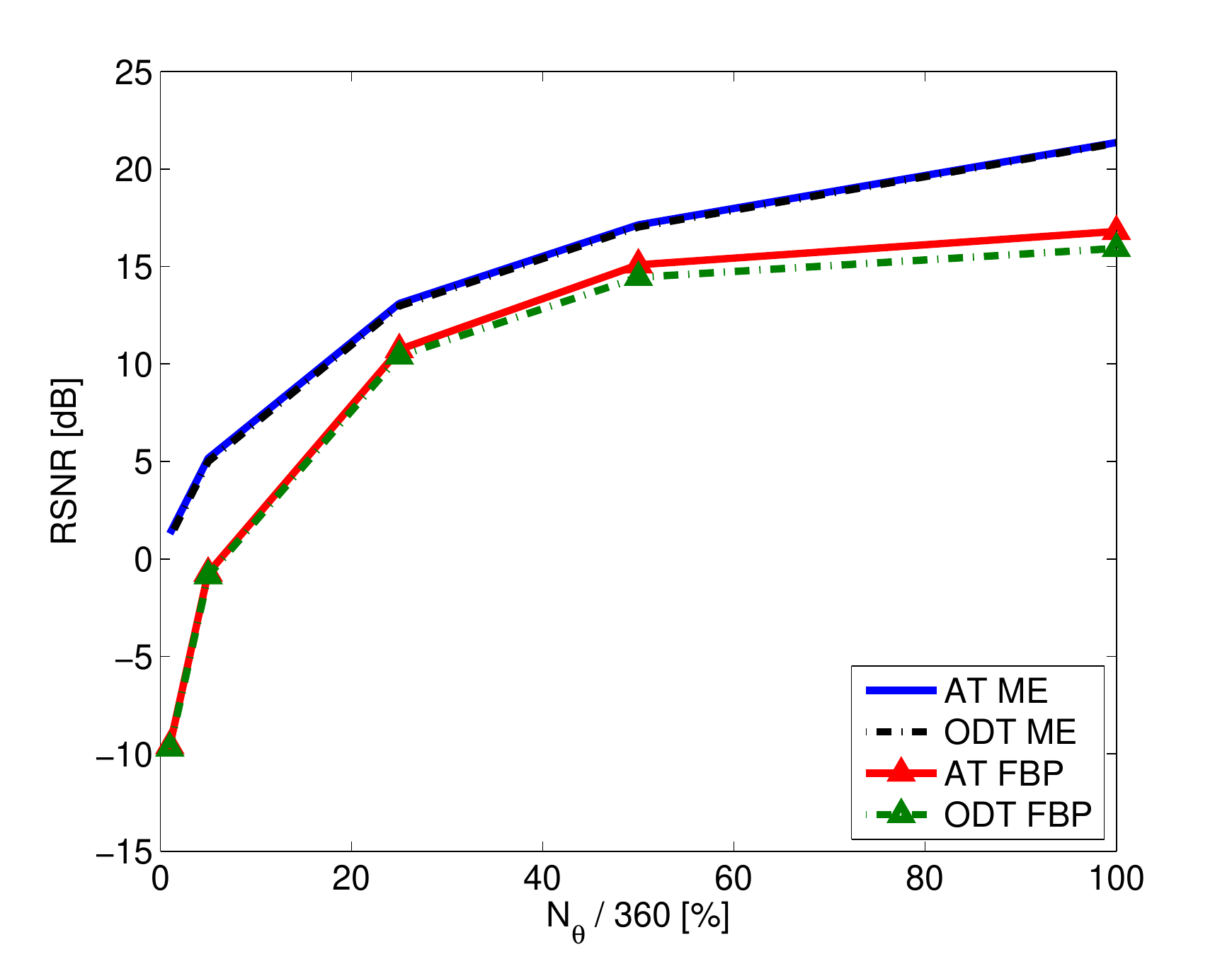}
  \caption{ Absorption Tomography vs Deflectometric Tomography for different number of orientations $N_{\theta}$ with MSNR = $\infty$. Using FBP and ME procedures.} 
 \label{fig:TomoDeflect}
\end{figure}

\begin{figure}
  \centering
	\includegraphics[height=1.8in]{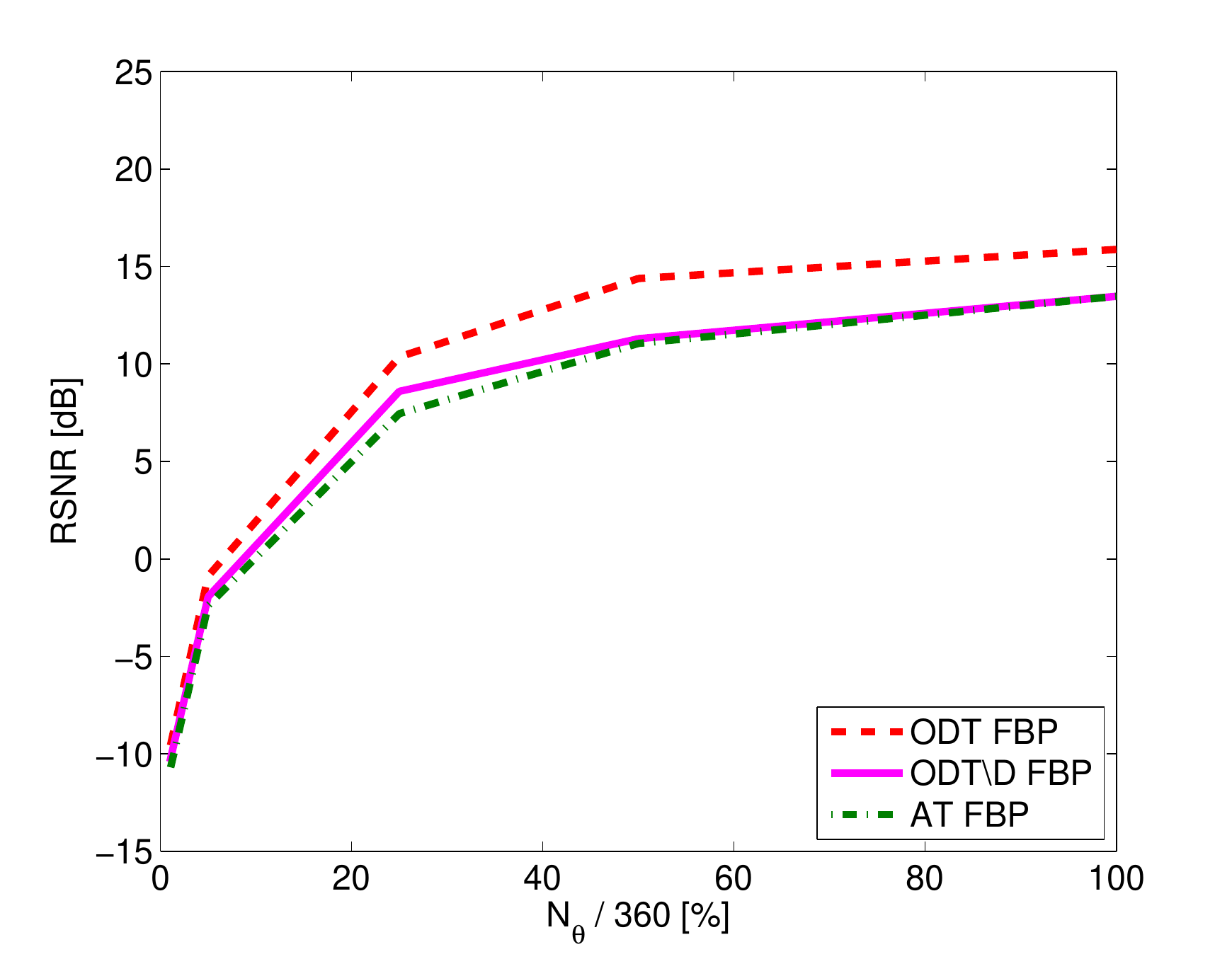}
	\includegraphics[height=1.8in]{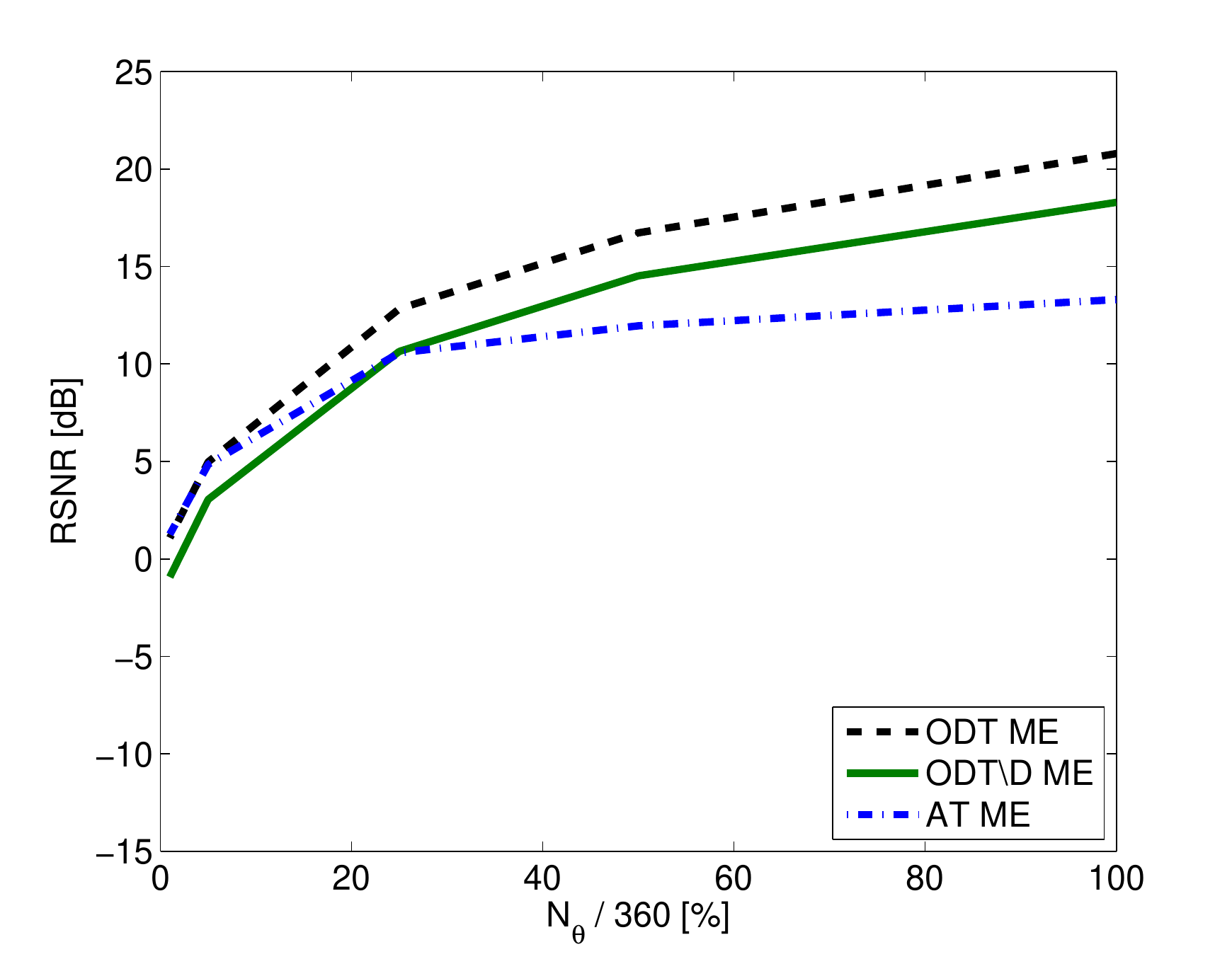}
  \caption{ Absorption Tomography vs Deflectometric Tomography for different number of orientations $N_{\theta}$ with MSNR = 20\,{\rm dB} using (left) the FBP algorithm  and (right) the ME procedure.}
 \label{fig:TomoDeflect2}
\end{figure}

In Fig.~\ref{fig:TomoDeflect} we can see that, when no Gaussian
noise is added, we obtain similar RSNR for both AT and ODT. 
The same behavior is observed for both FBP and ME reconstructions, with FBP always providing a lower RSNR. The impact
of the parameter $\omega$ is evident only in the convergence time of ME,
causing the ODT reconstruction to be $4$ times slower than the AT
one. However, when we add Gaussian noise in such a way that both data
have a MSNR $= 20\,{\rm dB}$, the AT reconstruction presents a fast
degradation while the ODT reconstruction remains almost unaffected by
the noise (see Fig.~\ref{fig:TomoDeflect2}). These results, observed for both FBP and ME,
corroborate the discussion in Sec.~\ref{sec:noise_tomo_vs_deflect}.

Following the discussion from Sec.~\ref{sec:forward-model}, we analyze
now the reconstruction of the RIM using a simplified ODT sensing model
that is close to a classical tomographic model
\eqref{eq:tomo_transform}. In Fig.~\ref{fig:TomoDeflect2} we
show a third curve that corresponds to the RIM reconstruction from a
noisy ODT sensing where the Fourier measurements are divided by the diagonal operator $\bs D$ 
(ODT$\backslash \bs D$) as in \eqref{eq:tomo_transform}. The results
were obtained using the FBP and ME procedures and for a MSNR $= 20\,{\rm
  dB}$. As it was expected, when dividing the measurements by the operator $\bs D$, the reconstruction quality decreases
significantly compared to the results obtained with the complete ODT
sensing model \eqref{eq:disc-measures-noise}. Moreover, the
regularized formulation TV-$\ell_2$ cannot be used for this ODT
reconstruction because the noise is then heteroscedastic.

\subsubsection{TV-$\ell_2$ Reconstruction method}
\label{sec:tv-l2}

The TV-$\ell_2$ reconstruction is compared with the common FBP and ME
methods. The reconstruction quality is investigated with respect to
compressiveness and noise
robustness. On the contrary of FBP and ME, the TV-$\ell_2$ method takes into account the zero mean of the image by the \emph{frontier constraint}. Therefore, the mean of the reconstructed image is only removed from the ME and FBP results. Fig.~\ref{fig:FBP-TVL2_n0n1n2_fibers} presents comparison
graphs of FBP, ME and TV-$\ell_2$ showing the RSNR vs the number of
orientations $N_{\theta} \in \{ 4, 18, 90, 180, 360 \}$
for the three noise scenarios. These results correspond to the
reconstruction of the bundle of fibers for $\rm \mathtt{Th} =
10^{-5}$.

\begin{figure}
  \centering
	\includegraphics[height=1.8in]{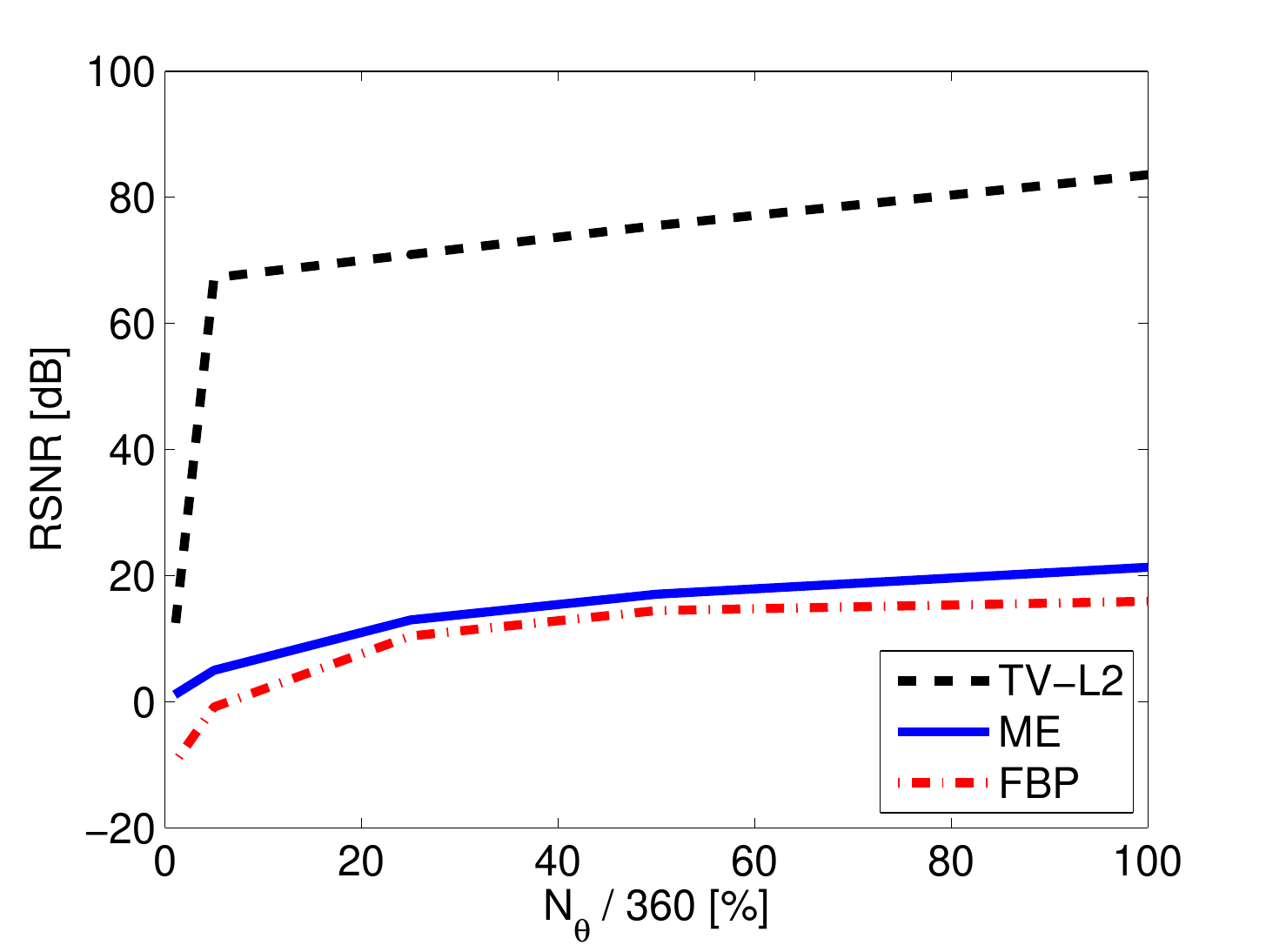}
	\hspace{0.5cm}
	\includegraphics[height=1.8in]{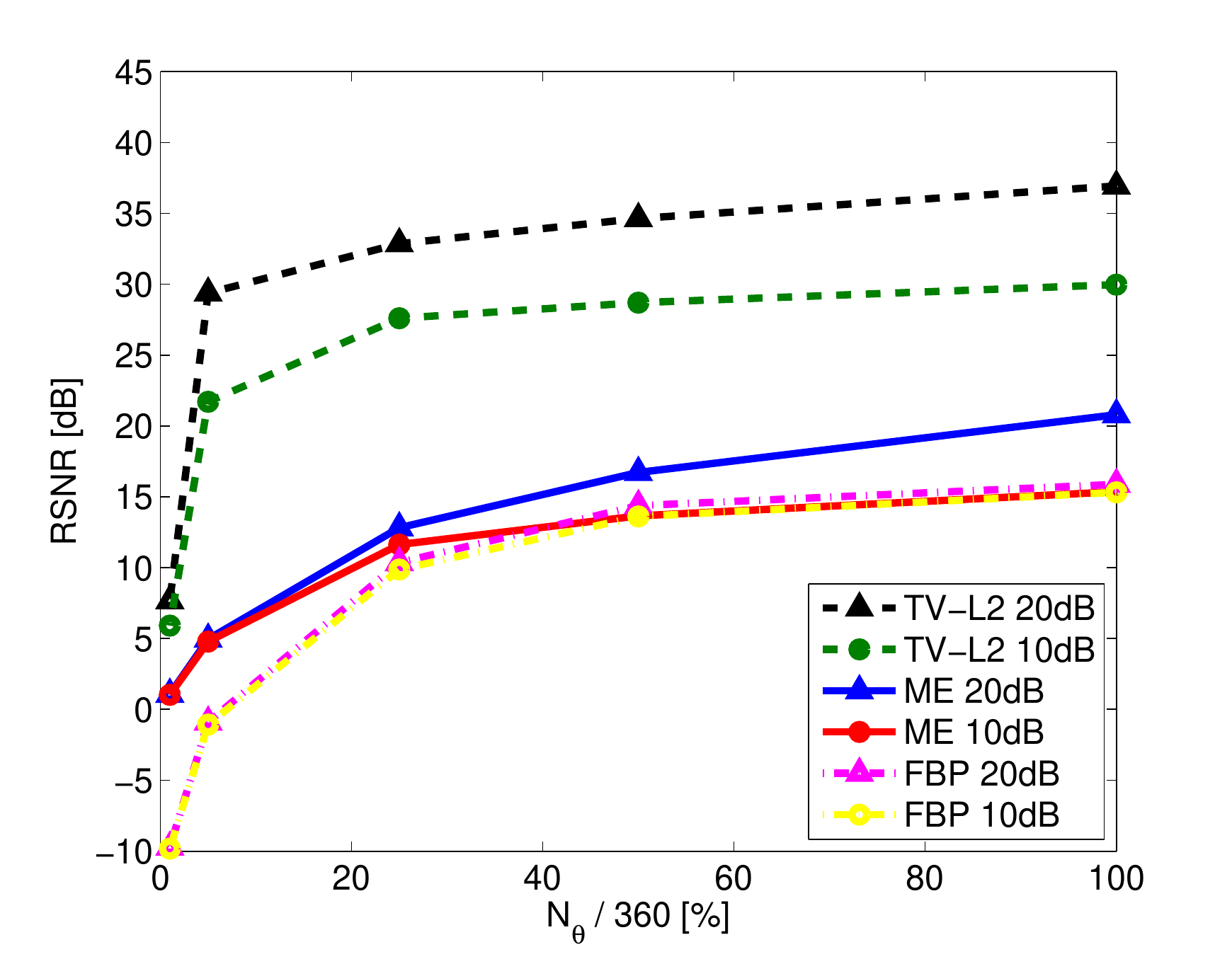}
  \caption{FBP, ME and TV-$\ell_2$ for different number of orientations $N_{\theta}$ with (left) MSNR = $\infty$ and (right) MSNR = 20\,{\rm dB} and MSNR = 10\,{\rm dB}.}
 \label{fig:FBP-TVL2_n0n1n2_fibers}
\end{figure}

In Fig.~\ref{fig:FBP-TVL2_n0n1n2_fibers}-(left) we present the
scenario without added noise, \ie MSNR $= \infty$. We can see that for
a full coverage, \ie $N_\theta = 360$, as the TV-$\ell_2$ method takes
into account the small noise coming from the NFFT interpolation error,
it provides a very good reconstruction that outperforms by 62\,{\rm
  dB} the ME reconstruction quality and by 68\,{\rm
  dB} the FBP reconstruction quality. 

Both FBP and ME methods degrade rapidly when the problem is ill-posed, \ie
when the projections space is not fully covered, whereas the
TV-$\ell_2$ method maintains a high performance. By promoting a small
TV-norm, the regularized method presents high compressiveness, as it
can be observed in the graph where a high reconstruction quality is
still achieved at only 5\% of 360 incident angles, obtaining a gain of
62\,{\rm dB} over ME and of 68\,{\rm dB} over FBP. Although the performance of the algorithm
decreases significantly for a coverage of 1\%, it still provides a
higher reconstruction quality than both ME and FBP.

The high compressiveness properties of the TV-$\ell_2$ method are
preserved when we add Gaussian noise. We are able to obtain good
quality images even for a compressive and highly noisy sensing. With
TV-$\ell_2$, at a MSNR $=10\,{\rm dB}$, we get a RSNR of 22\,{\rm dB}
for a 5\% radial coverage compared to 5\,{\rm dB} for ME and -1\,{\rm dB} for FBP. However, we
can notice how the reconstruction quality of TV-$\ell_2$ diminishes
with respect to the noiseless scenario, whereas FBP and ME are less affected
by the noise.

Fig.~\ref{fig:Reconstructed_fibers_FBPvsTVL2} presents the
resulting images when reconstructing the bundle of fibers in a
noiseless scenario and for $N_{\theta} = \{ 18, 90 \}$, which
represents, respectively, a coverage of $5\%$ and $25\%$ of the
frequency plane. The algorithm is set to stop when $\mathtt{Th} =
10^{-5}$ is reached. 

\begin{figure}
 \centering
  \subfigure[\label{fig:Reconstructed_fibers_FBP_18_n0}][$\imapv_{\rm
    FBP}$.\,$5\%$.\,RSNR\,=\,-\,0.9\,{\rm 
    dB}.]{\includegraphics[height=1.23in]{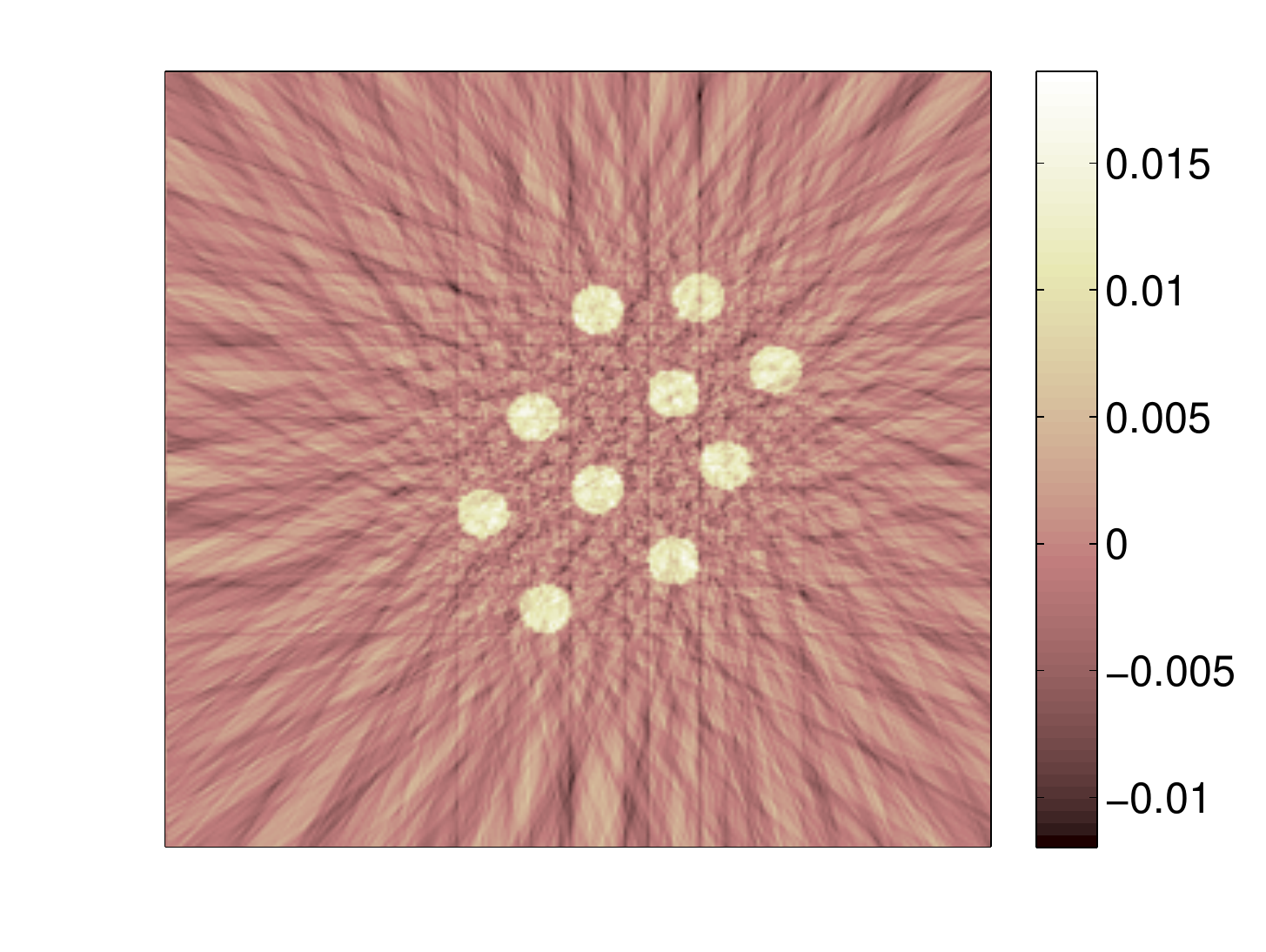}}
  \subfigure[\label{fig:Reconstructed_fibers_ME_18_n0}][$\imapv_{\rm
    ME}$.\,$5\%$.\,RSNR\,=\,5.1\,{\rm
    dB}.]{\includegraphics[height=1.23in]{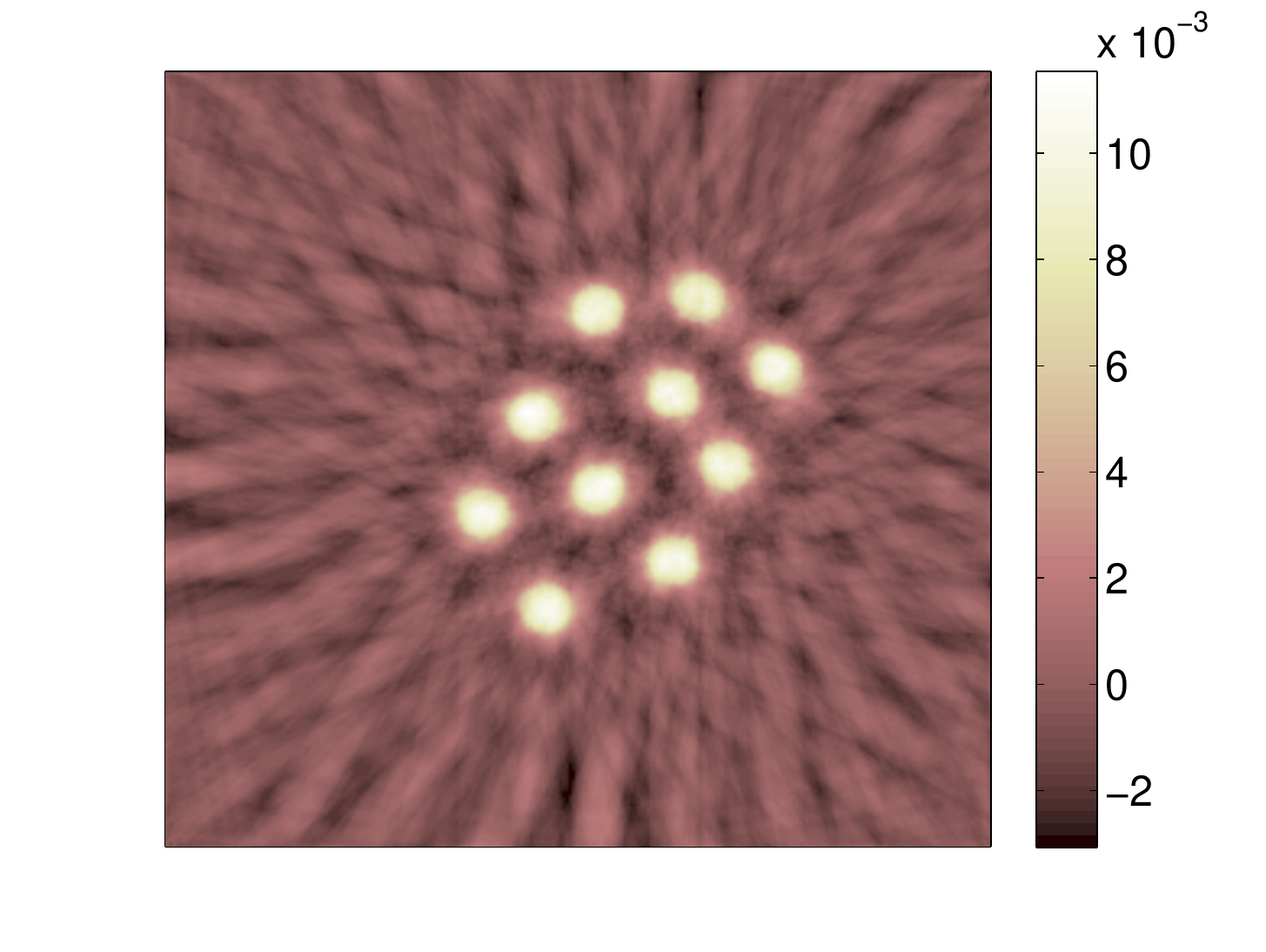}}
  \subfigure[\label{fig:Reconstructed_fibers_TVL2_18_n0}][$\imapv_{{\operatorname{
      TV-}}\ell_2}$.\,$5\%$.\,RSNR\,=\,67\,{\rm
    dB}.]{\includegraphics[height=1.23in]{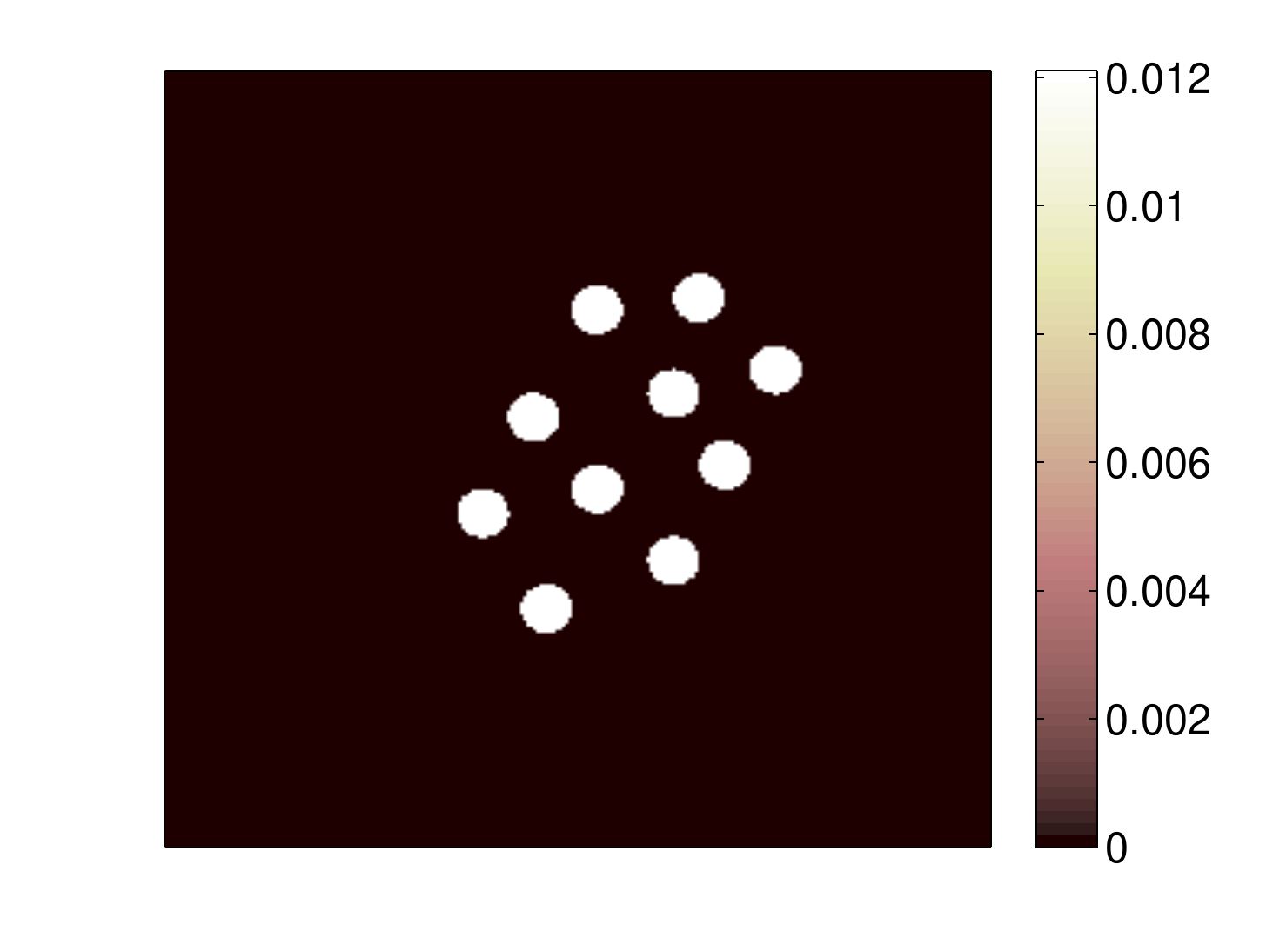}}
  \subfigure[\label{fig:Reconstructed_fibers_FBP_90_n0}][$\imapv_{\rm
    FBP}$.\,$25\%$.\,RSNR\,=\,10\,{\rm
    dB}.]{\includegraphics[height=1.23in]{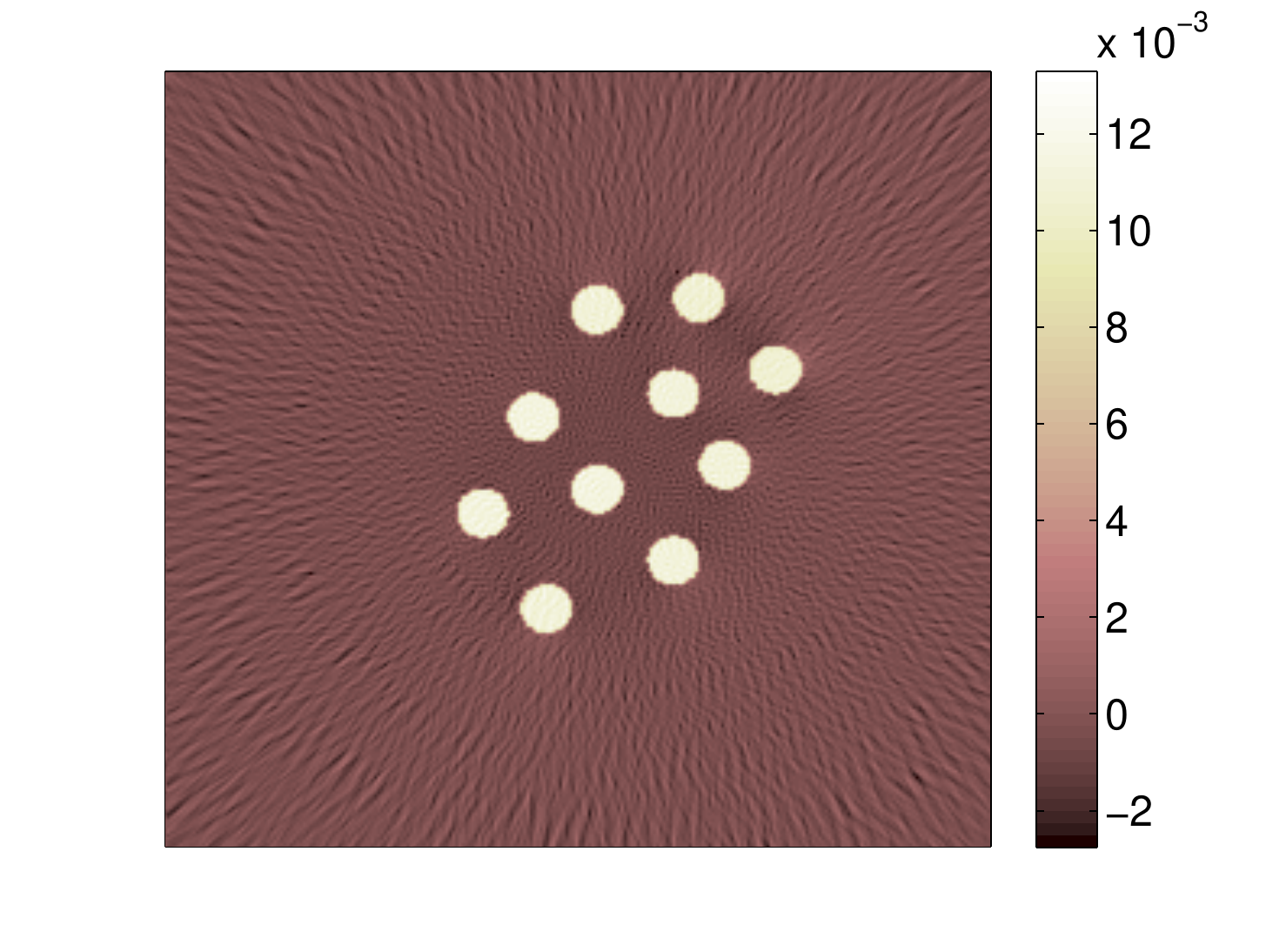}}
  \subfigure[\label{fig:Reconstructed_fibers_ME_90_n0}][$\imapv_{\rm
    ME}$.\,$25\%$.\,RSNR\,=\,13\,{\rm
    dB}.]{\includegraphics[height=1.23in]{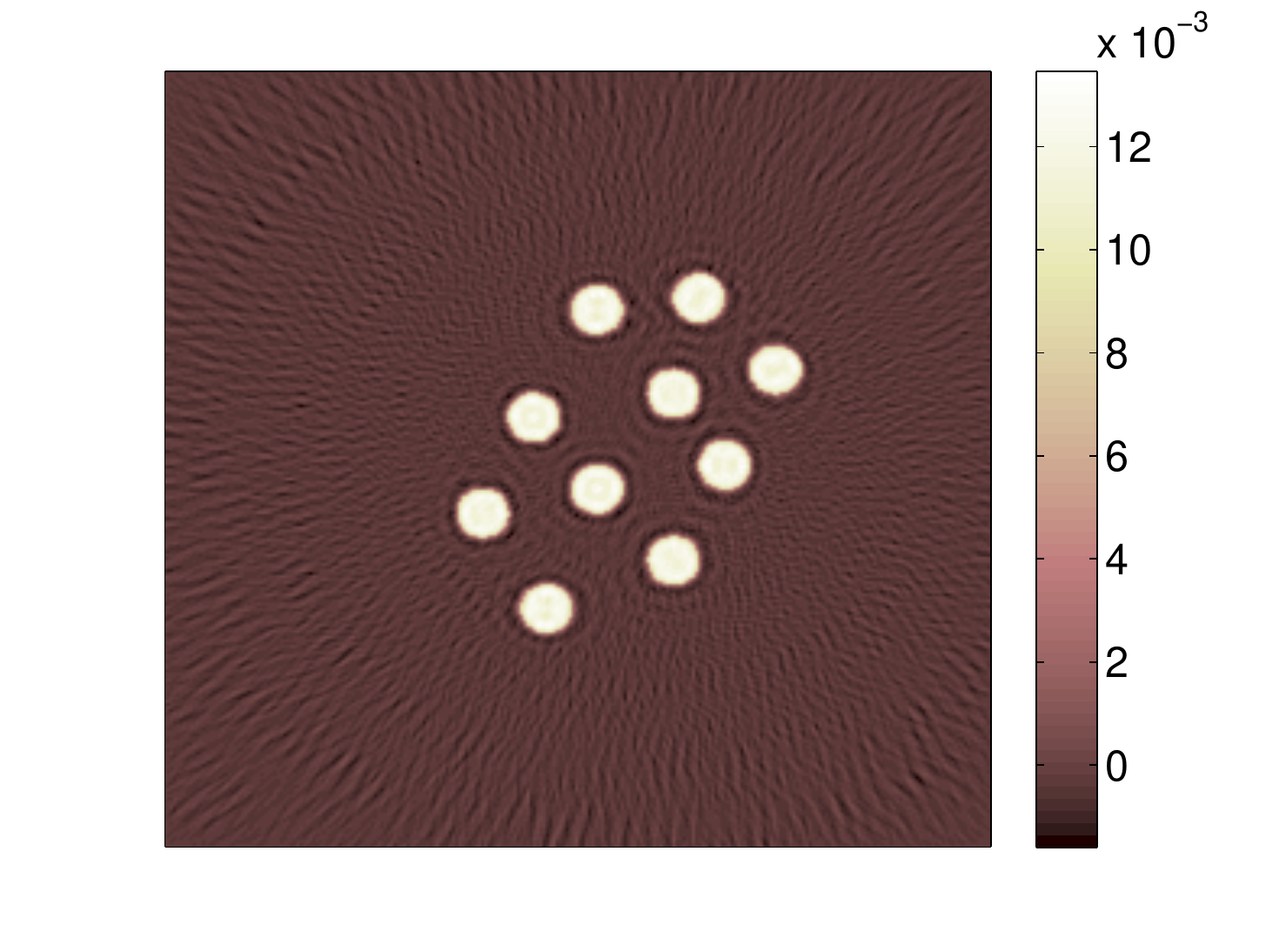}}     
  \subfigure[\label{fig:Reconstructed_fibers_TVL2_90_n0}][\footnotesize $\imapv_{{\operatorname{
      TV-}}\ell_2}$.\,$25\%$.\,RSNR\,=\,71\,{\rm
    dB}.]{\includegraphics[height=1.23in]{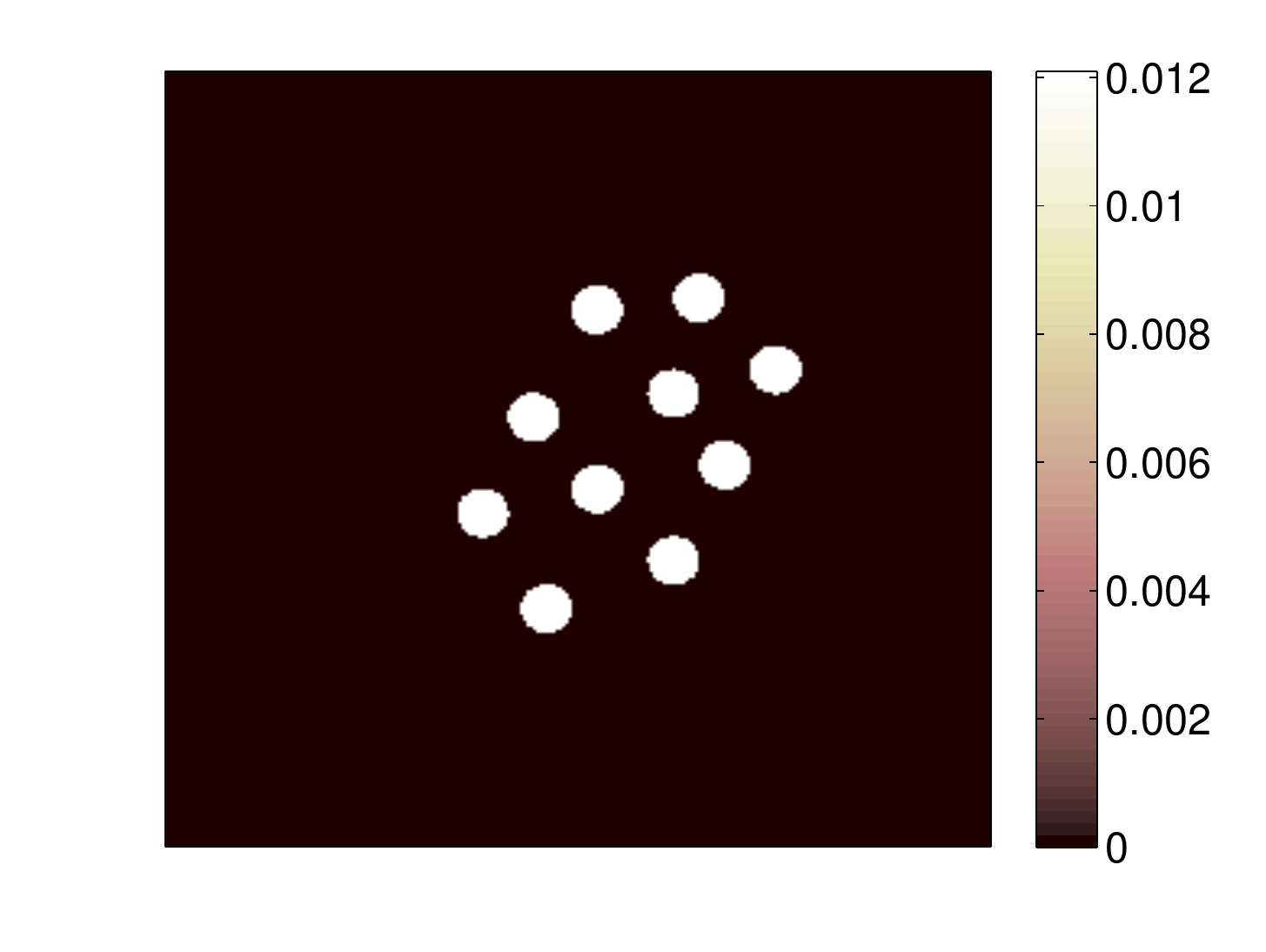}}
  \caption{ Reconstruction images using FBP, ME and TV-$\ell_2$
    reconstruction methods for MSNR $= \infty$ and different number of
    orientations. In the left column we have the FBP reconstruction
    results for (a) $N_{\theta} = 18$ and (d) $N_{\theta} = 90$. In
    the central column we have the ME reconstruction results
    for (b) $N_{\theta} = 18$ and (e) $N_{\theta} = 90$. In
    the right column we have the TV-$\ell_2$ reconstruction results
    for (c) $N_{\theta} = 18$ and (f) $N_{\theta} = 90$.}
 \label{fig:Reconstructed_fibers_FBPvsTVL2}
\end{figure}

We notice how the TV-$\ell_2$ method preserves the image dynamics
even for 5\% of coverage, while FBP and ME provide images with implausible
negative values. Moreover, at low measurement regime, some artifacts appear in both FBP and ME 
results. The image dynamics are less preserved in the FBP reconstruction, where the artifacts also affect the center of the fibers for a $5$\% of coverage. 

About the numerical complexities, the FBP method takes less
  than 1 second for the reconstruction, while both ME and TV-$\ell_2$
  methods require more time. For a coverage of 25\% of the frequency
  plane, to reach the same stopping threshold value of $10^{-5}$, ME requires 6280 iterations (1h05') while TV-$\ell_2$ requires 1900 iterations (28').  For a coverage of 5\% of the frequency plane, the situation reverses and, to reach $\mathtt{Th} = 10^{-5}$, ME requires 3450 iterations (51') while TV-$\ell_2$ requires 6620 iterations (1h49'). However, the reconstruction quality is clearly higher when using our regularized
method. In the case where the quality of the image reconstruction is
sufficiently high, the threshold can be decreased to a less
restrictive value. At the end of this section we perform a
convergence analysis for different threshold values, which allows us
to choose the suitable threshold for a required quality or convergence
time.

Although the FBP method requires less time than ME, we have noticed that the ME method outperforms FBP at low to medium reduction
of the total number of angles. Henceforth, only the ME reconstruction method will be used for comparison with the TV-$\ell_2$ method. In Fig.~\ref{fig:Fibers_Comparison} we present the reconstructed images
for the bundle of fibers for a moderately noisy sensing (MSNR $=
20\,{\rm dB}$). We also show the error images in order to provide a
better appreciation of the difference between both methods.

\begin{figure} 
  \centering
	\subfigure[\label{fig:Fibers_FBP_90_n1}][$\imapv_{\rm ME}$. RSNR = 12.83\,{\rm dB}.]{\includegraphics[height=1.4in]{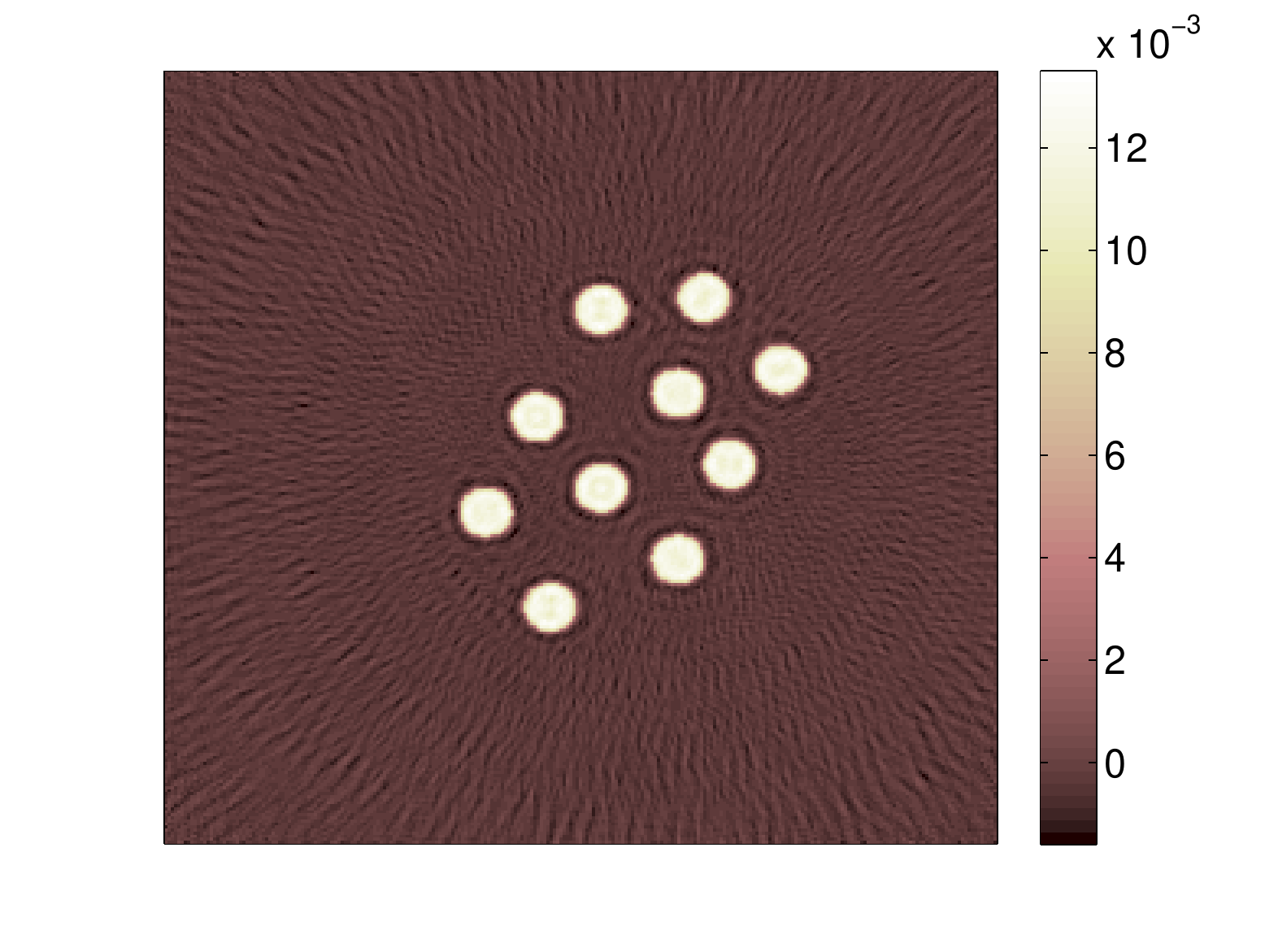}}
	\subfigure[\label{fig:Fibers_TVL2_90_n1}][$\imapv_{{\operatorname{TV-}}\ell_2}$. RSNR = 39.02\,{\rm dB}.]{\includegraphics[height=1.4in]{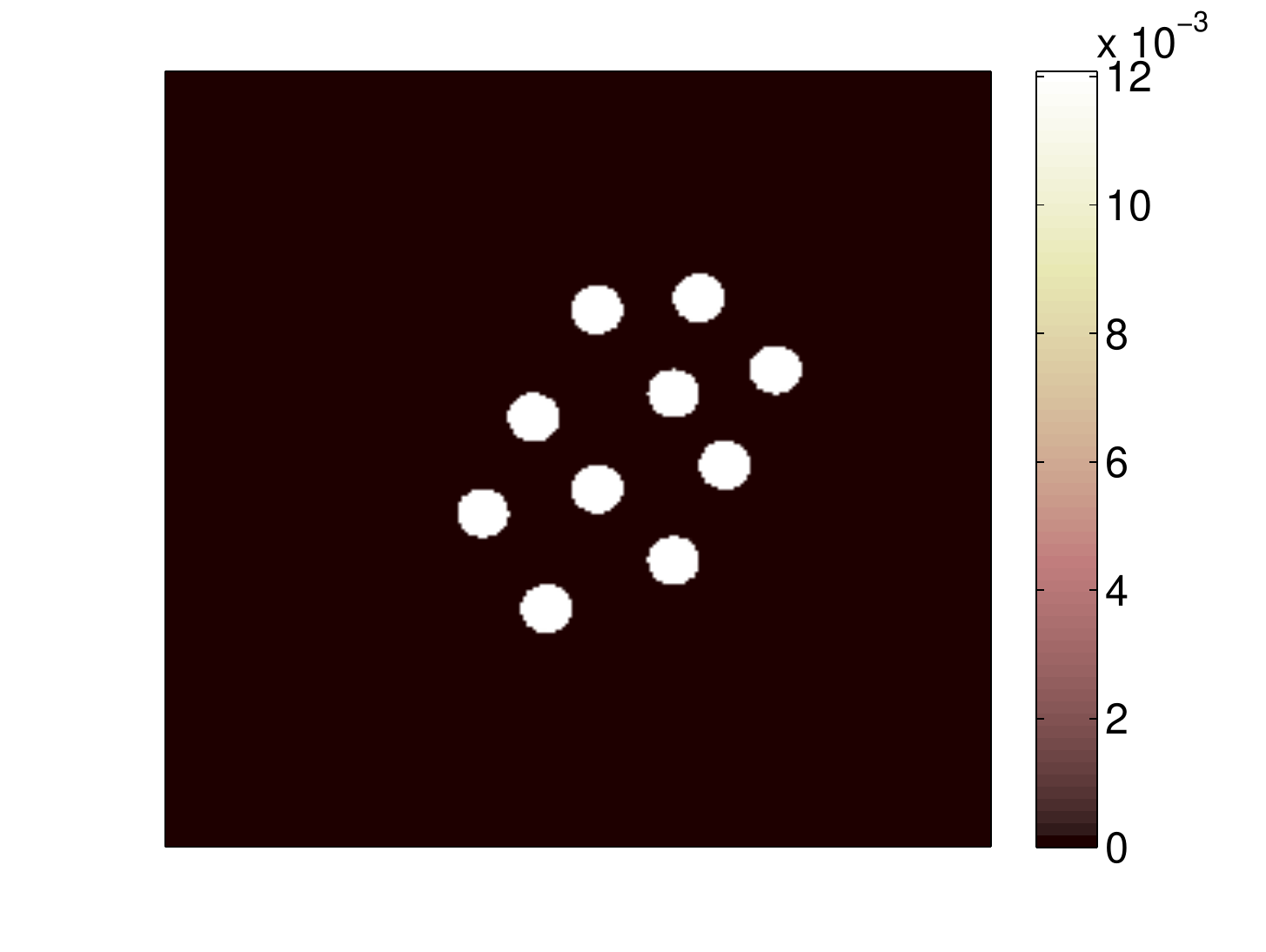}}\\
	\subfigure[\label{fig:Fibers_FBP_90_n1_diff}][$| \imapv - \imapv_{\rm ME} |$.]{\includegraphics[height=1.4in]{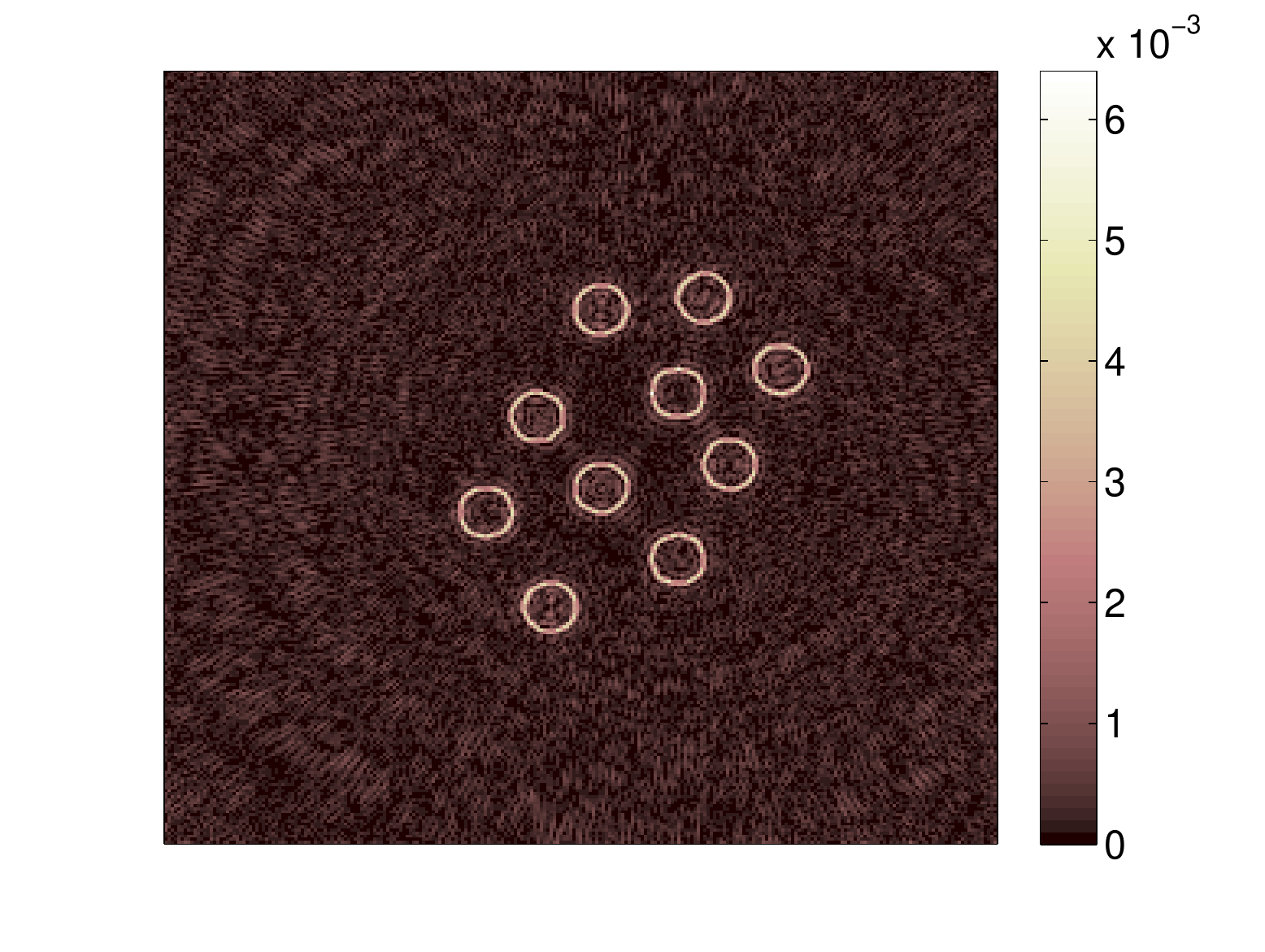}}
	\subfigure[\label{fig:Fibers_TVL2_90_n1_diff}][$| \imapv - \imapv_{{\operatorname{TV-}}\ell_2}|$.]{\includegraphics[height=1.4in]{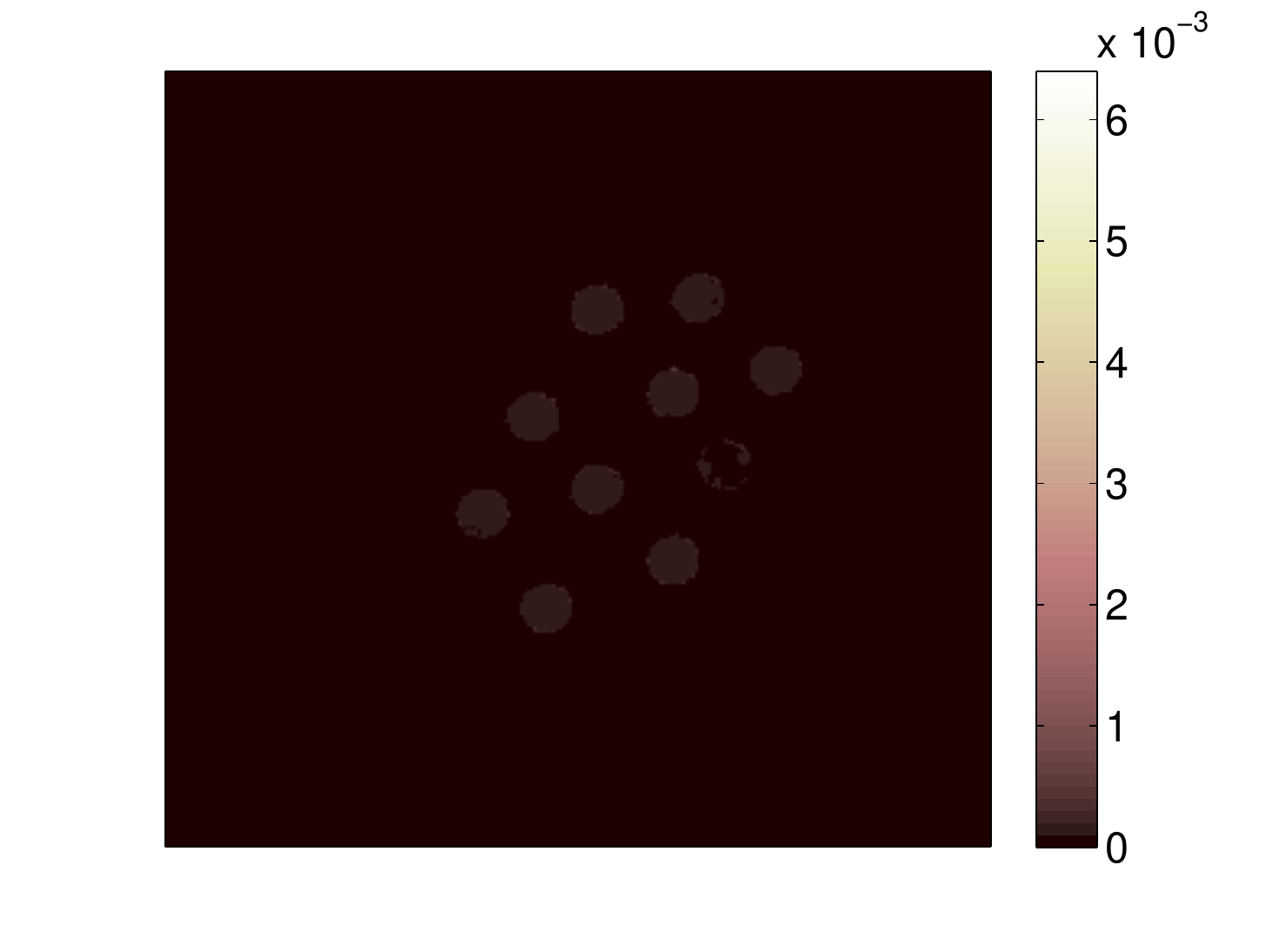}}
  \caption{Fiber reconstruction for MSNR $= 20\,{\rm dB}$ and $N_{\theta} = 90$. Reconstructed image $\tilde{\imapv}$ using (a) ME and (b) TV-$\ell_2$ reconstruction methods. Difference between ground truth $\imapv$ and Reconstructed image $\tilde{\imapv}$ using (c) ME and (d) TV-$\ell_2$ reconstruction methods.}
 \label{fig:Fibers_Comparison}
\end{figure}

In this noisy scenario, the fiber contours are no longer well
estimated using ME and, as we have coverage of only 25\%, some
oscillating (Gibbs) artifacts appear. On the contrary, the regularized
method provides a good estimation on the borders, with no visible
artifacts. We notice certain loss in the dynamics of the image, which
causes a lower RSNR compared to the noiseless scenario.

Let us show now some results obtained for the other two synthetic
images. Fig.~\ref{fig:Sphere_Comparison} and
Fig.~\ref{fig:Shepp_Comparison} present the results obtained using ME
and TV-$\ell_2$ on the sphere and the Shepp-Logan phantoms,
respectively. This comparison was performed for ${\rm MSNR} = \nobreak
20\,{\rm dB}$ and for $N_{\theta} = 90$, \ie a coverage of $25 \% $ of
the frequency plane. As the image expected dynamics change for
  the Shepp-Logan phantom, the parameter $c$ for the residuals
  balancing was set to 250 instead of 1000.

\begin{figure}
  \centering
	\subfigure[\label{fig:Sphere_FBP_90_n1}][$\imapv_{\rm ME}$. RSNR = 21.23\,{\rm dB}.]{\includegraphics[height=1.4in]{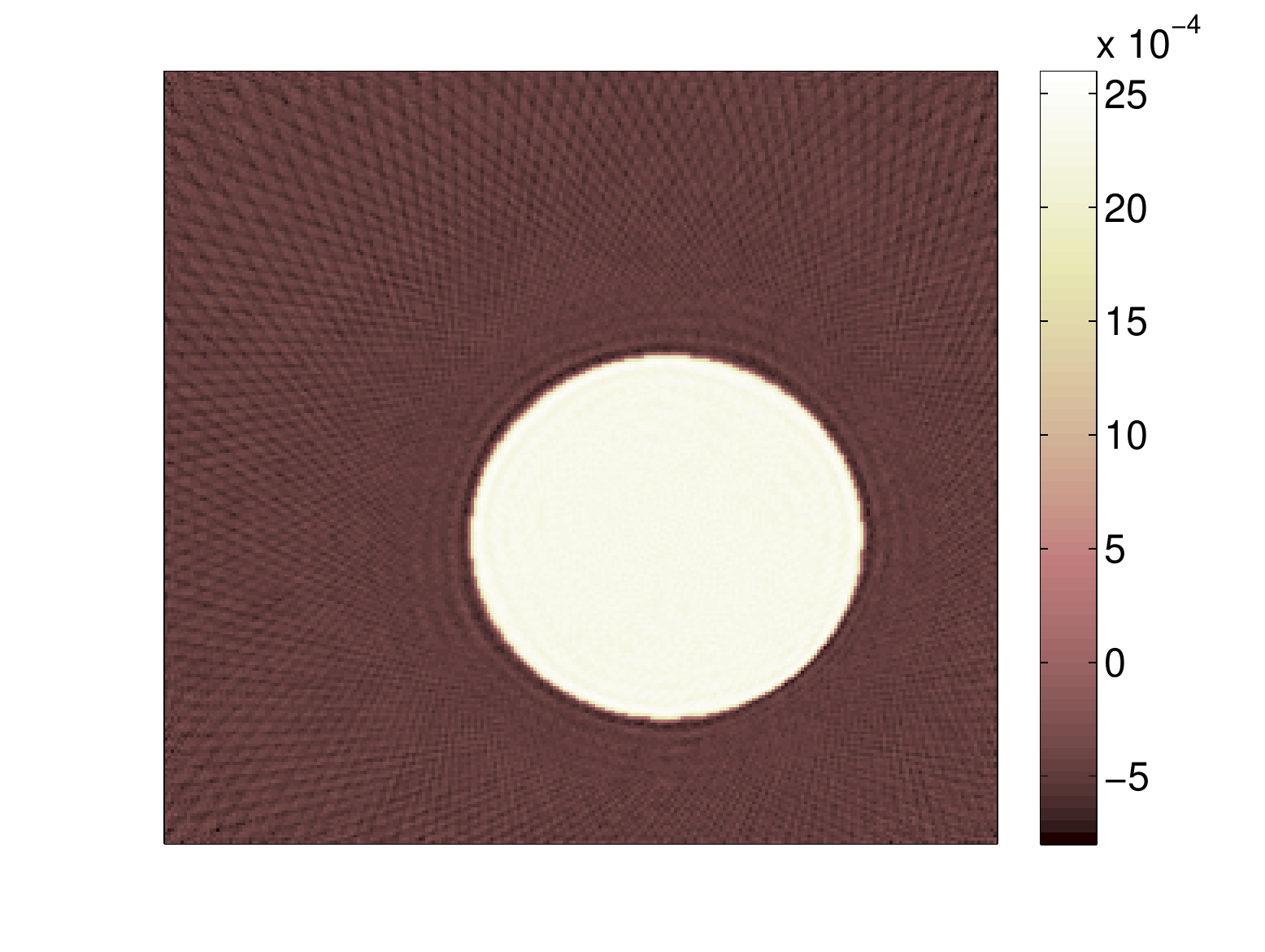}}
	\subfigure[\label{fig:Sphere_TVL2_90_n1}][$\imapv_{{\operatorname{TV-}}\ell_2}$. RSNR = 45.58\,{\rm dB}.]{\includegraphics[height=1.4in]{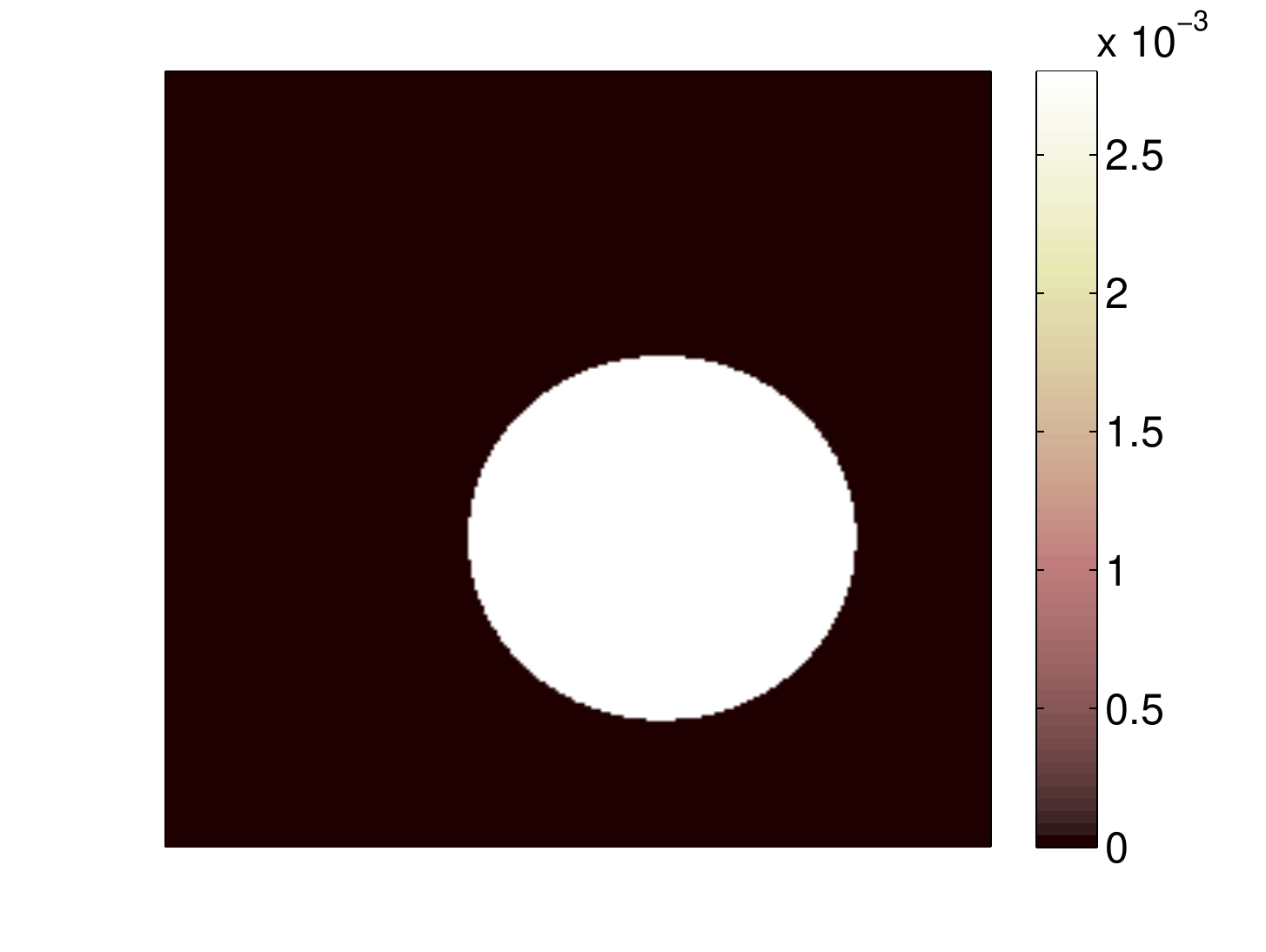}}\\
	\subfigure[\label{fig:Sphere_FBP_90_n1_diff}][$| \imapv - \imapv_{\rm ME} |$.]{\includegraphics[height=1.4in]{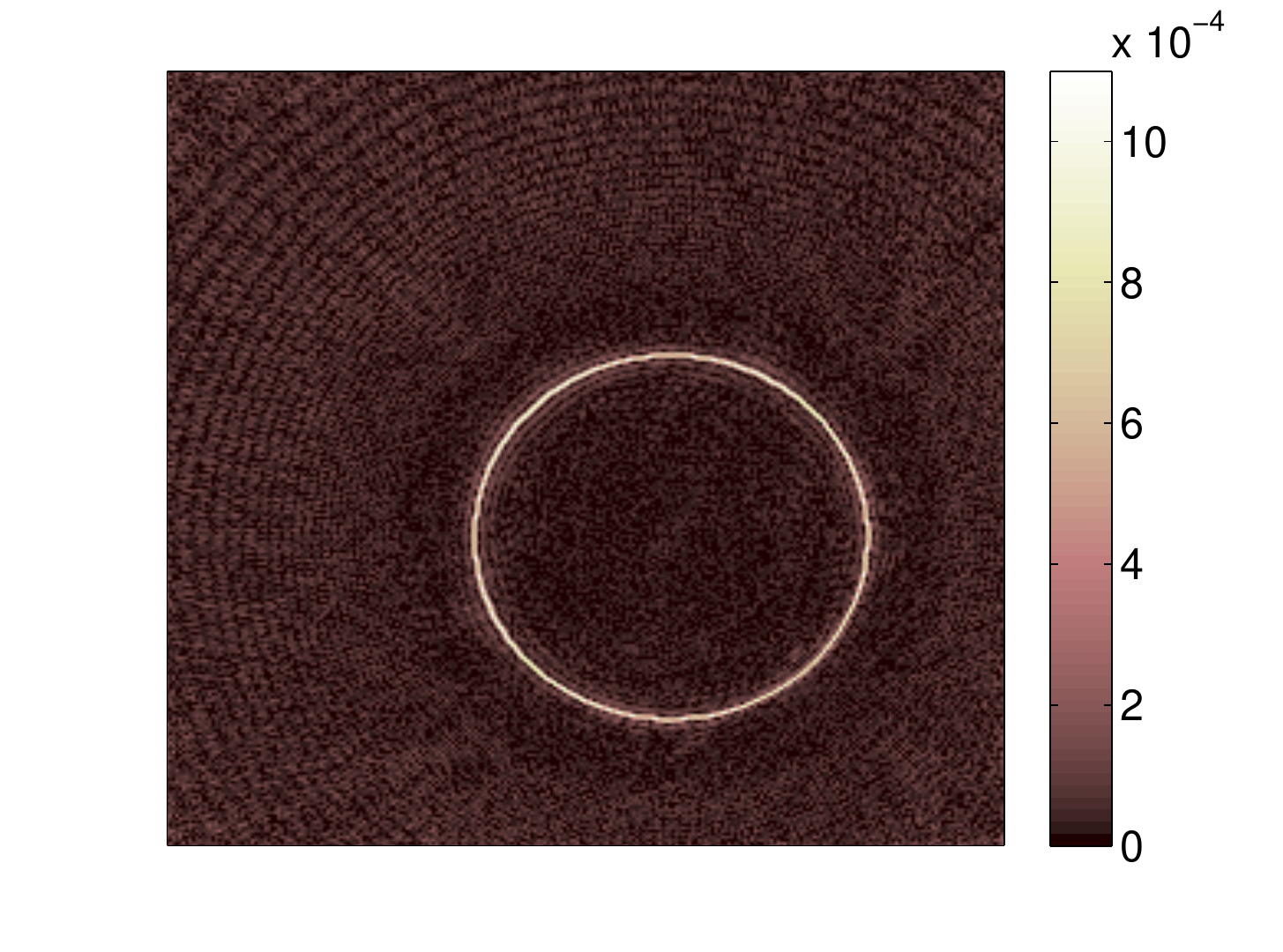}}
	\subfigure[\label{fig:Sphere_TVL2_90_n1_diff}][$| \imapv - \imapv_{{\operatorname{TV-}}\ell_2} |$.]{\includegraphics[height=1.4in]{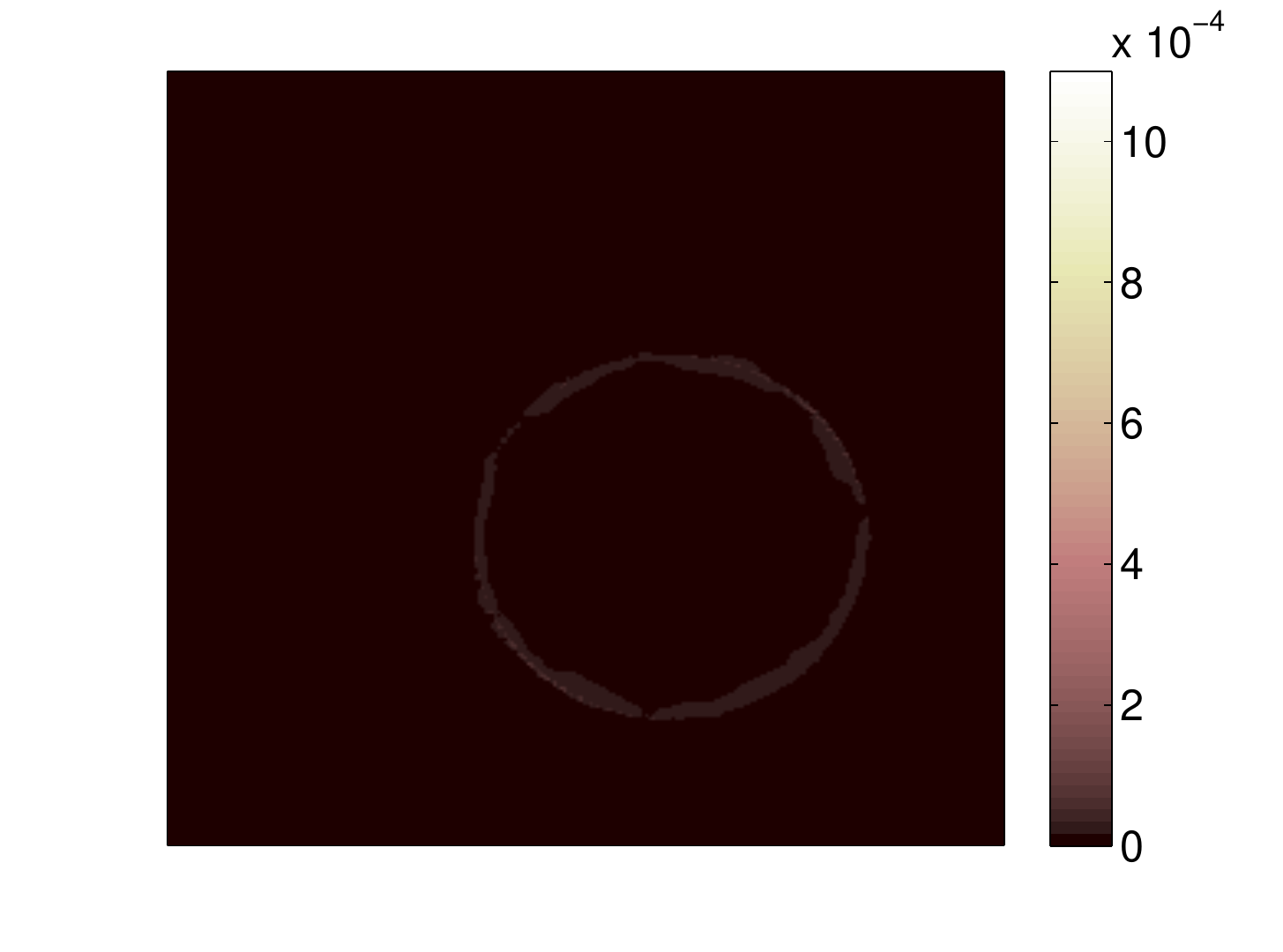}}
  \caption{Sphere reconstruction for MSNR $= 20\,{\rm dB}$ and $N_{\theta} = 90$. Reconstructed image $\tilde{\imapv}$ using (a) ME and (b) TV-$\ell_2$ reconstruction methods. Difference between ground truth $\imapv$ and Reconstructed image $\tilde{\imapv}$ using (c) ME and (d) TV-$\ell_2$ reconstruction methods.}
 \label{fig:Sphere_Comparison}
\end{figure}

\begin{figure} 
  \centering
	\subfigure[\label{fig:Shepp_FBP_90_n1}][$\imapv_{\rm ME}$. RSNR = 13.04\,{\rm dB}.]{\includegraphics[height=1.4in]{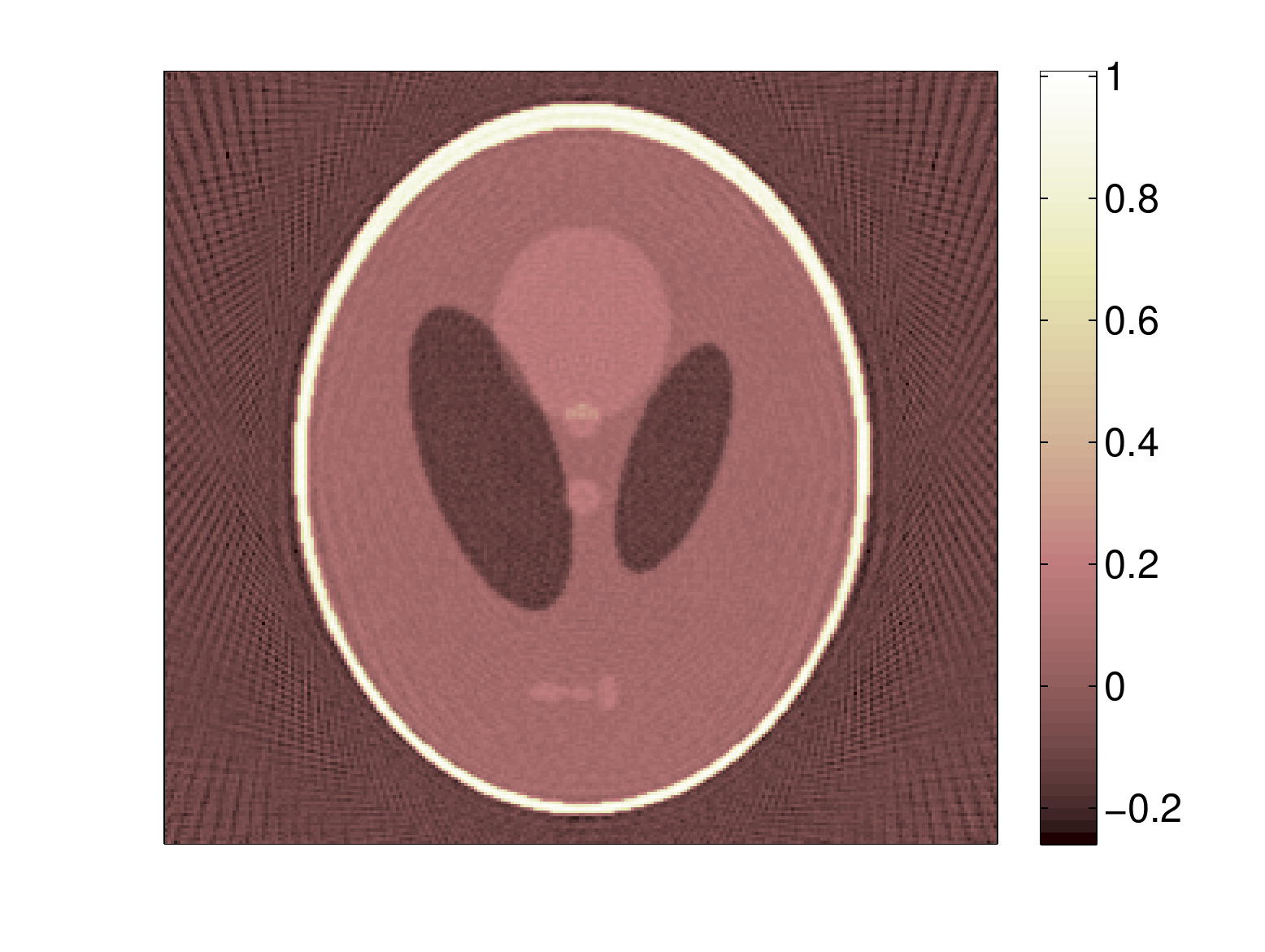}}
	\subfigure[\label{fig:Shepp_TVL2_90_n1}][$\imapv_{{\operatorname{TV-}}\ell_2}$. RSNR = 36.85\,{\rm dB}.]{\includegraphics[height=1.4in]{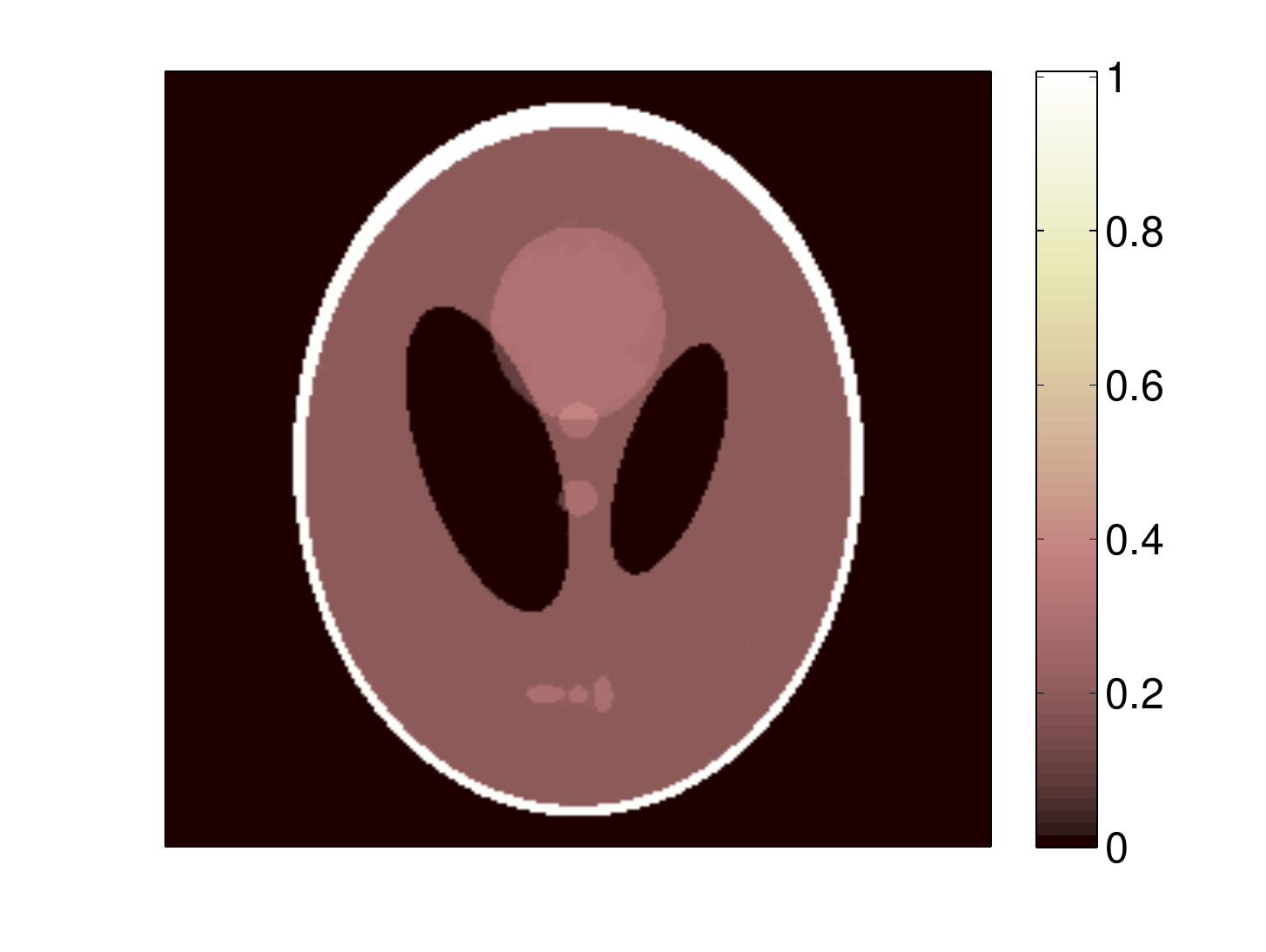}}\\
	\subfigure[\label{fig:Shepp_FBP_90_n1_diff}][$| \imapv - \imapv_{\rm ME} |$.]{\includegraphics[height=1.4in]{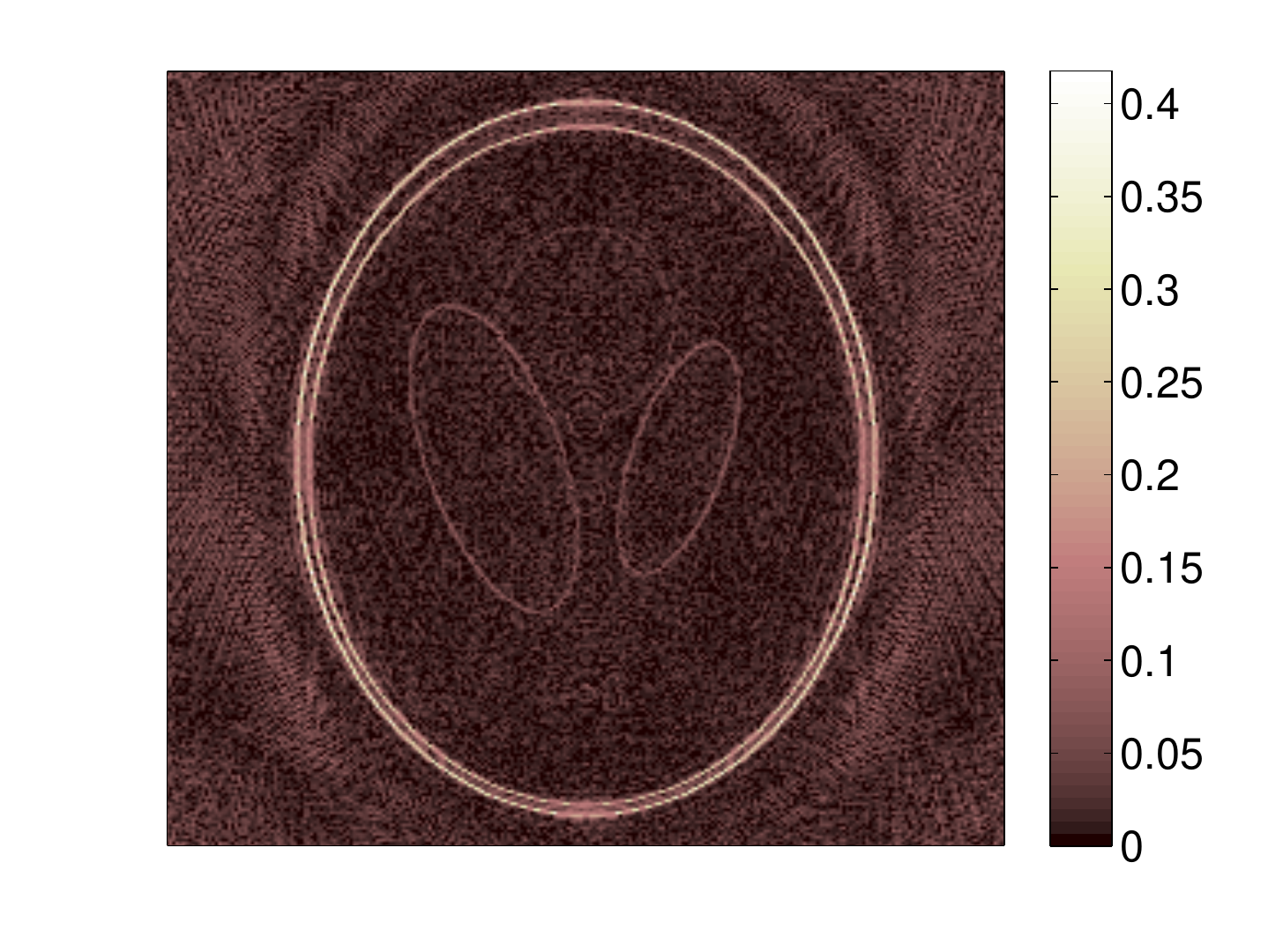}}
	\subfigure[\label{fig:Shepp_TVL2_90_n1_diff}][$| \imapv - \imapv_{{\operatorname{TV-}}\ell_2} |$.]{\includegraphics[height=1.4in]{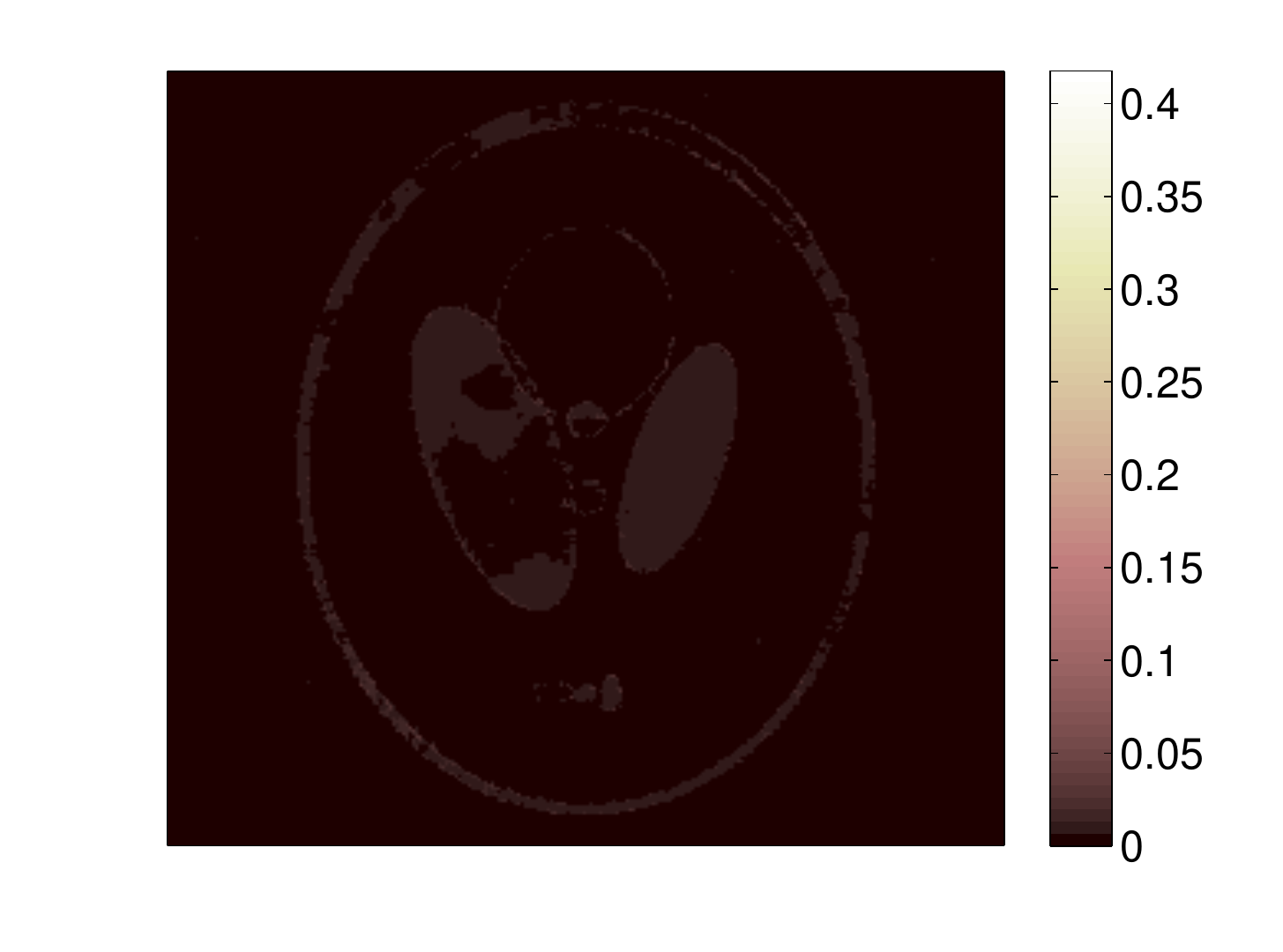}}
  \caption{Shepp-Logan phantom reconstruction for MSNR $= 20\,{\rm dB}$ and $N_{\theta} = 90$. Reconstructed image $\tilde{\imapv}$ using (a) ME and (b) TV-$\ell_2$ reconstruction methods. Difference between ground truth $\imapv$ and Reconstructed image $\tilde{\imapv}$ using (c) ME and (d) TV-$\ell_2$ reconstruction methods.}
 \label{fig:Shepp_Comparison}
\end{figure}

The regularized method provides a better image dynamics for these
phantoms than when reconstructing the fibers for the same noise
level. For these phantoms, it can also be observed that ME
reconstructions present a poor estimation on the borders and
oscillating artifacts. Moreover, the error image shows a higher
discordance with respect to the actual image.

Table~\ref{tab:Images_Comparison} presents a more complete comparison
of the RSNR obtained using ME and TV-$\ell_2$ on the three synthetic
images. The methods are analyzed for three scenarios, one noiseless
with ${\rm MSNR} = \infty$, and the other two with noise such that we
have ${\rm MSNR} = 20\,{\rm dB}$ and ${\rm MSNR} = 10\,{\rm
  dB}$. Results are presented for $\mathtt{Th} = 10^{-5}$ and
$N_{\theta} = 90$, \ie a coverage of $25 \% $ of the frequency plane.

\begin{table}[h]\footnotesize
	\centering
	\begin{tabular}{ c | c | c | c | c | c | c |}
		\cline{2-7}
		 & \multicolumn{6}{|c|}{$\rm RSNR [\rm dB]$}\\
		\cline{2-7}
		 & \multicolumn{2}{|c|}{$\rm MSNR = \infty$} & \multicolumn{2}{|c|}{$\rm MSNR = 20\,{\rm dB}$} & \multicolumn{2}{|c|}{$\rm MSNR = 10\,{\rm dB}$}\\
		\cline{2-7}
		 & TV-$\ell_2$ & ME & TV-$\ell_2$ & ME & TV-$\ell_2$ & ME \\ \hline
		\multicolumn{1}{|c|}{Fibers} & 70.9 & 13.1 & 39.02 & 12.83 & 35.69 & 11.63 \\ \hline
		\multicolumn{1}{|c|}{Sphere} & 53.59 & 21.54 & 45.58 & 21.23 & 37.70 & 18.79 \\ \hline
		\multicolumn{1}{|c|}{Shepp-Logan} & 54.37 & 13.21 & 36.85 & 13.04 & 25.24 & 11.79 \\ \hline
	\end{tabular}
	\caption{Comparison of the different RSNR obtained using ME and TV-$\ell_2$ on the three synthetic images for $N_{\theta} = 90$.}
  \label{tab:Images_Comparison}
\end{table}

When comparing the behavior of the algorithm for the different synthetic images, we can notice that TV-$\ell_2$ method outperforms
the ME method for all cases. 

\subsubsection{Algorithm Convergence}

Finally, we analyze the convergence of the algorithm by studying the
evolution of the primal and dual residual norms and of the RSNR for different
threshold values. We also analyze the convergence difference between the adaptive and the non-adaptive method. 
For this, only the reconstruction of the bundle of fibers for $N_\theta = 360$ and MSNR~=~20\,{\rm dB} is used, however, the other cases present a similar behavior. 

The evolution of the primal and dual residual norms \eqref{eq:PD_residuals_CP_ODT} and the convergence condition in \eqref{eq:convergence_condition}
along the iterations is depicted in Fig.~\ref{fig:Convergence}. We notice that, when the number of iterations increases, the primal and dual residual norms tend to zero along with the energy ($\| P^{(k)} \|^2 + \| D^{(k)} \|^2$).

\begin{figure} 
  \centering
	\includegraphics[height=1.8in]{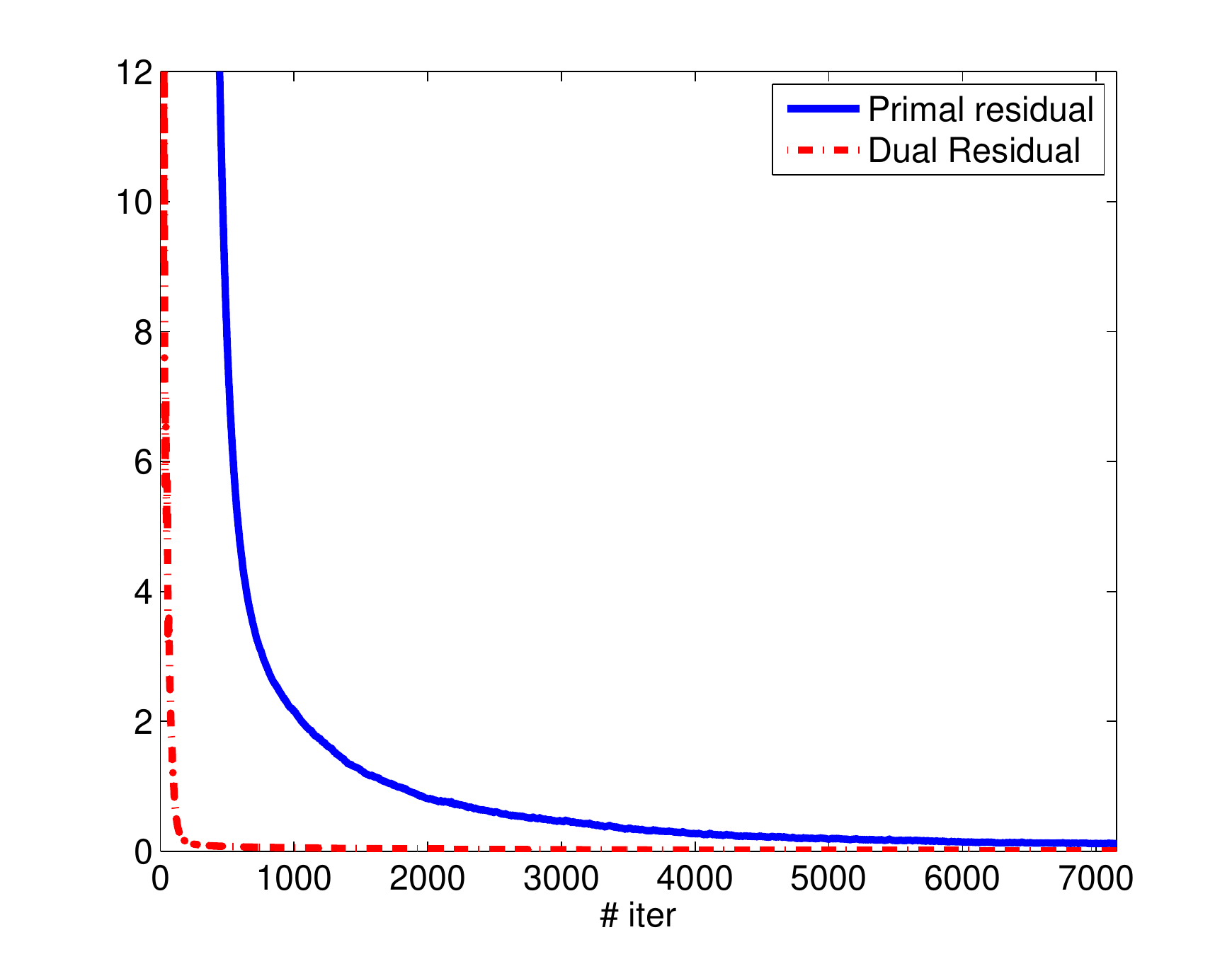}
	\includegraphics[height=1.8in]{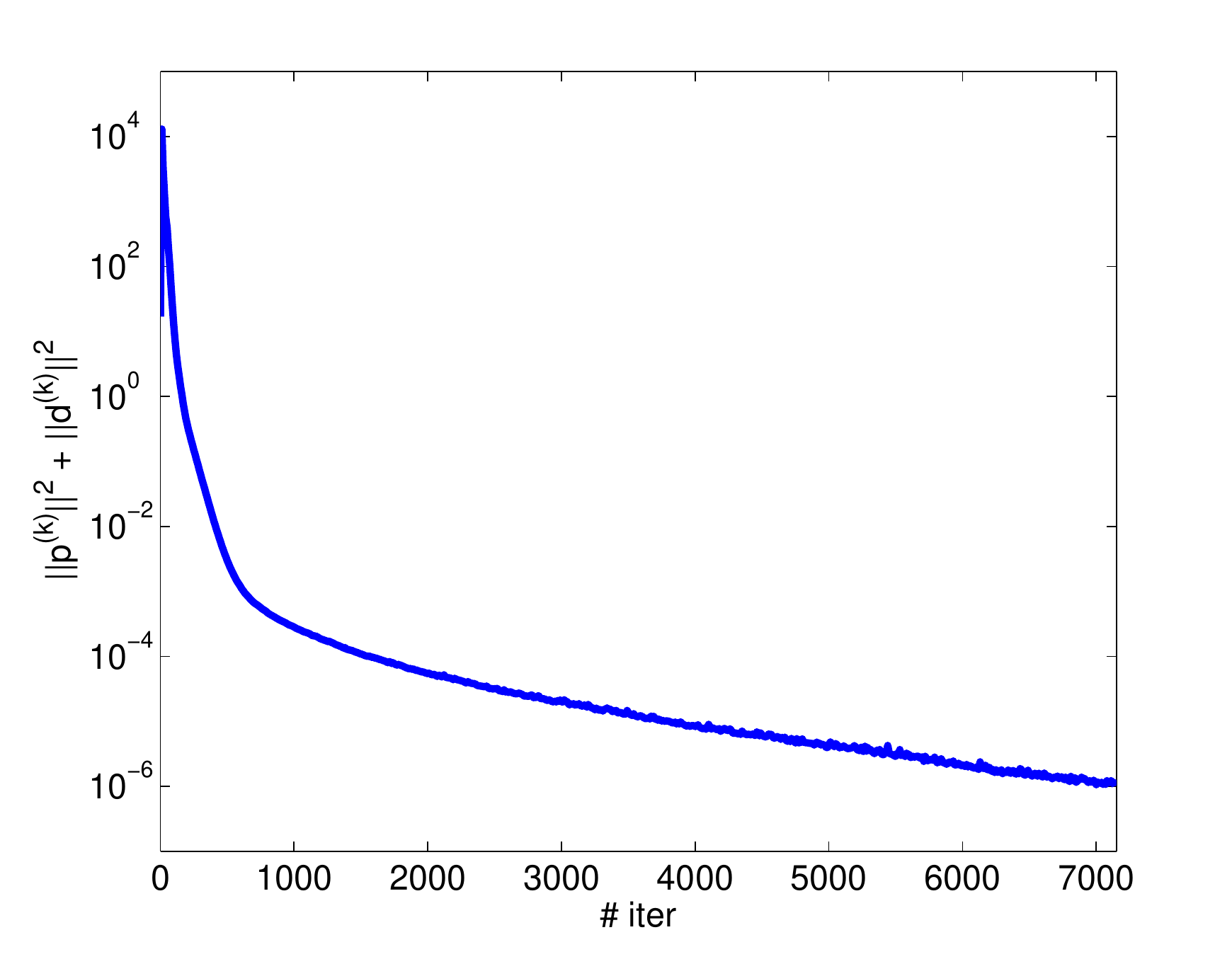} 
  \caption{Convergence results when reconstructing the bundle of fibers with $N_\theta = 360$ and MSNR = 20\,{\rm dB}. (left) The primal and dual residual norms and (right) the convergence condition \eqref{eq:convergence_condition}.}
 \label{fig:Convergence}
\end{figure}

In order to compare the adaptive and non-adaptive methods, we analyze the evolution of  $\| \bs x^{(k+1)} - \bs x^{(k)} \| / \| \bs x^{(k)}
\|$ and the RSNR along the iterations for two cases: \emph{(i)} ``Adapt''  where the stepsizes are adaptively updated using Algorithm \ref{alg:adaptive_CP_ODT} and \emph{(ii)} ``Non-Adapt'' where we use constant stepsizes equal to $\tau = \sigma = 0.9/\bbb\bs K\bbb$. Results are presented in Fig.~\ref{fig:Convergence_Comparison}.

\begin{figure} 
  \centering
	\includegraphics[height=1.8in]{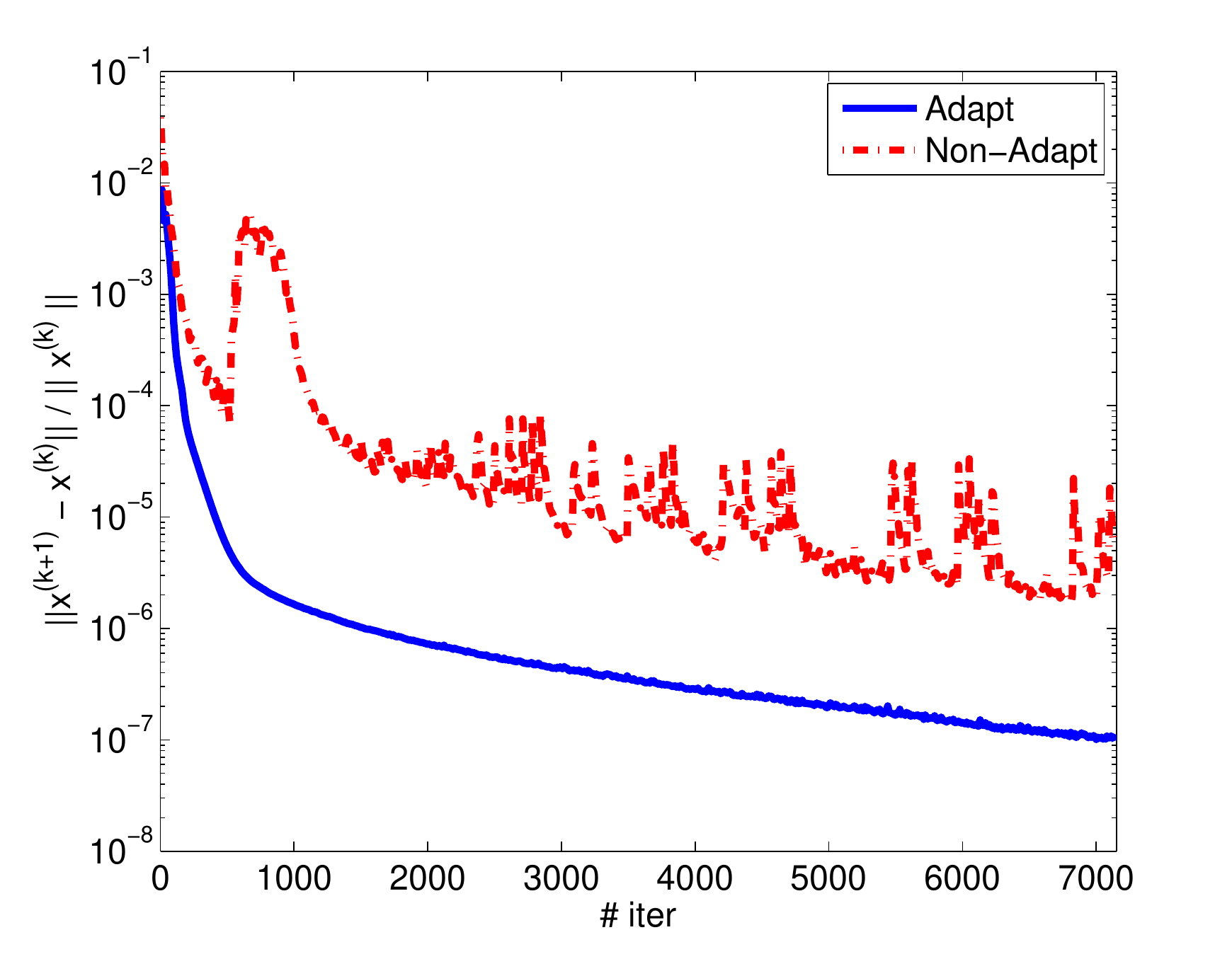}
	\includegraphics[height=1.8in]{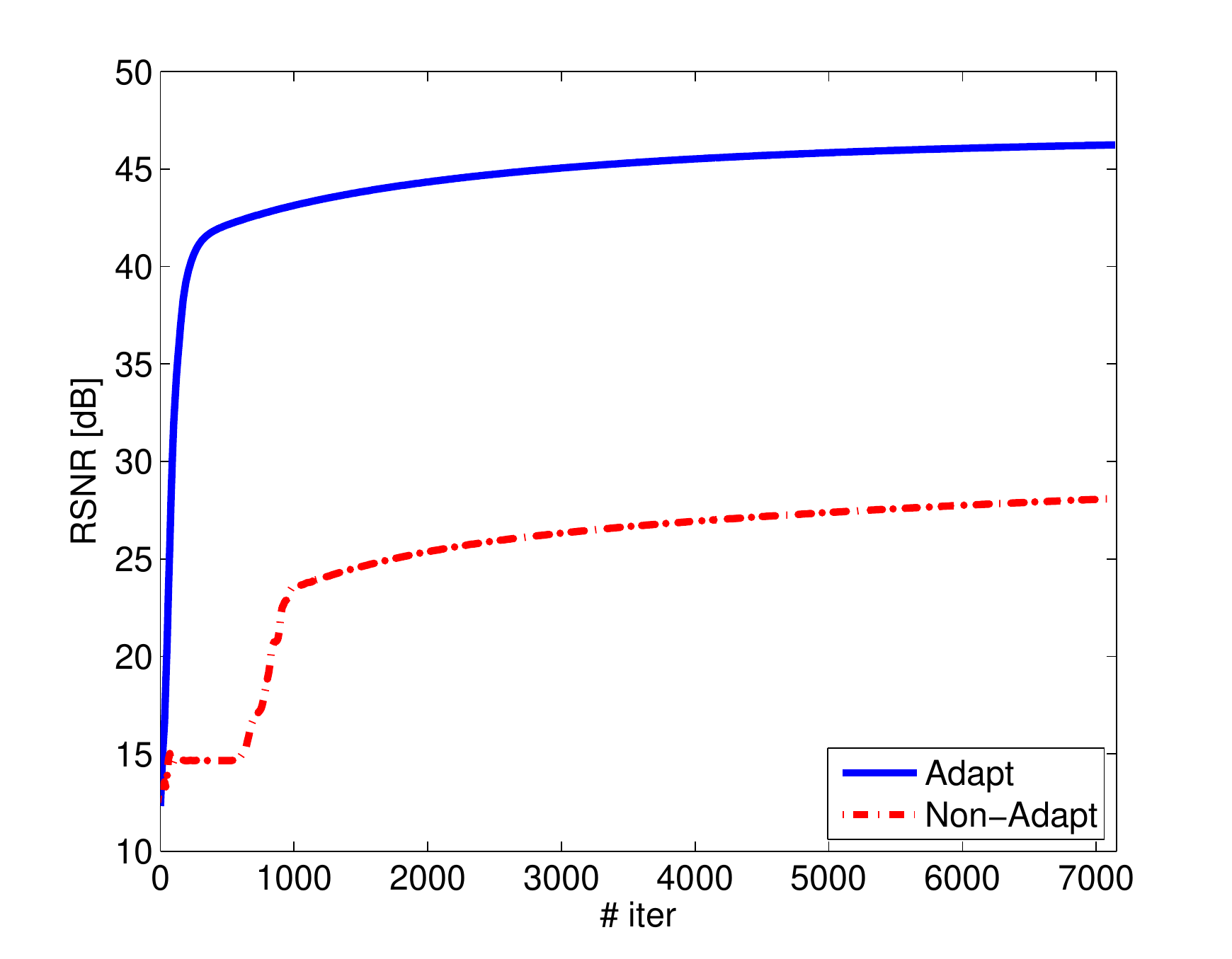}
  \caption{Comparison between the Adaptive and Non-Adaptive method. Convergence results when reconstructing the bundle of fibers with $N_\theta = 360$ and MSNR = 20\,{\rm dB}. The progress of (left) the threshold value Th and (right) the RSNR, along the iterations.}
 \label{fig:Convergence_Comparison}
\end{figure}

The evolution of $\| \bs x^{(k+1)} - \bs x^{(k)} \| / \| \bs x^{(k)}
\|$ helps us to analyze the stability of the algorithm. We can see the curves are not smooth specially for the non-adaptive method, which
indicates a non stable behavior mainly due to a bad conditioning of
the global operator $\bs K$ in the product space optimization. This
could be improved by a preconditioning procedure as described
in~\cite{Becker2012,Preconditioning}. We also notice that for the same number of iterations, the adaptive method converges faster and provides a better result that the non-adaptive. Moreover, the adaptive method does not require to empirically set the parameters $\mu$ and $\nu$ as the algorithm will converge to the optimal parameters independently of the initialization.

Table~\ref{tab:Convergence_Comparison} presents the values of the time and RSNR for different threshold
values. These results show that
a lower threshold provides higher reconstruction quality but
significantly increases the number of CP iterations. 
 In this specific reconstruction, setting the threshold to $10^{-5}$ or running more than $500$ CP iterations guarantees a RSNR higher than $42$ dB.

\begin{table}[h]
	\centering
	\begin{tabular}{ | c | c | c | c | c | c | c |}
		\hline
		$\mathtt{Th}$ & \# iter & Time & RSNR [dB] \\ \hline
		$10^{-4}$ & 190 & 5' & 38.79 \\ \hline
		$10^{-5}$ & 420 & 11' & 41.86 \\ \hline
		$10^{-6}$ & 1540 & 42' & 43.86 \\ \hline
		$10^{-7}$ & 7150 & 3h15' & 46.24 \\ \hline
	\end{tabular}
	\caption{Convergence results when reconstructing the 10 fibers synthetic image with $N_\theta = 360$ and MSNR = 20\,{\rm dB}.}
  \label{tab:Convergence_Comparison}
\end{table}

\subsection{Experimental Data} 
\label{sec:expdata}

The reconstruction algorithm was tested with two particular
transparent objects similar to the synthetic data studied in the
previous section: a homogeneous sphere and a bundle of 10~fibers, both
immersed in an optical fluid. The reconstruction is based on $N_\tau =
696$ parallel and equally spaced light~rays. The experimental setup is
based on the Schlieren Deflectometric Tomography described in
Sec.~\ref{sec:problem}.

A $696 \times 523$ pixels CCD camera was used for the acquisition,
covering a field of view of $3.25 \rm mm \times 2.43 \rm mm$. This
corresponds to $N_\tau = 696$ parallel light rays and 523 2-D slices,
which leads to $\delta \tau = 4.7 \times 10^{-3} \rm mm$, and thus to
$\delta r = 4.7 \times 10^{-3} \rm mm$.

Fig.~\ref{fig:Experimental_Fibers_Obs} presents one measurement
  of the deflection angles on the CCD camera grid for the two analyzed optical phantoms. This observation corresponds to $\theta = 0^\circ$.

\begin{figure}[tbh]
	\centering 
	\includegraphics[height=1.4in]{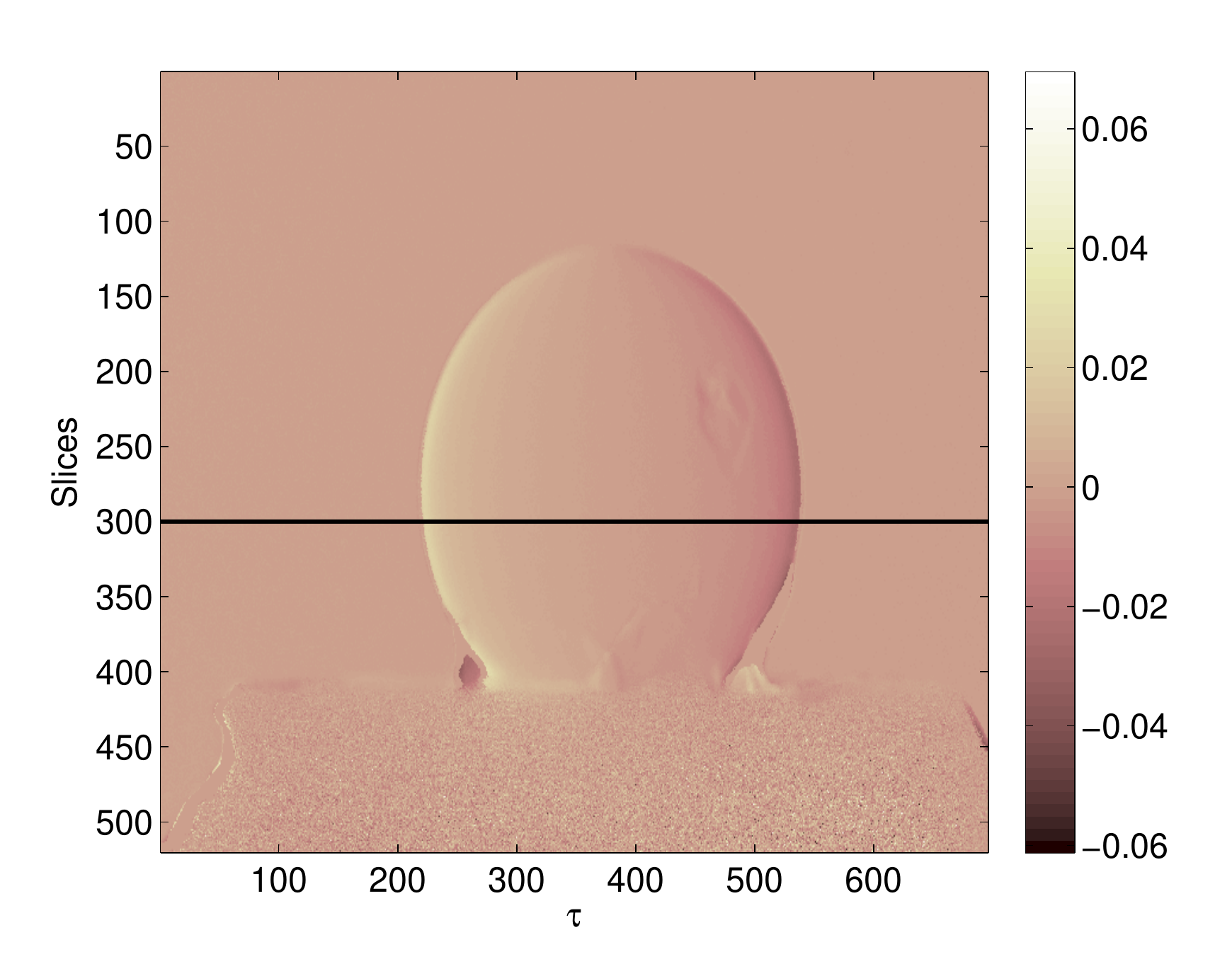}
	\hspace{0.5cm}
	\includegraphics[height=1.4in]{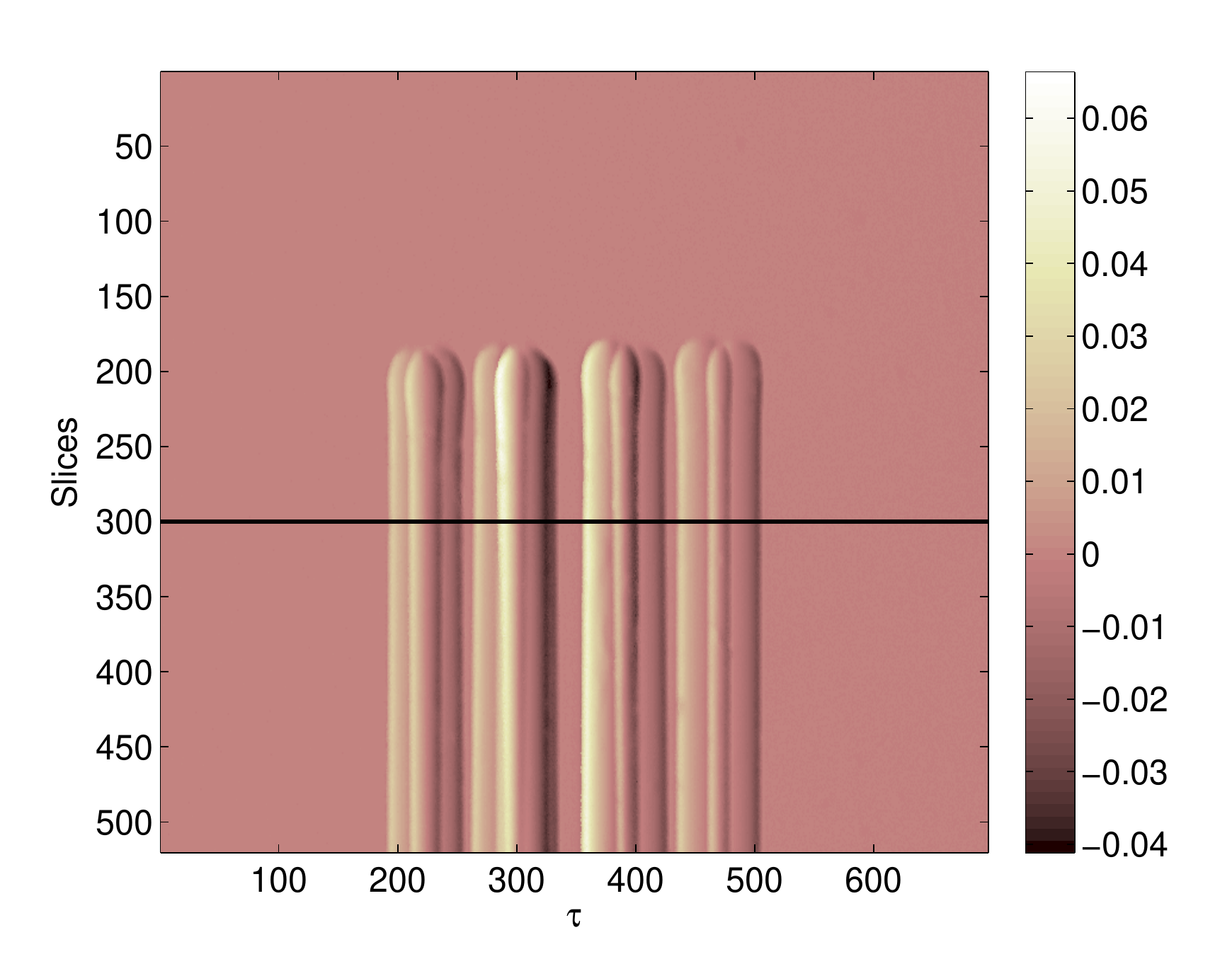}
	\caption{Computed deflection angles (in radians) from CCD camera observations for
          $\theta = 0^\circ$, corresponding to (left) the homogeneous sphere and (right) the bundle of
          fibers. The 300$^{\rm th}$ slice
          (over the 512 available slices) used for the reconstruction
          is indicated in the figures using a black line. In this
          specific slice, the refractive index maps are expected to have negligible
          variations along the vertical direction ($\bs e_3$).}
	\label{fig:Experimental_Fibers_Obs}
\end{figure}

The experimental configuration leads to a calibration problem. As the
object is rotating, the rotation center is modified within a small
range and the origin of the affine parameter $\tau$ is altered. A
post-acquisition calibration method was therefore implemented for
correcting this effect. In short, for each angle, the method estimates the true
centrum location by averaging the locations of the maximum and
minimum deflection values along $\tau$. 

The two next paragraphs present the characteristics of the two objects of
interest and the reconstruction results obtained for the three tested
methods (FBP, ME and TV-$\ell_2$) from the collected experimental observations. In
all these experiments, and as discussed in Sec.~\ref{sec:noise-sources}, the TV-$\ell_2$ method was considered in the
context of a 10 dB modeling noise. This choice seems somehow optimal
for our experimental conditions. In several tests not reported here,
higher or lower values of noise SNR lead either to severe artifacts in areas where no object is expected 
or to a significant loss in the expected RIM dynamics.

\subsubsection*{Homogeneous Sphere}

The first observed object consists of a homogeneous sphere with a
diameter of $1.5$ mm. The difference of refractive index between the
sphere and the optical fluid where it is immersed is $\delta \imap =
2.8 \times 10^{-3}$. The deviation map was measured for $N_\theta =
45$ angular positions over 360 degrees (\ie 13\% of
measurements). The most important noise in the measurements is the modeling noise obtained by assuming MSNR = 10 dB \eqref{eq:epsilon_model}, which provides $\varepsilon_{\rm model} = 0.035$; while the estimated observation noise provides $\varepsilon_{\rm obs} = 0.008$. The NFFT interpolation noise is considerably smaller, with $\varepsilon_{\rm nfft} = 4.24 \times 10^{-16}$ . This noise estimation provides a MSNR = $9.79$\,dB. Fig.~\ref{fig: Experimental_Sphere} shows the
reconstruction results obtained when using FBP, ME and TV-$\ell_2$
reconstruction methods, for the 300$^{\rm th}$ 2-D slice of the
  observed 3-D object. The results are shown for a threshold $\rm \mathtt{Th} = 10^{-5}$,
where ME converges in $7180$ iterations and TV-$\ell_2$ in
$5180$ iterations.

\begin{figure}[tbh]
	\centering 
	\includegraphics[height=1.21in]{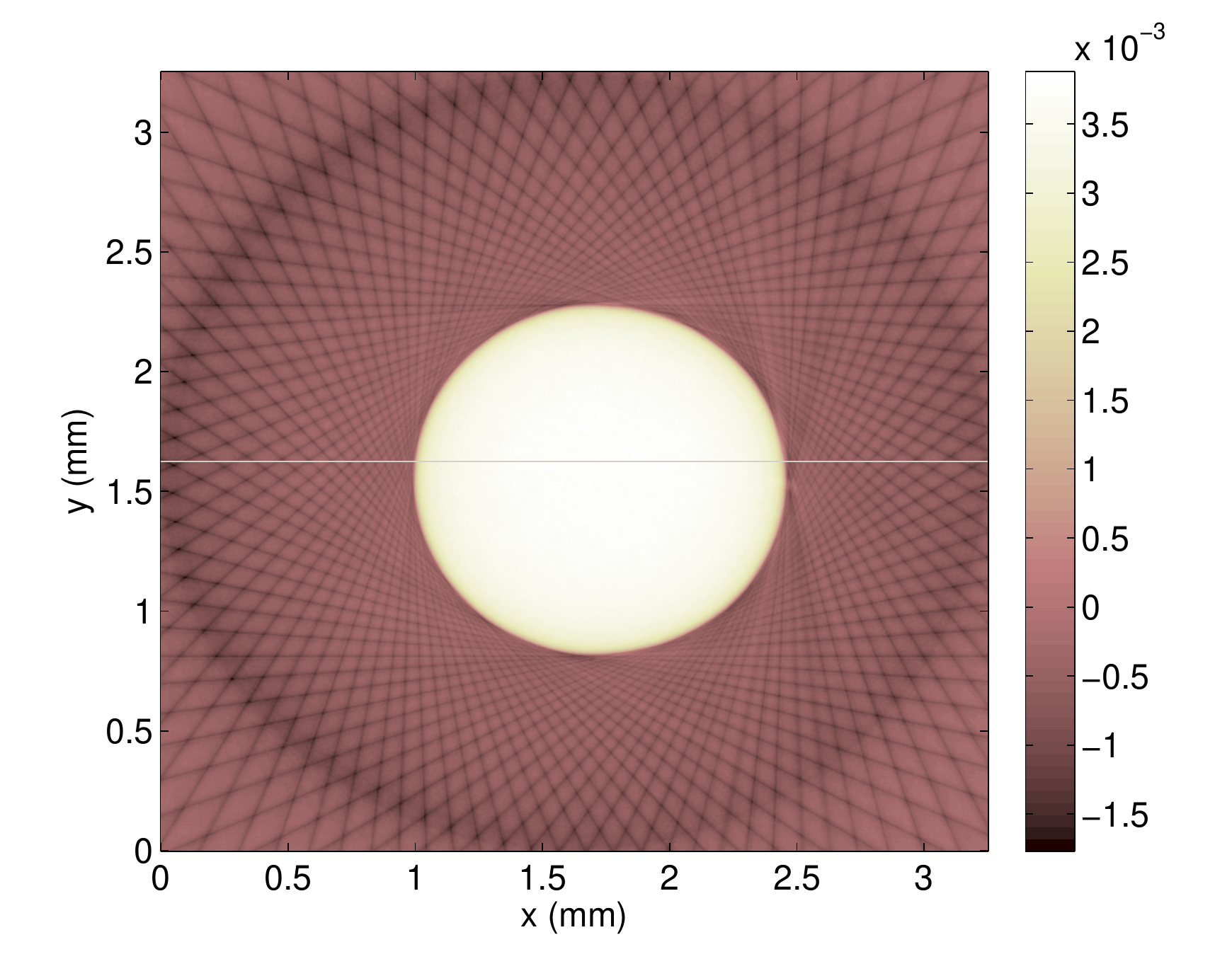}
	\includegraphics[height=1.21in]{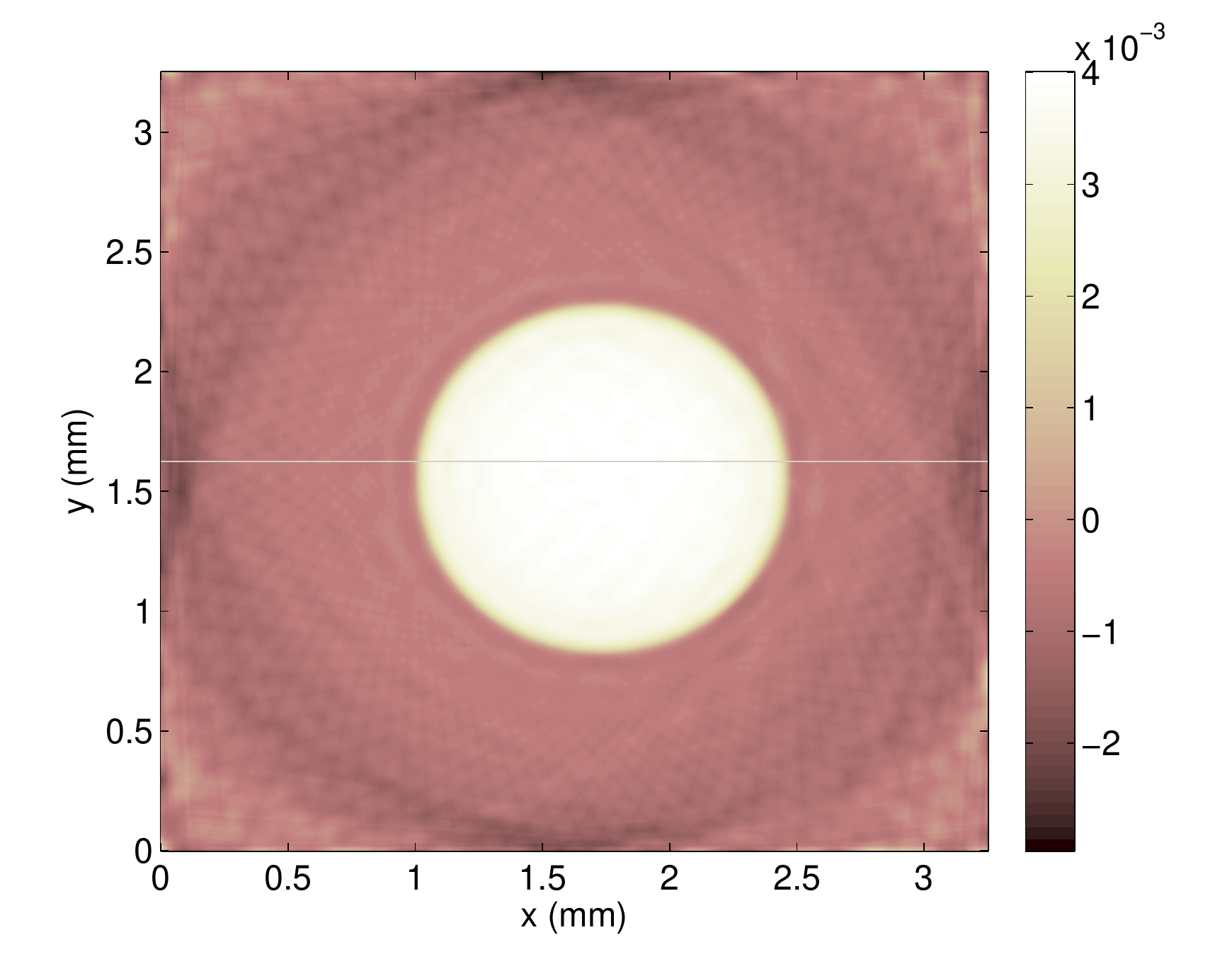}
	\includegraphics[height=1.21in]{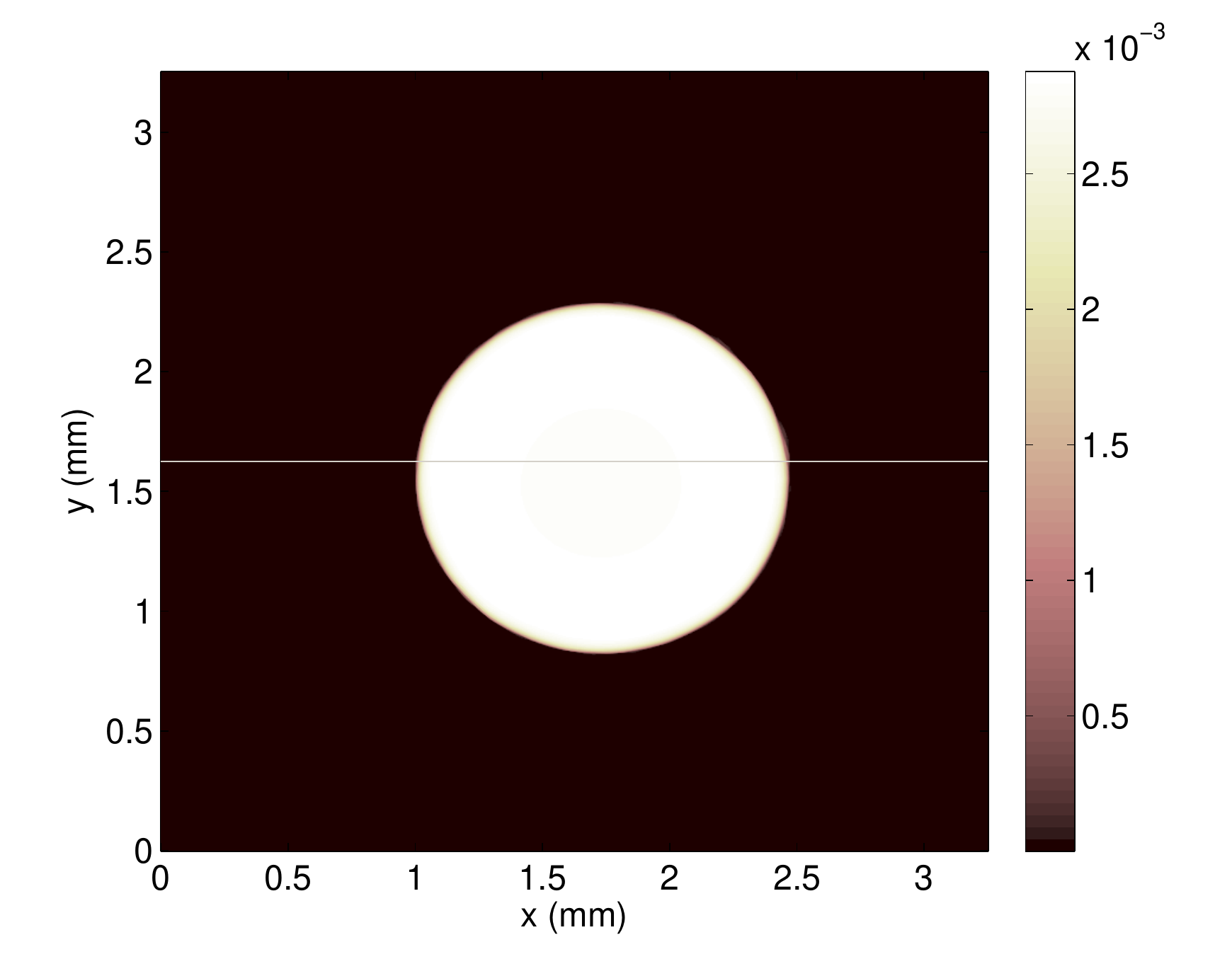} 
	\includegraphics[height=1.21in]{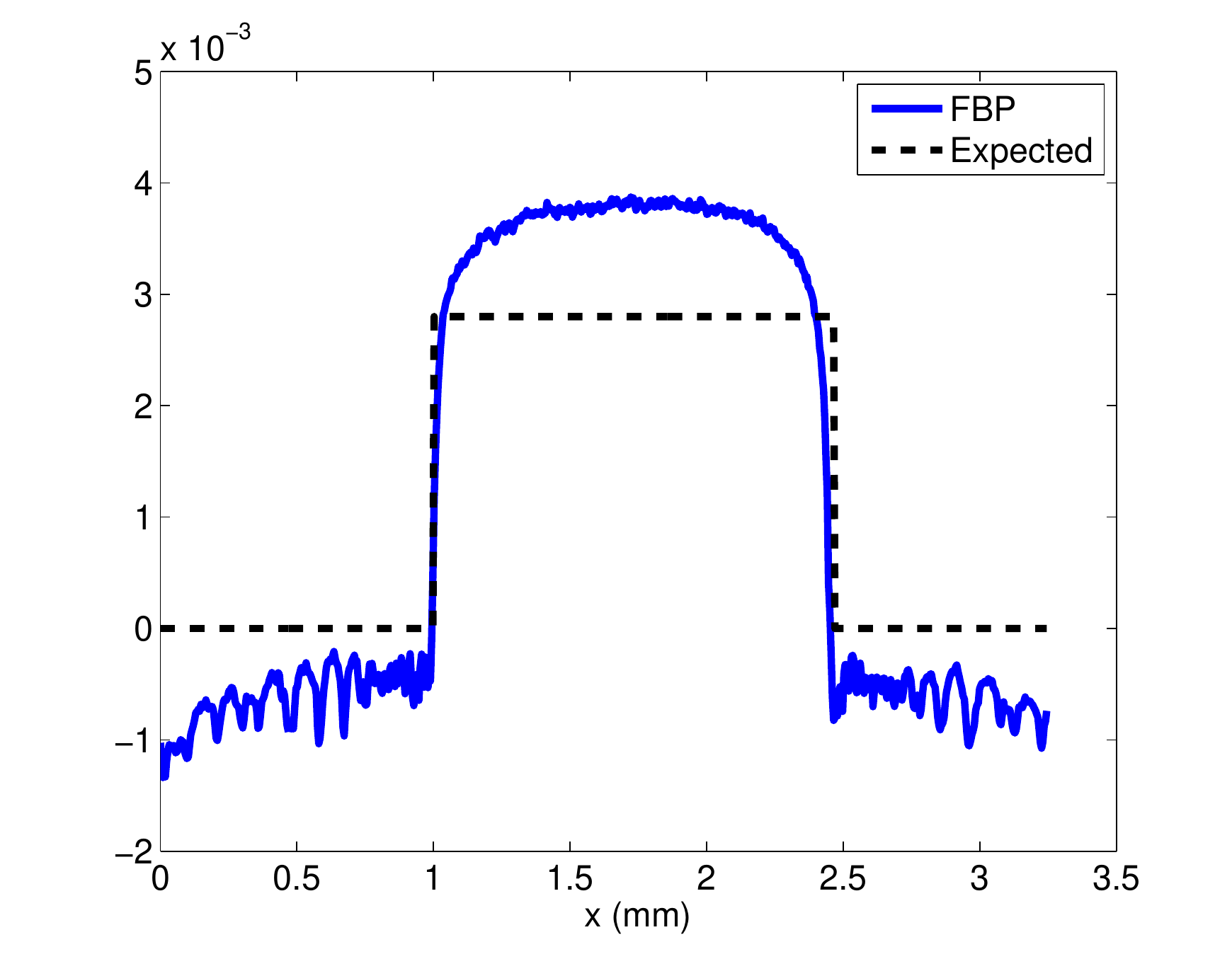} 
	\includegraphics[height=1.21in]{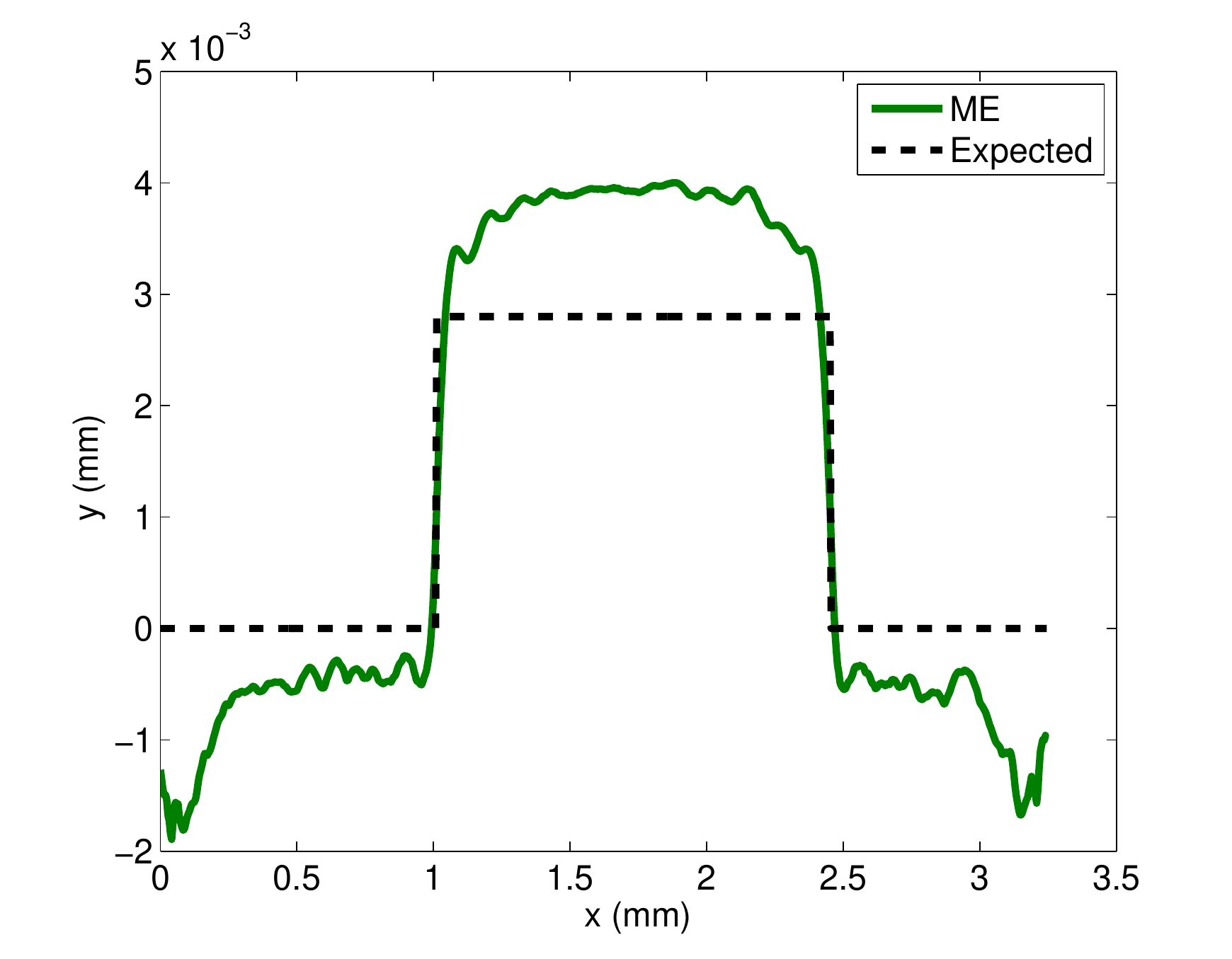}
	\includegraphics[height=1.21in]{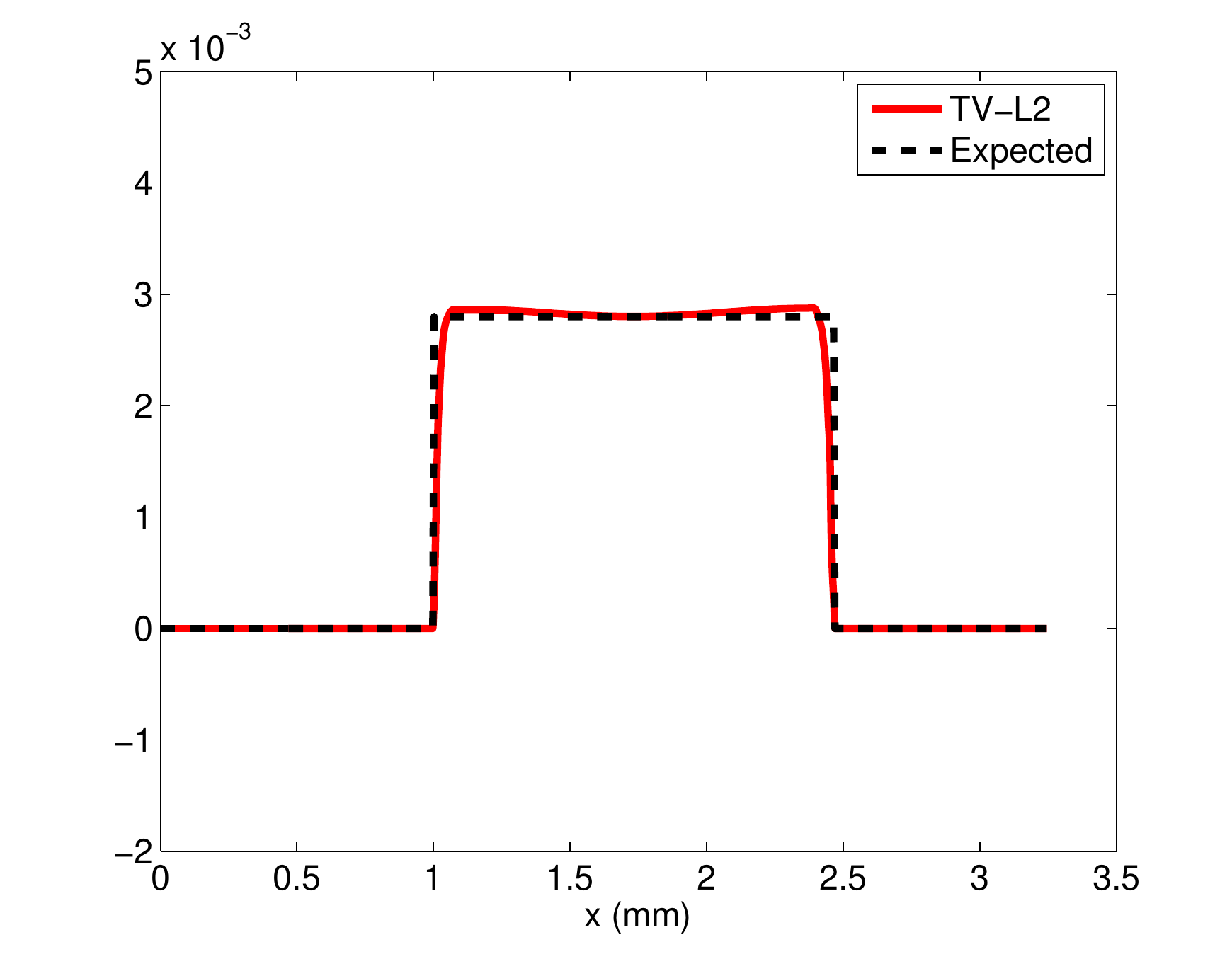} 
	\caption{Reconstructed sphere for 45 angular positions for the 300$^{\rm th}$ 2-D slice. 2-D distribution using (top left) FBP, (top center) ME and (top right) TV-$\ell_2$. 1D profile along y~=~1.625 mm using (bottom left) FBP (bottom center) ME and (bottom right) TV-$\ell_2$.}
	\label{fig: Experimental_Sphere}
\end{figure}

We observe a similar behavior to the one found in the synthetic
reconstruction. Compared to FBP and ME results, the sphere frontier is sharper with the TV-$\ell_2$
estimation and the RIM vanishes on the background. The image dynamics
recovery is also more accurate using
our regularized method, whereas with FBP and ME the reconstructions present
several artifacts with implausible negative
values. It is important to notice that the preservation of the image
dynamics depends mainly on the noise estimation and on the proper
definition of the constants included in the operator $\bs D$ (see
Sec.~\ref{sec:processing}). When considering the appropriate constants,
we are able to make an equivalence between the physical problem and
its discrete mathematical formulation.

\subsubsection*{Bundle of fibers}

The second measured object is a bundle of 10 fibers of 200 $\mu$m
diameter each. The refractive index difference with respect to the
solution where the fibers are immersed is $\delta \imap = 12.1 \times
10^{-3}$. The experimental data was measured for 60 angular positions
over 360 degrees (\ie 17\% of
measurements). As in the case of the sphere, the noise that has more influence in the measurements is the modeling noise obtained by assuming MSNR = 10 dB \eqref{eq:epsilon_model}, which provides 
$\varepsilon_{\rm model} = 0.093$; while the observation noise
provides $\varepsilon_{\rm obs} = 0.005$. The NFFT interpolation noise
is considerably smaller ($\varepsilon_{\rm nfft} = 1 \times
10^{-15}$). This noise estimation provides MSNR = $9.98$\,dB.

Fig.~\ref{fig:Experimental_Fibers} shows the
reconstruction results obtained using FBP, ME and TV-$\ell_2$
reconstruction methods, for the 300$^{\rm th}$ 2-D slice.
For a threshold of $10^{-5}$, ME converges in 33920 iterations and TV-$\ell_2$ in 4560 iterations.
Compared to FBP and ME reconstructions, TV-$\ell_2$ provides
  a much shaper estimation of the true RIM and the background is
  correctly estimated to 0. However, the image dynamics is not properly recovered. 
Such reconstruction error is present for the fibers because the 
refractive index difference between the material and the optical fluid 
is higher than for the sphere. In this case, the deflection angle is higher 
and it causes the modeling error to increase. The actual light ray trajectory could 
be estimated and inserted in an iterative process as done by Antoine
et al.~\cite{optimess2012}. However, the forward model could not be 
represented in the frequency domain and we would loose its fast
computation. 

\begin{figure}[tbh]
	\centering 
	\includegraphics[height=1.21in]{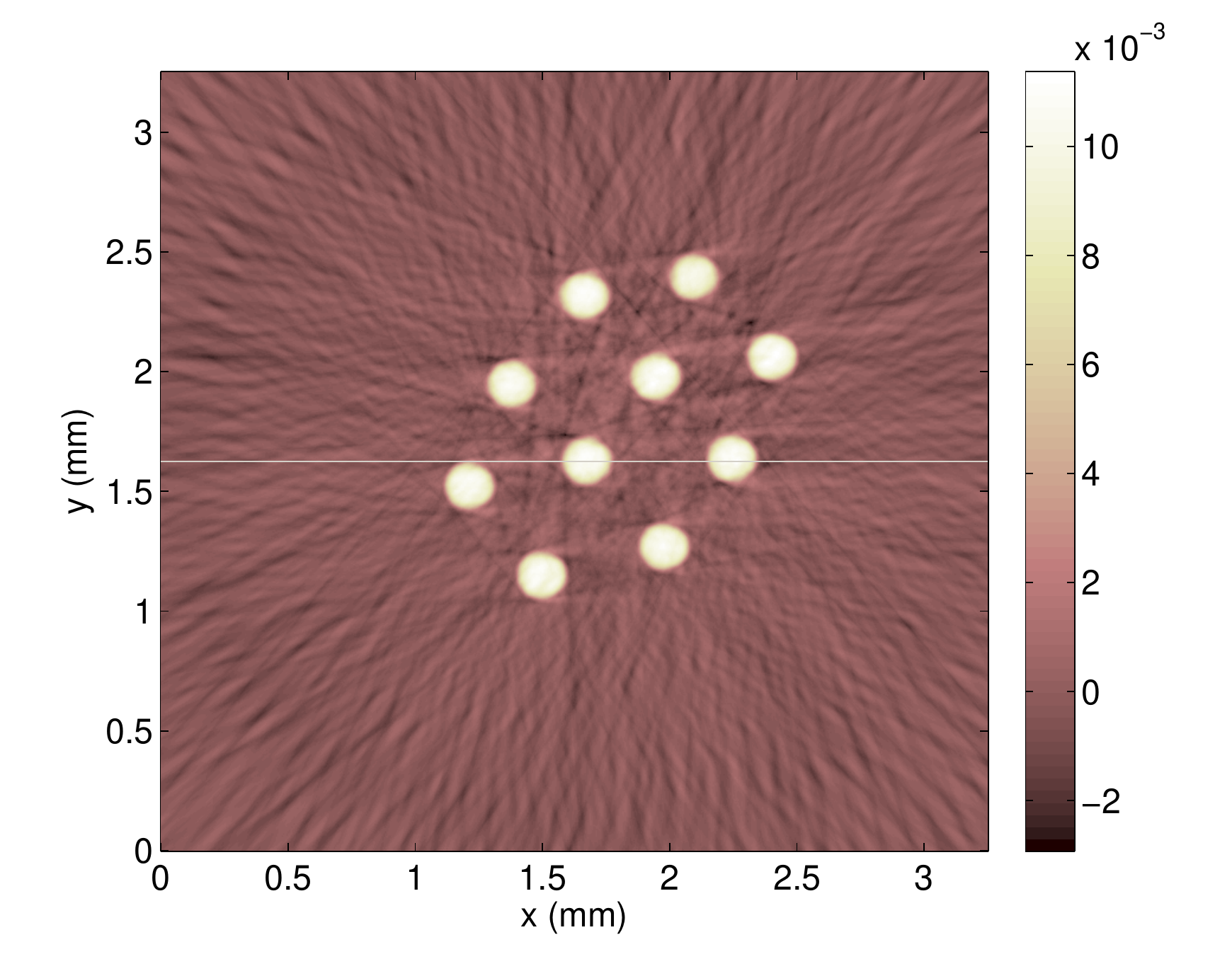} 
	\includegraphics[height=1.21in]{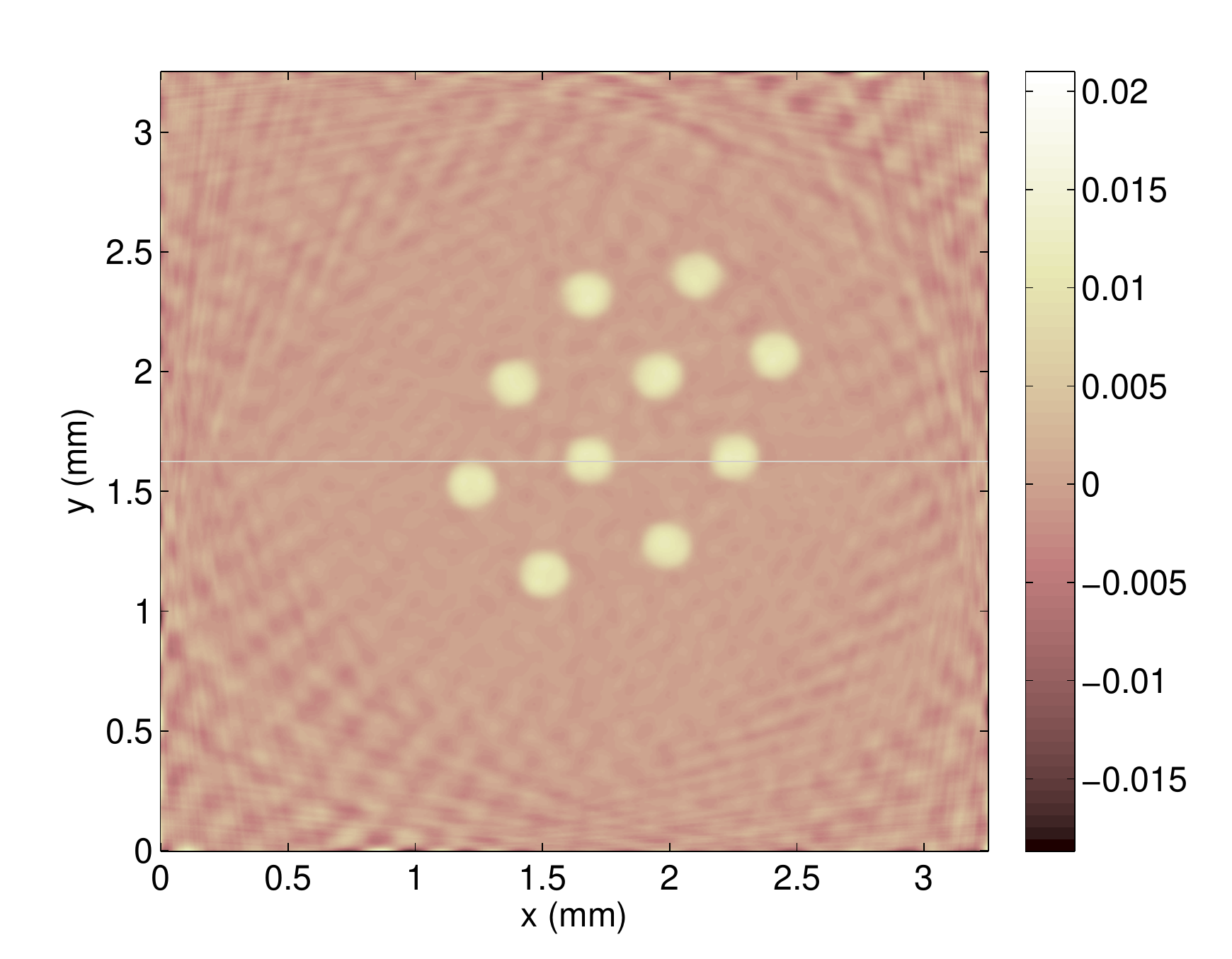}
	\includegraphics[height=1.21in]{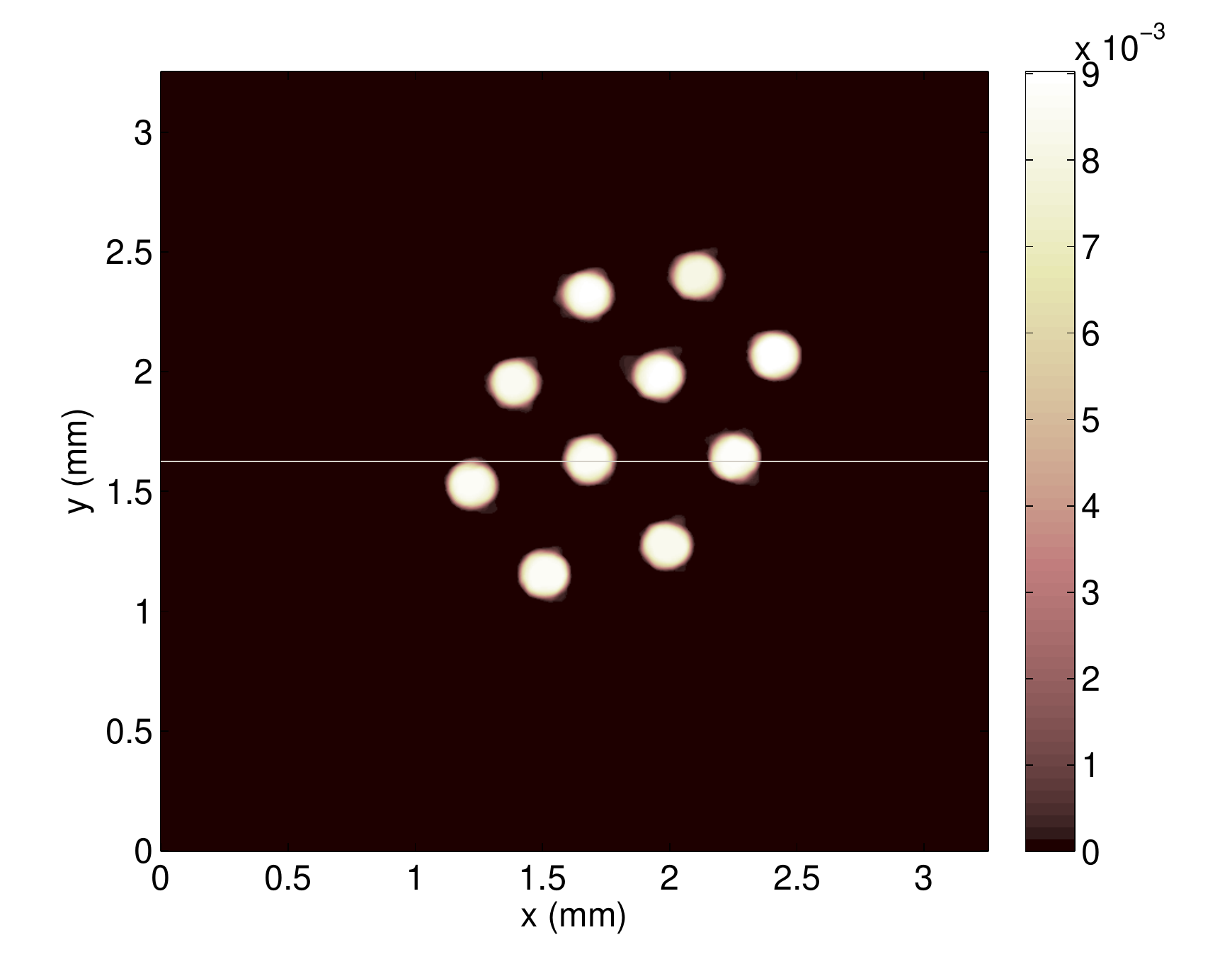}
	\includegraphics[height=1.21in]{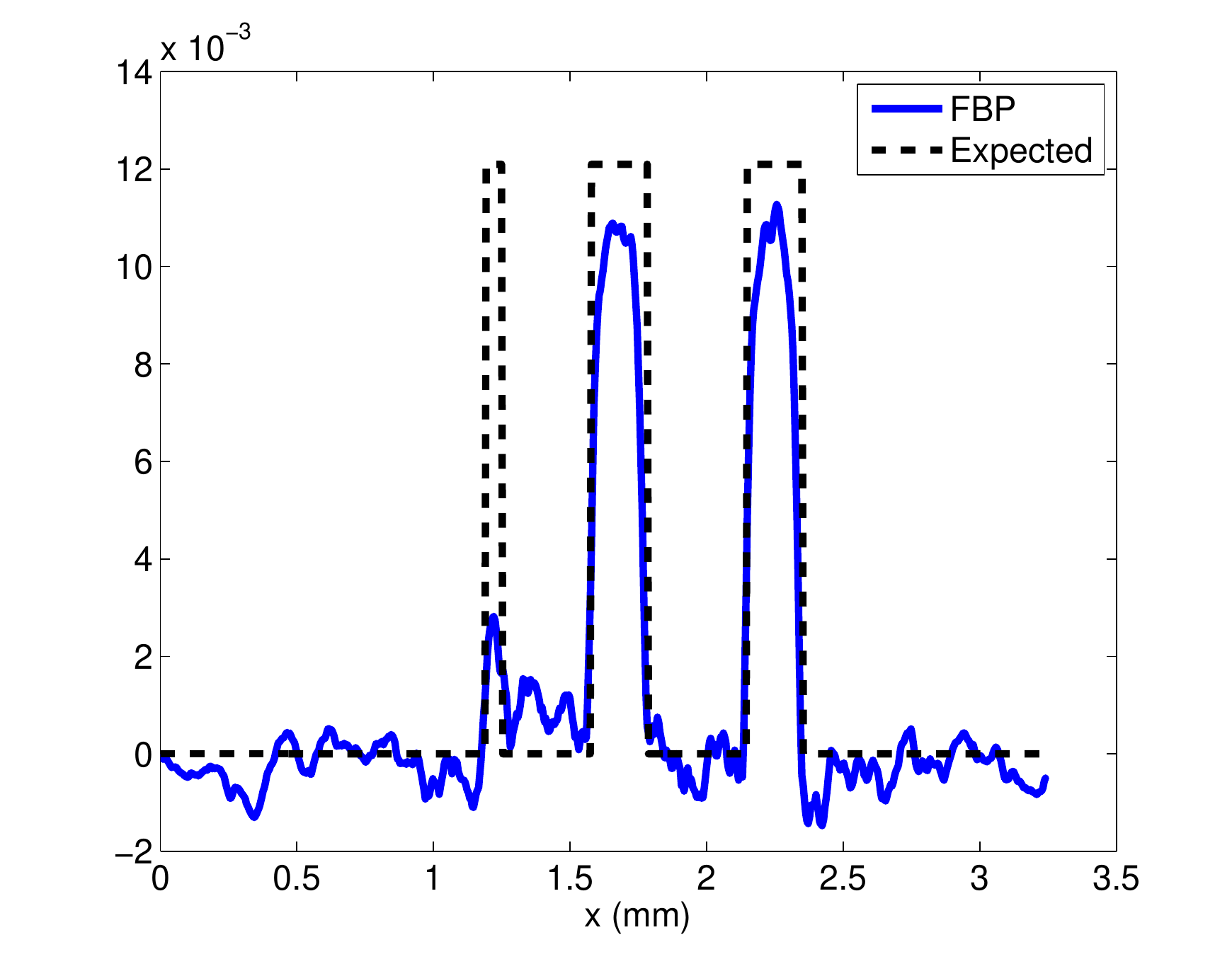} 
	\includegraphics[height=1.21in]{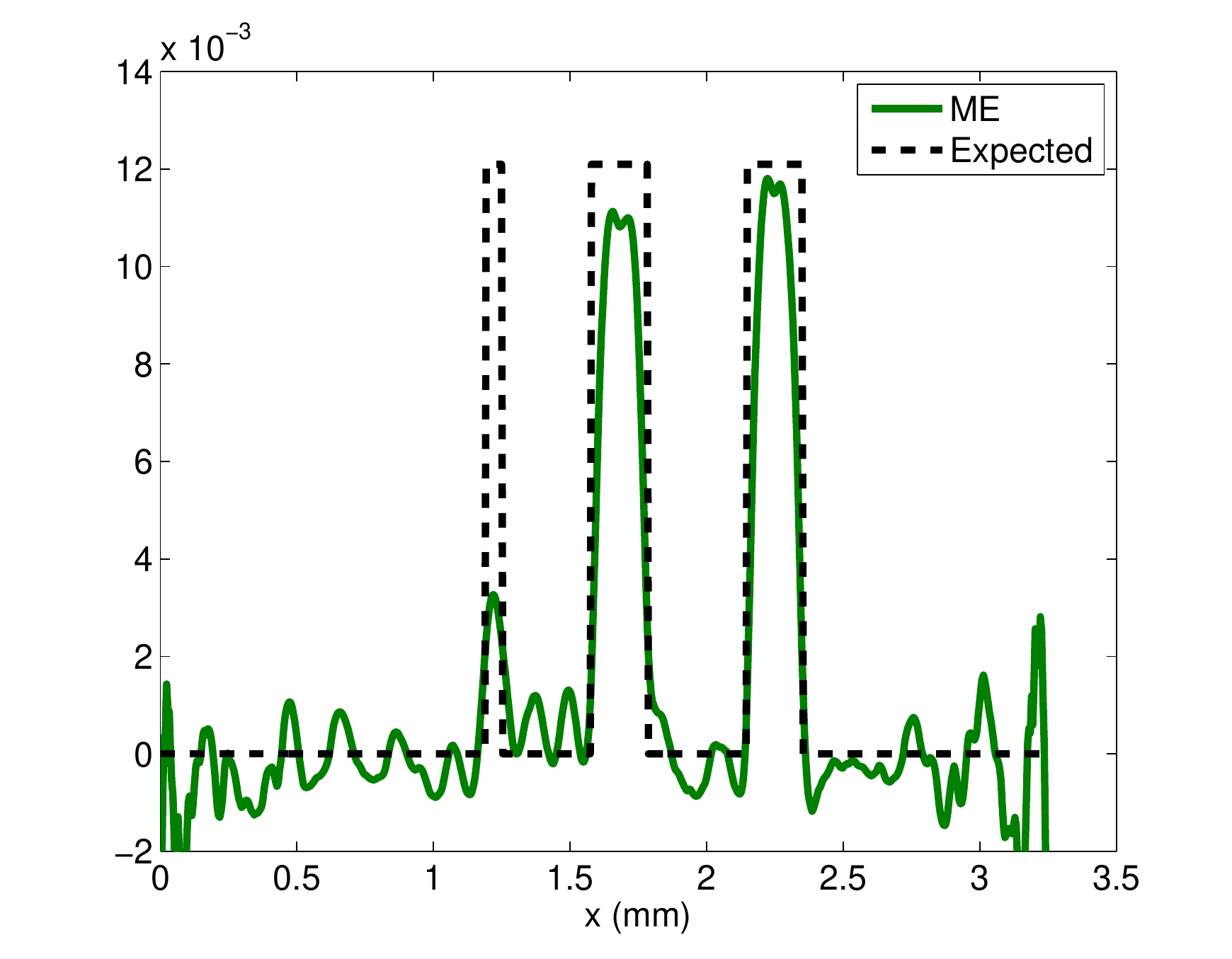}
	\includegraphics[height=1.21in]{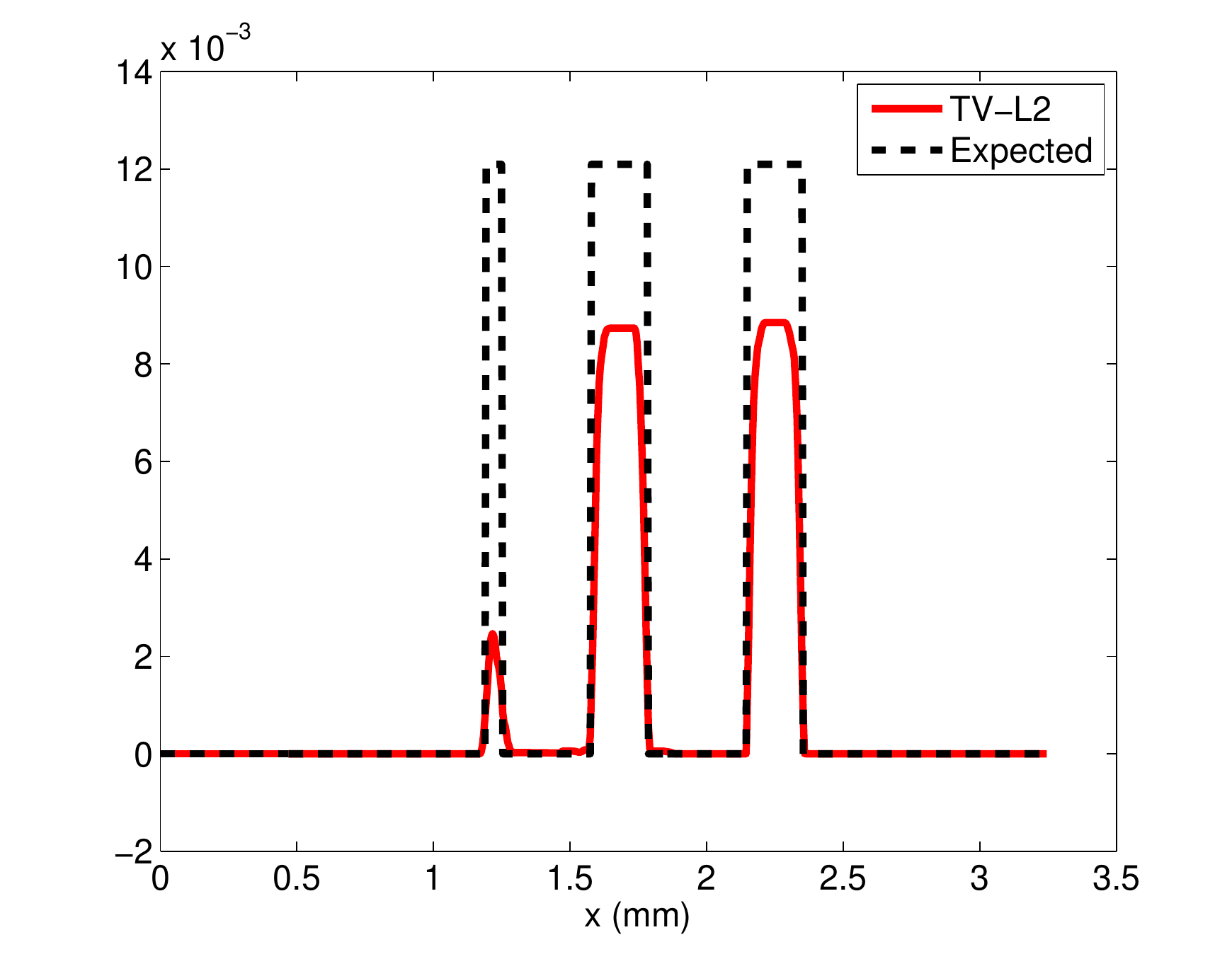} 
	\caption{Reconstructed bundle of 10 fibers for 60 angular
          positions for the 300$^{\rm th}$ 2-D slice. 2-D distribution using (top left) FBP, (top center) ME and (top right) TV-$\ell_2$. 1D profile along y~=~1.625 mm using (bottom left) FBP, (bottom center) ME and (bottom right) TV-$\ell_2$.}
	\label{fig:Experimental_Fibers}
\end{figure}

We can also notice that a section of the bundle of fibers represents a complex image to
reconstruct, since the light enters and comes out of multiple fibers,
and this amplifies the modeling error.

Furthermore, investigating the TV-$\ell_2$ program \eqref{eq:inv-problem-deflecto-solving} 
and assuming that the unknown true RIM is a feasible point of the
constraints, \ie the noise power $\varepsilon$ is correctly bounded, the TV-norm of
the solution is necessarily smaller than the one of the true map.
The norm reduction actually increases with the noise power 
since the optimization has then more freedom to reduce the
TV-norm of the solution.  A direct impact for cartoon shape maps is thus a reduction
of RIM dynamics in the reconstruction compared to the expected
one. Studying if this dynamics loss could be limited 
(\eg by constraining the mean) is a matter of future study.

\section{CONCLUSIONS}

We have demonstrated how regularized reconstruction methods, such as
TV-$\ell_2$, can be used in the framework of Optical Deflectometric
Tomography in order to tackle the lack of deflectometric observations
and to make the ODT imaging process robust to noise. The proposed
constrained optimization problem shows significant improvements in the
reconstructions, compared to the well-known Filtered Back
Projection and Minimum Energy methods. The results confirm that, when dealing with a compressive
setting, the Total Variation regularization and the prior information
constraints (positivity and FoV restriction) help in providing a
unique and accurate estimation of the RIM, promoting sharp edges and
preserving the image dynamics (when the modeling noise is limited). By working with the Chambolle-Pock
algorithm we exploit the advantages of proximal operators and of
primal-dual algorithms, and their flexibility to integrate multiple
constraints. We have also shown that the use of the fast NFFT
algorithm efficiently approximates (with a controlled error) the ODT
sensing model involving the polar NDFT.

Noticeably, there still exist some artifacts in the experimental data
reconstruction, coming from the modeling error. In order to handle
this problem, we could no longer assume a linear light
propagation. The actual curved light trajectories, depending
themselves on the RIM through the eikonal equation, could be
then traced. However, such a model improvement quickly leads to
nonconvex optimization procedures and it breaks the fast
computability of the forward sensing operator through the Fourier
domain. Knowing how to insert this new scheme in our reconstruction
method is a matter of future works.

Following the guidelines given in \cite{goldstein2013}, the
  algorithm convergence is greatly improved by making CP adaptive. 
However, as the CP algorithm has been proved to work slowly when the problem is
badly conditioned~\cite{Becker2012,Preconditioning}, preconditioning our global 
operator $\bs K$ can stabilize it and thus improve convergence results. 
We note also that most of the algorithms are implemented in
Matlab, except for the NFFT algorithm which is compiled in
C++. A faster implementation could be reached for instance by
the adopting parallel (GPU) processing techniques, \ie a framework particularly adapted to proximal optimization (\eg see the UNLocBoX project\footnote{\url{http://unlocbox.sourceforge.net}}).  
Such a numerical improvement will be mandatory for addressing 3-D RIM estimation. In this case, the
whole (high dimensional) optimization driving the RIM reconstruction must to be considered in a
(discretized) three dimensional space and fast forward model estimation will have to be developed.

We finally remark that the Optical Deflectometric framework treated in
this paper can be also applied to other imaging techniques, such as
the X-ray phase contrast tomography~\cite{Pfeiffer2007}. With the use
of the energetic X-ray light, the first order paraxial approximation
involving linear light trajectory is no longer needed and the proposed
methods are expected to provide better results.

\section*{ACKNOWLEDGEMENTS} 

Part of this work is funded by the DETROIT project
(WIST3/SPW, Belgium). LJ and CDV are funded by the F.R.S-FNRS.


\appendix

\section{DEFLECTOMETRIC FOURIER SLICE THEOREM} 
\label{app:DFST-proof}

Using the notations of Sec.~\ref{sec:principles}, we must prove that
\begin{equation*}
y(\omega,\theta)\ =\ \tfrac{2\pi i \omega}{\imap_{\rm r}}\,\widehat{\imap}\big(\omega\,\vec p_{\theta}\big).
\end{equation*}

By definition of $y$ in (\ref{eq:radial-fourier-deflecto}), we have
\begin{align*}
y(\omega,\theta)&=\tinv{\imap_{\rm r}}\int_{\Rbb} \int_{\Rbb^2} \big(\vec\nabla \imap(\vec{r})
\cdot \vec p_{\theta}\big)
\ \delta(\tau - \vec{r}\cdot\vec p_{\theta})\, e^{-2\pi i\, \tau\omega} \,\ud^2\vec r\,\ud
\tau\\
&=\tinv{\imap_{\rm r}}\int_{\Rbb^2} \big(\vec\nabla \imap(\vec{r})
\cdot \vec p_{\theta}\big)
\, e^{-2\pi i\, \vec{r}\cdot (\omega \vec p_{\theta})} \,\ud^2\vec r. 
\end{align*}
However, for any function $f:\Rbb\to\Cbb$ integrable on $\Rbb$, 
$$
\widehat{\tfrac{\ud f}{\ud t}}(\omega) = \int_{\Rbb} \tfrac{\ud}{\ud
t}f(t)\, e^{-2\pi i\,t\omega}\ \ud t = (2\pi i)\,\omega\,\widehat f(\omega).
$$
Therefore, we compute easily for any $\bs a,\bs \xi\in\Rbb^2$, 
$$
\int_{\Rbb^2} \big(\vec\nabla \imap(\vec{r})
\cdot \vec a\big)
\, e^{-2\pi i\, \vec{r}\cdot \vec \xi} \,\ud^2\vec r = (2\pi i)\,(\bs \xi
\cdot \bs a)\,\widehat{\imap}(\bs \xi).
$$
Setting $\bs a=\bs p_\theta$ and $\bs \xi = \omega \bs p_\theta$, we
find finally
$$
y(\omega,\theta) = \tinv{\imap_{\rm r}}\int_{\Rbb^2} \big(\vec\nabla \imap(\vec{r})
\cdot \vec p_{\theta}\big)
\, e^{-2\pi i\, \vec{r}\cdot (\omega \vec p_{\theta})} \,\ud^2\vec r =
\tfrac{2\pi i \omega}{\imap_{\rm r}}\,\widehat \imap(\omega \bs p_\theta). 
$$

\section{NON-EQUISPACED FOURIER TRANSFORM} 
\label{app:NFFT}

The non-equispaced Fourier Transform (NFFT) allows us to compute
rapidly, \linebreak \ie in $\cl O(N \log N)$, the NDFT defined in
(\ref{eq:2DNFFT}) of a function defined on $\cl C_N$.  This
computation is performed with a controllable error which can be
further reduced by increasing the computational time.

In a nutshell, the NFFT algorithm replaces the equivalent matrix
multiplication $\bs{\widehat f}=\bs F \bs f$ of (\ref{eq:2DNFFT-matrix})
by $\bs{\widehat s}=\widetilde{\bs F} \bs f$, where $\tilde{\bs F}$ has
fast matrix-vector multiplication computation.  In this scheme, the
discrepancy between $\bs{\widehat f}$ and $\bs{\widehat s}$, as measured by
$E_\infty(\bs f):=\|\bs{\widehat f}- \bs{\widehat s}\|_\infty$, is controlled
and kept small.

More precisely, the matrix $\tilde {\bs F}$ starts by embedding
$\Rbb^N$ in a bigger regular space $\Rbb^n$ of size $n=n_0^2$, with
$n_0> N_0$. This is obtained from the multiplication of
$\bs f$ with a matrix $\bs W \in \Rbb^{n \times N}$ that performs a
weighting of $\bs f$ by a vector $\bs w \in \Rbb_+^N$ (component wise) followed by
a symmetric zero-padding on each side of the function domain $\cl
C_N$. Once in $\Rbb^n$, the common DFT matrix $\bs F_n$ is applied. It
can be computed with the FFT algorithm in order to obtained an
oversampled Fourier transform of $\bs f \in \Rbb^N$.  Finally, a
sparse matrix $\bs V \in \Rbb^{M\times n}$ multiplies the output of
$\bs F_n$ in order to end in the $M$ dimensional space $\widehat{\cl
  P}$. Each row of $\bs V$ corresponds to the translation in the 2-D
Fourier domain of a compact and separable 2-D filter $\psi$ on one
specific point of the non-regular grid $\widehat{\cl P}_M$.  As a final
result, the matrix $\tilde{\bs F}$ is thus factorized in
\cite{KeinerNFFT}
$ 
\tilde{\bs F} = \bs V \bs F_n \bs W.
$

\if 0
: the function $\bs f$ is
first weighted by a separable 2-D window $\bs w = \bs w_1 \otimes \bs
w_2 \in \Rbb^N_+$, where $\bs w_t\in \Rbb^{N_t}$ stands for a 1-D
window in the $t^{\rm th}$ direction ($t\in\{1,2\}$), before to be
zero-padded on its sides. This operation is compactly described by the
matrix $\bs D = \bs D_1 \otimes \bs D_2 \in \Rbb^{n \times N}$, with
$\bs D_t= (\bs O_t | \diag(\bs w_t) | \bs O_t)^T\in\Rbb^{n_t\times
  N_t}$, $t\in \{1,2\}$, and $\bs O_t$ a $N_t\times((n_t-N_t)/2)$ zero
matrix (assuming even $n_t$ and $N_t$). For returning to $\Rbb^M$, a
sparse matrix $\bs V \in \Rbb^{M\times n}$ multiplies the output of
$\bs F_n$. Each row of $\bs V$ corresponds to the translation of a
separable 2-D filter $\psi$ on one specific point of the non-regular
grid $\widehat{\cl P}$. In the end, $\tilde{\bs F}$ is factorized into the
matrix product \cite{KeinerNFFT}
$$
\tilde{\bs F} = \bs V \bs F_n \bs D.
$$
\fi 

Without entering into unnecessary technicalities, the NFFT scheme is
characterized by a precise connection between the function $\psi$
defining $\bs V$ and the weighting performed by $\bs W$. In
particular, each component of $\bs w$ is actually set to the inverse
of the Fourier transform of a filter $\varphi$, while $\psi$ is a
periodization of (a truncation of) the same filter.

There exist several choices of windows $\varphi$/$\psi$ associated to
different numerical properties (\eg localized support in frequency and
time, simple precomputations of the windows, ...). We select here the
\emph{translates of Gaussian bells} \cite{steidl1998note}, involving a
Gaussian behavior for $\widehat{\varphi}(k)$, which provides fast error
decay for $E_\infty(\bs f)$.

In particular, denoting by $\kappa = n/N \geq 1$ the oversampling
factor and using the FFT for matrix-vector multiplications involving
$\bs F_n$, the total complexity $\cl T$ of the multiplications of
$\tilde{\bs F}$ or $\tilde{\bs F}^\star$ with vectors is
$$
\cl T(\tilde{\bs F}) = \cl O(n \log n + N
(\tfrac{2\kappa -1}{2\kappa -2}) \log 1/\epsilon),
$$
if we impose $E_\infty(\bs f) \leq 4 \|\bs f\|_1\,\epsilon$. For a
fixed $\kappa > 1$, this reduces to $\cl T(\tilde{\bs F}) = \cl O(N \log
N/\epsilon)$, which is far less than a direct computation of the DFT
in $O(MN)$ computation even for small value of $\epsilon$. 

\section{CONVEX CONJUGATE FUNCTIONS}
\label{app:dual-functions}

\subsection{Convex conjugate of the function $G$.}
\label{sec:conv-conj-funct}

Given a vector $\bs x' = (\bs x_1^T, \bs x_2^T)^T\in\Rbb^{2N}$, we can define the
function $G(\bs x') = H(\bs x_1) + \imath_{\Pi_{1,2}} (\bs x_1, \bs x_2)$, with the bisector plane $\Pi_{1,2} = \{ \bs x' : \bs x_1 = \bs x_2 \}$.  Therefore, for a vector $\bs u = (\bs u_1^T, \bs u_2^T)^T\in\Rbb^{2N}$, the dual function $G^\star(\bs x')$ can be computed as follows:
\begin{align*}
  G^\star(\bs x') & = \max_{\bs u} \ \langle \bs u, \bs x' \rangle - G(\bs u) \
   = \max_{\bs u:\ \bs u_1 = \bs u_2} \ \langle \bs u_1, \bs x_1 + \bs x_2 \rangle - H(\bs u_1) \\ \vspace{2mm}
	& = \begin{pmatrix} H^\star(\bs x_1 + \bs x_2)\\H^\star(\bs x_1 + \bs x_2)
  \end{pmatrix} \ = \begin{pmatrix}\Id^N\\\Id^N
  \end{pmatrix} H^\star(\bs x_1 + \bs x_2). 
\end{align*}

\subsection{Convex Conjugate of the indicator function $\imath_{\cl C}$.}
\label{sec:conv-conj-indic}

Given the indicator function $\imath_{\cl C}(\bs u)$ of the convex set
$\cl C = \{ \bs v\in\Rbb^M: \| \bs v - \bs y \| \leq \varepsilon \}$, its dual
function can be computed via the Legendre transform as follows:
\begin{align*}
  \imath_{\cl C}^\star(\bs v)&= \max_{\bs u} \ \scp{\bs v}{\bs u} -
  \imath_{\cl C}(\bs u)\ 
  = \scp{\bs v}{\bs y} + \max_{\bs u : \| \bs u - \bs y \| \leq \varepsilon} \ \scp{\bs v}{\bs u - \bs y}\\
  &= \scp{\bs v}{\bs y} + \max_{\bs b : \| \bs b \| \leq \varepsilon} \ \scp{\bs v}{\bs b}.
\end{align*}
The value of $\{\bs b : \| \bs b \| \leq \varepsilon\}$ that maximizes
the last expression is $\bs b = \frac{\bs v}{\| \bs v \|}
\varepsilon$, and we get $\imath_{\cl C}^\star(\bs v) = \scp{\bs v}{\bs y} +
\varepsilon \| \bs v \|$.

\section{DETAILS ON THE PRODUCT SPACE OPTIMIZATION}
\label{app:Details_ProductSpace}

\subsection{Proximal operator of the function $G$}
\label{app:proxg}

For every $\bs x' = (\bs x_1, \cdots, \bs x_p)\in\Rbb^{pN}$, the function $G(\bs x')$ is defined as: 
$$ G(\bs x') = \sum_{j=2}^{p} \imath_{\Pi_{1,j}} (\bs x') + H(\bs x_1), $$  
where, for $j\in [p]$, $\Pi_{1,j}$ denotes the bisector plane
$\Pi_{1,j} = \{ \bs x' \in\Rbb^{pN}: \bs x_1 = \bs x_j\}$.

Then, for $\bs \zeta = (\bs \zeta_1, \cdots, \bs \zeta_p)\in\Rbb^{pN}$, the proximal operator of $G$ reduces to 
\begin{align*}
  \tilde{\bs x}' = \prox_{\mu G} \bs \zeta &= \argmin_{\bs x':\ \bs x_1 =
  \cdots = \bs x_p}\ \mu H(\bs x_1)\, +\, \tfrac{1}{2}\| \bs \zeta
  - \bs x'\|^2.
\end{align*}
As all the $\bs x_j$ are equal, we necessarily have $ \tilde{\bs x}' =
(\bs \Id^N, \cdots, \bs \Id^N)^T \tilde{\bs u}$ with $\tilde{\bs
u}\in\Rbb^N$ defined by 
\begin{align*}
\tilde{\bs u}& = \argmin_{\bs u\in\Rbb^{N}} \mu H(\bs u) + \tfrac{1}{2} \sum_j \| \bs \zeta_j - \bs u \|^2 \\
&= \argmin_{\bs u\in\Rbb^{N}} \mu H(\bs u) + \tfrac{1}{2} \big [ \| \bs \zeta_1 \|^2
+ \cdots + \| \bs \zeta_p \|^2 - 2 p\,\bar{\bs \zeta}^T\bs u + p \| \bs u \|^2 \big ] \\
&= \argmin_{\bs u\in\Rbb^{N}} \mu H(\bs u) + \tfrac{1}{2} \big [ p\|
\bar{\bs \zeta}\|^2 - 2 p\,\bar{\bs \zeta}^T\bs u + p \| \bs u \|^2 \big ]\\
&= \prox_{(\mu/p) H} \bar{\bs \zeta}.
\end{align*}
with $\bar{\bs \zeta} = \tinv{p} \sum_j \bs \zeta_j \in \Rbb^N$ and
where we used the fact that we can always subtract or add constants in
the minimization without disturbing its solution.  Denoting by
$\Id_{p}^N = (\Id^N, \cdots, \Id^N)\in\Rbb^{N\times pN}$ the operator
such that $\bs \Id_{p}^N \bs \zeta = p \bar{\bs \zeta}$, this provides
finally the compact notation
$$ \prox_{\mu G} \bs \zeta  = (\bs \Id_{p}^N)^T\, \prox_{\frac{\mu}{p} H}\, \tinv{p} \bs \Id_{p}^N \bs \zeta. $$

\subsection{Formulation of Chambolle-Pock algorithm in OD}
\label{app:FinalFormulation}

We take the CP algorithm as described in Eq.~\eqref{eq:CP_Algorithm}
and we translate it into our OD problem in the product space, to
obtain the following:
$$
\begin{cases}
  \bs s^{(k+1)}\!\!\!\!&= \begin{pmatrix}
    \bs s_1^{{(k+1)}} \\
    \bs s_2^{{(k+1)}}
      \end{pmatrix}
      = \begin{pmatrix}
        \prox_{\nu F_1^\star} \big(\bs s_1^{{(k)}} + \nu \bs \nabla \bar{\bs x}_1^{{(k)}} \big) \\
        \prox_{\nu F_2^\star} \big(\bs s_2^{{(k)}} + \nu \bs \Phi \bar{\bs x}_2^{{(k)}} \big)
      \end{pmatrix},
      \\[4mm]
      \bs x'^{(k+1)}\!\!\!\!&= \begin{pmatrix}
        \bs x_1^{{(k+1)}} \\
        \bs x_2^{{(k+1)}}
      \end{pmatrix}
      = \begin{pmatrix}
        \bs\Id^N \\
        \bs\Id^N
      \end{pmatrix}
      \prox_{\tfrac{\mu}{2} H} \tinv{2}(\bs x_1^{{(k)}} - \mu \bs \nabla^* \bs s_1^{{(k+1)}} + \bs x_2^{{(k)}} - \mu \bs \Phi^* \bs s_2^{{(k+1)}}),\\[4mm]
      \bar{\bs x}'^{(k+1)}\!\!\!\!&= \begin{pmatrix}
        \bar{\bs x}_1^{{(k+1)}} \\
        \bar{\bs x}_2^{{(k+1)}}
      \end{pmatrix}
      = 2 \begin{pmatrix}
        \bs x_1^{{(k+1)}} \\
        \bs x_2^{{(k+1)}}
      \end{pmatrix}
      -  \begin{pmatrix}
        \bs x_1^{{(k)}} \\
        \bs x_2^{{(k)}}
      \end{pmatrix}.
    \end{cases}
$$
We have the function $H(\bs x_1) = \imath_{\cl P_0} (\bs x_1)$ and its proximal operator $\prox_{\frac{\mu}{2} H} \bs \zeta = \textrm{proj}_{\cl P_0} \bs \zeta$. As $\bs x_1 = \bs x_2$, we can relabel the variable as $\bs x^{(k)} = \bs x_1^{{(k)}} = \bs x_2^{{(k)}}$ and $\bs x^{(k+1)} = \bs x_1^{{(k+1)}} = \bs x_2^{{(k+1)}}$. In the same way, $\bar{\bs x}_1 = \bar{\bs x}_2$, thus we can relabel the variable as $\bar{\bs x}^{(k)} = \bar{\bs x}_1^{{(k)}} = \bar{\bs x}_2^{{(k)}}$ and $\bar{\bs x}^{(k+1)} = \bar{\bs x}_1^{{(k+1)}} = \bar{\bs x}_2^{{(k+1)}}$. We obtain the
following algorithm:

$$
\begin{cases}
  \bs s_1^{{(k+1)}}&= \prox_{\nu F_1^\star} (\bs s_1^{{(k)}} + \nu \bs \nabla \bar{\bs x}^{(k)} ),\\[2mm]
  \bs s_2^{{(k+1)}}&= \prox_{\nu F_2^\star} (\bs s_2^{{(k)}} + \nu \bs \Phi \bar{\bs x}^{(k)} ),\\[2mm]
  \bs x^{(k+1)}&= \textrm{proj}_{\cl P_0} \big(\bs x^{(k)} - \frac{\mu}{2} (\bs \nabla^* \bs s_1^{{(k+1)}} + \bs \Phi^* \bs s_2^{{(k+1)}}) \big),\\[2mm]
  \bar{\bs x}^{(k+1)}&= 2 \bs x^{(k+1)} - \bs x^{(k)}.
\end{cases}
$$

\end{document}